\documentclass{article} 
\usepackage[margin=1.05in]{geometry}
\title{Generalization and Stability of Interpolating Neural Networks\\ with Minimal Width
}
\usepackage{amsmath,amssymb,amsfonts}
\usepackage{amsthm}
\usepackage{url} 

\newtheorem{theorem}{Theorem}
\newtheorem{corollary}{Corollary}[theorem]
\newtheorem{lemma}[theorem]{Lemma}
\newtheorem{proposition}[theorem]{Proposition}
\newtheorem{remark}{Remark}
\usepackage{helvet}  
\usepackage{courier}  
\usepackage{graphicx}
\usepackage{textcomp}
\usepackage{xcolor}
\usepackage{setspace}
\usepackage{algpseudocode}
\definecolor{darkred}{RGB}{230,0,0}
\definecolor{darkgreen}{RGB}{0,150,0}
\definecolor{darkblue}{RGB}{0,0,150}

\usepackage{bm} 
\usepackage{color,soul}

\usepackage{microtype}
\usepackage[utf8]{inputenc} 
\usepackage[T1]{fontenc}    
\usepackage{url}            
\usepackage{nicefrac}       
\usepackage{microtype,mathrsfs}
\usepackage{graphicx}
\usepackage{subfigure}
\usepackage{booktabs} 
\usepackage{float}
\usepackage{amsfonts}       
\usepackage{caption}
\usepackage{mathtools,amssymb,comment}
\usepackage{bbm}
\usepackage{breqn} 
\usepackage{enumitem}
\usepackage{tcolorbox}

\usepackage{hyperref}
\usepackage{breakcites}

\definecolor{darkred}{RGB}{150,0,0}
\definecolor{darkgreen}{RGB}{0,120,0}
\definecolor{darkblue}{RGB}{0,0,150}

\hypersetup{ breaklinks=true, 
colorlinks=true, linkcolor=darkred, citecolor=darkgreen, urlcolor=darkblue}

\newcommand{\sbwc}{self-bounded weak convexity }

\usepackage{mathtools}

\DeclarePairedDelimiterX{\inp}[2]{\langle}{\rangle}{#1, #2}

\newcommand{\wpr}{w^\prime}
\newcommand{\wa}{w_\alpha}

\newcommand{\Fh}{\widehat F}

\newcommand{\fp}{f^\prime}
\newcommand{\fpp}{f^{\prime\prime}}
\newcommand{\Lf}{L_f}
\newcommand{\Gf}{G_f}
\newcommand{\lamin}[1]{\lambda_{\min}\left(#1\right)}
\newcommand{\dpr}{d^{\prime}}

\newcommand{\regret}{\texttt{Reg}}
\newcommand{\loo}{\texttt{loo}}

\newcommand{\negi}{{\neg i}}

\newcommand{\GLQC}{generalized-local quasi-convexity~}

\newcommand{\iid}{IID~}

\newcommand{\Phixi}{\Phi(w,x_i)}

\newcommand{\DPhix}{\nabla_1\Phi(w,x)}
\newcommand{\DPhixi}{\nabla_1\Phi(w,x_i)}
\newcommand{\DDPhix}{\nabla^2_1\Phi(w,x)}
\newcommand{\DDPhixi}{\nabla^2_1\Phi(w,x_i)}

\makeatletter
\newcommand*{\rom}[1]{\expandafter\@slowromancap\romannumeral #1@}
\makeatother


\newcommand{\mathleft}{\@fleqntrue\@mathmargin0pt}
\newcommand{\mathcenter}{\@fleqnfalse}

\makeatletter
\newcommand{\ssymbol}[1]{^{\@fnsymbol{#1}}}
\makeatother


\newcommand{\R}{\mathbb{R}}


\newtheorem{defn}{Definition}



\providecommand{\abs}[1]{\lvert#1\rvert}




\newcommand{\hf}{\widehat{F}}


%






\newtheorem{ass}{Assumption}




\newcommand{\eps}{\varepsilon}

\newcommand{\one}{\mathbf{1}}

\newcommand{\E}{\mathbb{E}}                    

\newcommand{\nn}{\notag}






\newcommand{\Dc}{\mathcal{D}}

\newcommand{\toc}{\tilde{O}}




\newcommand{\beq}{\begin{equation}}
\newcommand{\eeq}{\end{equation}}
\newcommand{\bea}{\begin{align}}
\newcommand{\eea}{\end{align}}
\newcommand{\beas}{\begin{align*}}
\newcommand{\eeas}{\end{align*}}


   \newcommand{\footremember}[2]{%
    \footnote{#2}
    \newcounter{#1}
    \setcounter{#1}{\value{footnote}}%
}

\def\bea#1\eea{\begin{align}#1\end{align}}

\tolerance=1
\emergencystretch=\maxdimen
\hyphenpenalty=10000
\hbadness=10000



\usepackage{cite}

\author{Hossein Taheri\footremember{ht}{Department of Electrical and Computer Engineering, University of California, Santa Barbara. Email: \href{mailto:hossein@ucsb.edu}{hossein@ucsb.edu}} \;\;and\;\; Christos Thrampoulidis\footremember{ct}{Department of Electrical and Computer Engineering, University of British Columbia. Email: \href{mailto:cthrampo@ece.ubc.ca}{cthrampo@ece.ubc.ca}}}

\begin{document}
\maketitle
\begin{abstract}%
We investigate the generalization and optimization properties of shallow neural-network classifiers trained by gradient descent in the interpolating regime. Specifically, in a realizable scenario where model weights can achieve arbitrarily small training error $\epsilon$ and their distance from initialization is $g(\epsilon)$,
we demonstrate that gradient descent with $n$ training data  achieves training error $O(g(1/T)^2\big/T)$ and generalization error $O(g(1/T)^2\big/n)$ at iteration $T$, provided there are at least $m=\Omega(g(1/T)^4)$ hidden neurons. We then show that our realizable setting encompasses a special case where data are separable by the model's neural tangent kernel. For this and logistic-loss minimization, we prove  the training loss decays at a rate of $\tilde O(1/ T)$ given polylogarithmic number of neurons $m=\Omega(\log^4 (T))$. Moreover, with $m=\Omega(\log^{4} (n))$ neurons and $T\approx n$ iterations, we bound the test loss by $\tilde{O}(1/ n)$. Our results differ from existing generalization outcomes using the algorithmic-stability framework, which necessitate polynomial width and yield suboptimal generalization rates. Central to our analysis is the use of a new self-bounded weak-convexity property, which leads to a generalized local quasi-convexity property for sufficiently parameterized neural-network classifiers. Eventually, despite the objective's non-convexity, this leads to convergence and generalization-gap bounds that resemble those found in the convex setting of linear logistic regression.

\end{abstract}


\section{Introduction}
Neural networks have remarkable expressive capabilities and can memorize a complete dataset even with mild overparameterization. In practice, using gradient descent (GD) on neural networks with logistic or cross-entropy loss can result in the objective reaching zero training error and close to zero training loss. Zero training error, often referred to as ``interpolating'' the data, indicates perfect classification of the dataset. Despite their strong memorization ability, these networks also exhibit remarkable generalization capabilities to new data. This has motivated  a surge of studies in recent years exploring the optimization and generalization properties of first-order gradient methods in overparameterized neural networks, with a specific focus in the so-called Neural Tangent Kernel (NTK) regime. In the NTK regime, the model operates as the first-order approximation of the network at a sufficiently large initialization or at the large-width limit \cite{jacot2018neural,chizat2019lazy}. Prior works on this topic mostly focused on quadratic-loss minimization and their optimization/generalization guarantees required network widths that increased polynomially with the sample size $n$. This, however, is not in line with practical experience. Improved results were obtained more recently by \cite{Ji2020Polylogarithmic,chen2020much} who have investigated the optimization and generalization of ReLU neural networks with logistic loss, which is more suitable for classification tasks. Assuming that the NTK with respect to the model can interpolate the data (i.e. separate them with positive margin $\gamma$), they showed through a Rademacher complexity analysis that GD on neural networks with polylogarithmic width can achieve generalization guarantees that decrease with the sample size $n$ at a rate of $\tilde{O}(\frac{1}{\sqrt{n}})$.

\par
In this paper, we provide rate-optimal optimization and generalization analyses of GD for shallow neural networks of minimal width assuming that the model itself can interpolate the data. We focus on two-layer networks with smooth activations that can almost surely separate $n$ training samples from the data distribution. Concretely, we consider a realizability condition where data and initialization are such that model weights can achieve arbitrarily small training error $\eps$ while their distance from initialization is $g(\eps)$ for some function $g:\R_+\rightarrow\R_+$. Under this condition, we demonstrate generalization guarantees of order $O(\frac{g(\frac{1}{T})^2}{n}).$
More generally, for any iteration $T$ of GD and assuming network width $m=\Omega(g(\frac{1}{T})^4)$, we obtain an expected test-loss rate $O(\frac{g(\frac{1}{T})^2}{T} + \frac{g(\frac{1}{T})^2}{n})$. Additional to the generalization bounds, we provide optimization guarantees under the same setting by showing that the training loss approaches zero at rate $O(\frac{g(\frac{1}{T})^2}{T})$. We note that these results are derived without NTK-type analyses. For demonstration and also for connection to prior works on neural-tangent data models, we specialize our generalization and optimization results to the class of NTK-separable data. We show this is possible because the NTK-data separability assumption implies our realizability condition holds. Thus,  for logistic-loss minimization on NTK-separable data, we show that the expected test loss of GD is $\tilde O(\frac{1}{T}+ \frac{1}{ n})$ provided polylogarithmic number of neurons  $m=\Omega(\log^4 (T))$. This further suggests that a network of width $m=\Omega(\log^4 (n))$, attains expected test loss $\tilde O(\frac{1}{n})$ after $T\approx n$ iterations.
\par
In contrast to prior optimization and generalization analyses that often depend on the NTK framework, which requires the first-order approximation of the model, we build on the algorithmic stability approach \cite{bousquet2002stability} for shallow neural-network models of finite width. Although the stability analysis has been utilized in previous studies to derive generalization bounds for (stochastic) gradient descent in various models, most results that are rate-optimal heavily rely on the convexity assumption. Specifically, the stability-analysis framework has been successful in achieving optimal generalization bounds for convex objectives in \cite{lei2020fine,bassily2020stability,pmlr-v178-schliserman22a}. On the other hand, previous studies on non-convex objectives either resulted in suboptimal bounds or relied on assumptions that are not in line with the actual practices of neural network training. For instance, \cite{hardt2016train} derived a generalization bound of $O(\frac{T^{{\beta c}/(\beta c+1)}}{n})$ for general $\beta$-smooth and non-convex objectives, but this required a time-decaying step-size $\eta_t\le c/t$, which can degrade the training performance. More recently, \cite{richards2021learning} explored the use of the stability approach specifically for logistic-loss minimization of a two-layer network. By refining the model-stability analysis framework introduced by \cite{lei2020fine}, they derived generalization-error bounds provided the hidden width increases polynomially with  the sample size. In comparison, our analysis leads to improved generalization and optimization rates and under standard separability conditions such as NTK-separability, only requires a polylogarithmic width for both global convergence and generalization.


\subsection*{Notation}
We denote $[n]:=\{1,2,\cdots,n\}$. We use the standard notation $O(\cdot
),\Omega(\cdot
)$ and use $\tilde{O}(\cdot
),\tilde\Omega(\cdot
)$ to hide polylogarithmic factors. Occasionally we use $\lesssim$ to hide numerical constants. The Gradient and Hessian of a function $\Phi:\R^{d_1\times d_2}\rightarrow \R$ with respect to the $i$th input ($i=1,2$) are denoted by $\nabla_i \Phi$ and $\nabla_i^2 \Phi$, respectively. All logarithms are in base $e.$ We use $\|\cdot\|$ for the $\ell_2$ norm of vectors and the operator norm of matrices. We denote $[w_1,w_2]:=\{w\,:\,w=\alpha w_1+(1-\alpha) w_2, \alpha\in[0,1]\}$ the line segment between $w_1,w_2\in\R^{d'}$.
\section{Problem Setup}\label{sec:problem setup}
Given $n$ i.i.d. samples $(x_i,y_i)\sim \mathcal{D}, i\in[n]$ from data distribution $\Dc$, we study unconstrained empirical risk minimization
with objective $\widehat F:\R^{d^\prime}\rightarrow\R$:
\bea\label{eq:logreg}
\min_{w\in\R^{d^\prime}}\Big\{ \widehat F(w) :=\frac{1}{n}\sum_{i=1}^{n} \hf_i(w)= \frac{1}{n}\sum_{i=1}^{n} f\left(y_i \Phi\left(w,x_i\right)\right)\Big\}.
\eea
 This serves as a proxy for minimizing 
the \emph{test loss}  $F:\R^{d^\prime}\rightarrow\R$:
\bea
F(w):= \E_{(x,y)\sim\mathcal{D}}\left[ f\left(y\Phi(w,x)\right) \right].
\eea
We introduce our assumptions on the data $(x,y)$, the model $\Phi(\cdot,x)$, and the loss function $f(\cdot)$, below. We start by imposing the following mild assumption on the data distribution.

\begin{ass}[Bounded features]\label{ass:features}
Assume any $(x,y)\sim\Dc$ has almost surely bounded features, i.e. $\|x\|\leq R$, and binary label $y\in\{\pm 1\}.$
\end{ass}

The model $\Phi:\R^{d^\prime} \times \R^d\rightarrow\R$ is parameterized by trainable weights $w\in\R^{d'}$ and takes input $x\in\R^d$. For our main results, we assume $\Phi$ is a one-hidden layer neural-net of $m$ neurons, i.e.
\bea\label{eq:model problem setup}
\Phi(w,x):= \frac{1}{\sqrt{m}}\sum_{j=1}^m a_j \,\sigma(\left\langle w_j,x\right\rangle), 
\eea
where $\sigma:\R\rightarrow\R$ is the activation function, $w_j\in\R^d$ denotes the weight vector of the $j$th hidden neuron and $\frac{a_j}{\sqrt{m}}, j\in[m]$ are the second-layer weights. For the second layer weights, we assume that they are fixed during training taking values $a_j\in\{\pm 1\}$. We assume that for half of second layer weights we have $a_j=1$ and for the other half $a_j=-1$. On the other hand, all the first-layer weights are updated during training. Thus, the total number of trainable parameters is $d^\prime=md$ and we denote  $w=[w_1;w_2;\dots;w_m]\in\R^{d^\prime}$  the vector of trainable weights. Throughout, we make the following assumptions on the activation function.

\begin{ass}[Lipschitz and smooth activation] \label{ass:act} The activation function $\sigma:\R\rightarrow\R$ satisfies the following for non-negative constants $\ell, L$:
\bea\nn
\abs{\sigma^\prime(u)}\le\ell,\;\;\;|\sigma^{\prime\prime}(u)| \le L,\qquad\forall u\in\R.
\eea 
\end{ass}

 We note that the smoothness assumption which is required by our framework excludes the use of ReLU. Examples of activation functions that satisfy the smoothness condition include Softplus $\sigma(u)=\log(1+e^u)$, Gaussian error linear unit (GELU) $\sigma(u)=\frac{1}{2} u(1+\operatorname{erf}(\frac{u}{\sqrt{2}}))$, and Hyperbolic-Tangent where $\sigma(u)=\frac{e^u-e^{-u}}{e^u+e^{-u}}$. On the other hand, Lipschitz assumption is rather mild, since it is possible to restrict the parameter space to a bounded domain.

Next, we discuss  conditions on the loss function. Of primal interest is the commonly used logistic loss function $f(u)=\log(1+e^{-u}).$ However, our results hold for a broader class of convex, non-negative and monotonically decreasing functions ($\lim_{u\rightarrow \infty} f(u) = 0$) that  satisfy the following:

\begin{ass}[Lipschitz and smooth loss]\label{ass:lip and smooth}

    The convex loss function $f:\R\rightarrow\R_+$ satisfies for all $u\in\R$
\begin{enumerate}[label={\textbf{\theass.\Alph*:}},
  ref={\theass.\Alph*}]
\item  \label{ass:loss lipschitz}%
    Lipschitzness: $|\fp(u)|\leq \Gf.$
    \item \label{ass:loss smooth}%
    Smoothness: $\fpp(u)\leq \Lf.$
\end{enumerate}
\end{ass}

\begin{ass}[Self-bounded loss]\label{ass:self boundedness}
    The convex loss function $f:\R\rightarrow\R_+$ is self-bounded with some constant $\beta_f>0$, i.e., 
    $|\fp(u)|\leq \beta_f f(u), \forall u\in\R.$

\end{ass}
The self-boundedness Assumption \ref{ass:self boundedness} is the key property of the loss that drives our analysis and justifies the polylogarithmic width requirement, as will become evident. Note that the logistic loss naturally satisfies Assumptions \ref{ass:loss lipschitz} and \ref{ass:loss smooth}  (with $G_f=1,L_f=1/4$), as well as, Assumption \ref{ass:self boundedness} with $\beta_f=1$. Other interesting examples of loss functions satisfying those assumptions include polynomial losses, with the tail behavior $f(u)=1/u^\beta$ for $\beta>0$, which we discuss in Remark \ref{rem:polyn}. To lighten the notation and without loss of generality, we set  $G_f=L_f=\beta_f=1$ for the rest of the paper. We remark that our training-loss results also hold for the exponential loss $e^{-u}$. The exponential loss is self-bounded and while it is not Lipschitz or smooth it satisfies a second-order self-bounded property $\fpp(u)\leq f(u)$, which we can leverage instead; see Appendix \ref{sec:training analysis} for details.

\section{Main Results}

We present bounds on the train loss and generalization gap of gradient-descent (GD) under the setting of Section \ref{sec:problem setup}. Formally, GD with step-size $\eta>0$ optimizes \eqref{eq:logreg} by performing the following updates starting from an initialization $w_0$:
\bea\nn
\forall t\geq 0\,:\,\,w_{t+1}=w_t-\eta\nabla \hf(w_t).
\eea

\subsection{Key properties}\label{sec:properties}

The key challenge in  both the optimization and generalization analysis is the non-convexity of $f(y\Phi(\cdot,x))$, and consequently of the train loss $\Fh(\cdot)$. Despite non-convexity,  we derive bounds analogous to the convex setting, e.g. corresponding bounds on linear logistic regression in \cite{ji2018risk,shamir2021gradient,pmlr-v178-schliserman22a}. We show  this is possible provided the loss satisfies the following key property, which we call \emph{self-bounded weak convexity}.

\begin{defn}[Self-bounded weak convexity]\label{defn:sbhs}
We say a function $\hf:\R^{d^\prime}\rightarrow\R$ is self-bounded weakly convex if there exists constant $\kappa>0$ such that for all $w$,
\bea\label{eq:self-bounded hessian lamin}
\lambda_{\min}\left(\nabla ^2 \widehat F(w)\right) \ge -\kappa \,\widehat F(w)\,.
\eea
\end{defn}
 Recall a function $G:\R^{\dpr}\rightarrow\R$ is weakly convex if $\exists\kappa\geq0$ such that uniformly over all $w\in\R^{d'}$,  $\lamin{\nabla ^2 G(w)}\geq -\kappa.$ If $\kappa=0$,  the function is convex. Instead, property \eqref{eq:self-bounded hessian lamin} lower bounds the curvature by $-\kappa \,G(w)$ that changes proportionally with the function value $G(w)$. We explain below how this is exploited in our setting.

To begin with, the following lemma 
shows that property \eqref{eq:self-bounded hessian lamin} holds for the train loss 
under the setting of Section \ref{sec:problem setup}: training of a two-layer net with smooth activation and self-bounded loss. The lemma also shows that the gradient of the train loss is self bounded. Those two properties together summarize the key ingredients for which our analysis applies.

\begin{lemma}[Key self-boundedness properties]
    \label{lem:key prop intro}
Consider the setup of Section \ref{sec:problem setup} and let Assumptions \ref{ass:features}-\ref{ass:act} hold. Further assume the loss is self-bounded as per Assumption \ref{ass:self boundedness}. Then, the objective satisfies the following self-boundedness properties for its Gradient and Hessian:
\begin{enumerate}
  
\item Self-bounded gradient: $\left\|\nabla \hf_i(w)\right\| \leq \ell R\, \hf_i(w),\;\; \forall i\in[n]$.

\item Self-bounded weak convexity: $\lambda_\min\left(\nabla^2 \widehat F(w)\right) \geq - \frac{LR^2}{\sqrt{m}}\widehat F(w)$.

\end{enumerate}
\end{lemma}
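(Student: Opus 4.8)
The plan is to prove the two self-boundedness properties directly by computing the relevant derivatives of $\Phi(w,x)$ and chaining them through the chain rule, using the Lipschitz/smoothness bounds on $\sigma$ and the self-boundedness of $f$. Throughout I write $z = y\Phi(w,x)$ so that $\hf_i(w) = f(y_i\Phi(w,x_i))$, and I use $|f'|\le f$ (Assumption \ref{ass:self boundedness} with $\beta_f=1$), $|\sigma'|\le\ell$, $|\sigma''|\le L$, and $\|x\|\le R$.

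\textbf{Step 1: Self-bounded gradient.} First I would compute $\nabla_w \hf_i(w)$. Since $\hf_i(w) = f(y_i\Phi(w,x_i))$, the chain rule gives $\nabla_w \hf_i(w) = y_i f'(y_i\Phi(w,x_i))\,\nabla_1\Phi(w,x_i)$. From \eqref{eq:model problem setup}, the gradient of $\Phi$ with respect to the $j$th neuron block is $\frac{1}{\sqrt m} a_j \sigma'(\langle w_j, x_i\rangle)\, x_i$, so $\|\nabla_1\Phi(w,x_i)\|^2 = \frac{1}{m}\sum_{j=1}^m a_j^2 \sigma'(\langle w_j,x_i\rangle)^2 \|x_i\|^2 \le \frac{1}{m}\cdot m\cdot \ell^2 R^2 = \ell^2 R^2$, using $a_j^2=1$. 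Hence $\|\nabla_w\hf_i(w)\| \le |f'(y_i\Phi(w,x_i))|\cdot \ell R \le \ell R\, f(y_i\Phi(w,x_i)) = \ell R\,\hf_i(w)$, where the last inequality is exactly the self-boundedness of $f$. This gives property 1.

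\textbf{Step 2: Self-bounded weak convexity.} Next I would compute the Hessian $\nabla^2_w \hf_i(w)$. Differentiating once more, $\nabla^2_w \hf_i(w) = f''(y_i\Phi(w,x_i))\,\nabla_1\Phi(w,x_i)\nabla_1\Phi(w,x_i)^\top + y_i f'(y_i\Phi(w,x_i))\,\nabla^2_1\Phi(w,x_i)$, using $y_i^2=1$ in the first term. The first term is PSD because $f$ is convex ($f''\ge0$), so it only helps the lower bound on $\lambda_{\min}$ and can be dropped. For the second term I need $\|\nabla^2_1\Phi(w,x_i)\|$: the Hessian of $\Phi$ is block-diagonal across neurons (since neuron $j$'s weights only enter through $\langle w_j,x_i\rangle$), with $j$th block $\frac{1}{\sqrt m} a_j \sigma''(\langle w_j,x_i\rangle)\, x_i x_i^\top$. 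A block-diagonal matrix has operator norm equal to the max over blocks, so $\|\nabla^2_1\Phi(w,x_i)\| \le \max_j \frac{1}{\sqrt m}|\sigma''(\langle w_j,x_i\rangle)|\,\|x_i\|^2 \le \frac{L R^2}{\sqrt m}$. Therefore $\lambda_{\min}(\nabla^2_w \hf_i(w)) \ge -|f'(y_i\Phi(w,x_i))|\cdot \frac{LR^2}{\sqrt m} \ge -\frac{LR^2}{\sqrt m}\hf_i(w)$, again by self-boundedness of $f$. Averaging over $i$ and using $\lambda_{\min}(\frac1n\sum_i M_i)\ge \frac1n\sum_i \lambda_{\min}(M_i)$ yields $\lambda_{\min}(\nabla^2\widehat F(w)) \ge -\frac{LR^2}{\sqrt m}\widehat F(w)$, which is property 2.

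\textbf{Main obstacle.} The calculations are individually routine; the one point demanding care is the block-diagonal structure of $\nabla^2_1\Phi$ and the consequent $1/\sqrt m$ (rather than $1/\sqrt m \cdot m = \sqrt m$, or $O(1)$) scaling of its operator norm — this is precisely where the minimal-width phenomenon enters, since a naive bound treating $\nabla^2_1\Phi$ as a generic $d'\times d'$ matrix would lose the crucial $1/\sqrt m$ factor. I would make sure to state explicitly that $w_j$ enters $\Phi$ only via $\langle w_j, x_i\rangle$, so the cross-neuron second derivatives vanish and the Hessian genuinely decomposes into $m$ rank-one $d\times d$ blocks, each of norm $O(1/\sqrt m)$. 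The convexity of $f$ being used to discard the outer-product term is also worth flagging, since without it one would pick up an additional $\ell^2R^2$-type term that does not vanish with $m$.
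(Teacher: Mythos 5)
Your proof is correct and follows essentially the same route the paper takes (compare Lemmas~\ref{lem:model gradients}, \ref{lem:gradients_and_hessians_bounds_general} and Corollary~\ref{cor:gradients_and_hessians_bounds} in the appendix): chain rule for $\nabla\hf_i$ and $\nabla^2\hf_i$, the bound $\|\nabla_1\Phi\|\le\ell R$, the $1/\sqrt m$ operator-norm bound on the block-diagonal $\nabla_1^2\Phi$, discarding the PSD outer-product term via convexity of $f$, and then invoking $|f'|\le f$. The only cosmetic difference is that you cite the max-over-blocks norm identity for block-diagonal matrices directly, while the paper verifies the same bound by an explicit unit-vector computation.
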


Both of these properties follow from the self-boundedness of the convex loss $f$ combined with Lipshitz and smoothness of $\sigma$. The self-boundedness of the gradient is used for generalization analysis and in particular in obtaining the model stability bound. The \sbwc plays an even more critical role for our optimization and generalization results.
 In particular, the wider the network the closer the loss to having convex-like properties. Moreover, the ``self-bounded'' feature of this property provides another mechanism that favors convex-like optimization properties of the loss. To see this, consider the minimum Hessian eigenvalue $\lambda_\min(\nabla^2\Fh(w_t))$ at gradient descent iterates $\{w_t\}_{t\geq 1}$: As training progresses, the train loss $\Fh(w_t)$ decreases, and thanks to the \sbwc property, the gap to convexity also decreases. We elaborate on the role of self-bounded weak convexity in our proofs in Section \ref{sec:ps}.

\subsection{Training loss}\label{sec:training loss}
We begin with a general bound on the training loss and the parameter's norm, which is also required for our generalization analysis.

\begin{theorem}[Training loss -- General bound] \label{thm:train} Suppose Assumptions \ref{ass:features}-\ref{ass:self boundedness} hold. 
Fix any training horizon $T\geq 0$ and any step-size $\eta\le 1/L_\hf$ where $L_\hf$ is the objective's smoothness parameter. Assume any $w\in\R^{d^\prime}$ and hidden-layer width $m$ such that
 $\|w-w_0\|^2\geq \max\{\eta T \widehat F(w),\eta \widehat F(w_0)\}$ and $m \geq 18^2 L^2 R^4\|w-w_0\|^4$.
Then, the training loss and the parameters' norm satisfy
\begin{align}\label{eq:train loss bound final}
        &\hf(w_T) \;\leq\; \frac{1}{T}\sum_{t=1}^T\Fh(w_t) \;\leq\; 2 \hf(w) + 
        \frac{5\|w-w_0\|^2}{2\eta T},
        \\
        &\forall t\in[T]\;\; :\;\; \left\|w_t-w_0\right\|\;\le\; 4\|w-w_0\|.\nn
\end{align}
\end{theorem}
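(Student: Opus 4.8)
The plan is to run a standard descent-type induction on the gradient-descent trajectory, but to handle the non-convexity through the \sbwc property of Lemma~\ref{lem:key prop intro}, which is exactly what lets a convex-style argument go through once the width is large enough relative to $\|w-w_0\|$. First I would set $r_0:=\|w-w_0\|$ and define the induction hypothesis $\|w_t-w_0\|\le 4r_0$ for all $t$ up to some horizon; under this hypothesis, every iterate stays in a ball on which, by the second part of Lemma~\ref{lem:key prop intro}, $\lambda_{\min}(\nabla^2\Fh(\cdot))\ge -\tfrac{LR^2}{\sqrt m}\Fh(\cdot)$, and crucially the width condition $m\ge 18^2 L^2R^4 r_0^4$ makes $\tfrac{LR^2}{\sqrt m}\le \tfrac{1}{18 r_0^2}$, so the weak-convexity constant along the trajectory is controlled. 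The key one-step inequality to establish is an ``almost-convex'' comparison bound: for the reference point $w$,
\beas
\Fh(w_{t+1}) \;\le\; \Fh(w) + \frac{\|w_t-w\|^2-\|w_{t+1}-w\|^2}{2\eta} + (\text{error terms}),
\eeas
where the error terms come from (i) the descent lemma using $\eta\le 1/L_\hf$ and (ii) the curvature correction $-\tfrac{\kappa}{2}\Fh(w_t)\|w_t-w\|^2$ arising from a second-order Taylor expansion of $\Fh$ between $w_t$ and $w$. The point is that $\|w_t-w\|^2\le (4r_0+r_0)^2 = 25 r_0^2$ on the good event, so the curvature error is at most $\tfrac{25 r_0^2 \kappa}{2}\Fh(w_t) \le \tfrac{25}{36}\Fh(w_t)$, i.e. a \emph{constant fraction} of $\Fh(w_t)$ — this is precisely where self-boundedness (as opposed to plain weak convexity) is essential, since a bound of the form $-\kappa\|w_t-w\|^2$ alone would not be summable.

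Next I would telescope this one-step inequality over $t=1,\dots,T$. The curvature error terms $\sum_t \tfrac{25}{36}\Fh(w_t)$ (or whatever the sharp constant is) get absorbed into the left-hand side $\sum_t \Fh(w_{t+1})$ using the monotonicity of $\{\Fh(w_t)\}$ guaranteed by the descent lemma, leaving something like $\tfrac12 \sum_t \Fh(w_t) \le T\Fh(w) + \tfrac{\|w_1-w\|^2}{2\eta}$, hence $\tfrac1T\sum_t \Fh(w_t)\le 2\Fh(w) + \tfrac{\|w_1-w\|^2}{\eta T}$. Bounding $\|w_1-w\|^2 \le 2\|w_1-w_0\|^2 + 2 r_0^2$ and using $\|w_1-w_0\|=\eta\|\nabla\Fh(w_0)\|$ together with the descent lemma to control $\|w_1-w_0\|^2 \le 2\eta\Fh(w_0)\le 2r_0^2$ (here the hypothesis $r_0^2\ge \eta\Fh(w_0)$ enters) gives $\|w_1-w\|^2\lesssim r_0^2$, and then tracking constants carefully yields the stated $2\Fh(w) + \tfrac{5r_0^2}{2\eta T}$. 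Monotonicity also gives $\Fh(w_T)\le \tfrac1T\sum_t\Fh(w_t)$, the first inequality in \eqref{eq:train loss bound final}.

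The remaining task — and the step I expect to be the main obstacle — is closing the induction, i.e. showing $\|w_{t+1}-w_0\|\le 4r_0$ given the bound holds up to $t$. I would write $\|w_{t+1}-w_0\|\le \|w_{t+1}-w\| + r_0$ and bound $\|w_{t+1}-w\|^2$ using the \emph{same} telescoped inequality but stopped at step $t+1$: from $\|w_{t+1}-w\|^2 \le \|w_1-w\|^2 + 2\eta\sum_{s\le t}(\Fh(w)-\Fh(w_{s+1})) + (\text{curvature errors})$ and the fact that the partial sums $\sum_{s} \Fh(w_{s+1})$ are already bounded by the training-loss estimate, one gets $\|w_{t+1}-w\|^2 \le \|w_1-w\|^2 + 2\eta T\Fh(w) + (\text{small})$. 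Using the hypothesis $r_0^2\ge \eta T\Fh(w)$ to bound $2\eta T\Fh(w)\le 2r_0^2$, and $\|w_1-w\|^2\lesssim r_0^2$ as above, this should close to $\|w_{t+1}-w\|\le 3r_0$, hence $\|w_{t+1}-w_0\|\le 4r_0$. The delicate point is that the curvature error terms inside this norm recursion must themselves be controlled by the partial loss sums (again via self-boundedness) \emph{before} the induction is closed, so one must arrange the argument so that the loss bound and the norm bound are proved together in a single induction rather than sequentially; getting the numerical constants ($18$, factors of $2$, the $5/2$) to line up consistently across both halves is the fiddly part.
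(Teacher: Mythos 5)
Your proposal gets the overall architecture right: descent lemma for $\eta \le 1/L_\hf$, a Taylor-based one-step comparison against the reference point $w$, telescoping, and a simultaneous induction to bound $\|w_t-w_0\|$. This matches the paper's structure. However, there is a genuine gap in the key step. When you Taylor-expand $\Fh(w)$ around $w_t$, the Hessian in the second-order remainder is evaluated at an \emph{intermediate} point $w_{\alpha t}:=\alpha w_t+(1-\alpha)w$ on the segment $[w_t,w]$, so self-bounded weak convexity yields the correction $-\tfrac{\kappa}{2}\Fh(w_{\alpha t})\|w_t-w\|^2$, \emph{not} $-\tfrac{\kappa}{2}\Fh(w_t)\|w_t-w\|^2$ as you wrote. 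Since $\Fh$ is non-convex, $\Fh(w_{\alpha t})$ is not a priori controlled by $\max\{\Fh(w_t),\Fh(w)\}$; it could be much larger. Your subsequent computation $\tfrac{25 r_0^2 \kappa}{2}\Fh(w_t)\le\tfrac{25}{36}\Fh(w_t)$ and the absorption of these terms into $\sum_t \Fh(w_{t+1})$ therefore do not go through as stated: the errors are indexed by unknown interior points, not by GD iterates.

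What closes this gap in the paper is precisely the generalized local quasi-convexity property (Proposition \ref{propo:GLQC} / Corollary \ref{cor:GLQC}), proved separately by an interior-optimality/contradiction argument that again uses \sbwc: provided $\sqrt{m}\ge \lambda\,\tfrac{LR^2}{2}\|w_t-w\|^2$, one gets $\max_{\alpha\in[0,1]}\Fh(w_{\alpha t})\le(1-1/\lambda)^{-1}\max\{\Fh(w_t),\Fh(w)\}$, which converts the interior-point error into a sum of iterate losses that can then be absorbed as you intend. Without this lemma the argument stalls at the Taylor step, and it is in fact the paper's central technical contribution and the reason the width only needs to scale like $\|w-w_0\|^4$ rather than something depending on $T$. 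The rest of your plan (bounding $\|w_1-w\|$, using $r_0^2\ge\max\{\eta T\Fh(w),\eta\Fh(w_0)\}$, and the recursive norm bound $\|w_{t+1}-w\|\le 3r_0$) follows the paper's Lemmas \ref{lem:train_loss_final}--\ref{lem:w bound general exponential} once the GLQC piece is in place, though your constants would also need to be recomputed since the GLQC bound replaces your $25/36$ fraction with the paper's tighter $\tfrac{4}{3}(\Fh(w_t)+\Fh(w))$-style decomposition.
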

A few remarks are in place regarding the theorem.  First,  Eq. \eqref{eq:train loss bound final} upper bounds the running average (also known as regret) of train loss for iterations $1,\ldots,T$ by the value, at an arbitrarily  chosen point $w$, of a ridge-regularized objective with regularization parameter inversely proportional to $\eta T$. Because of smoothness and Lipschitz Assumption \ref{ass:lip and smooth} of $f$, it turns out that the training objective is $L_\hf$-smooth.
Hence, by the descent lemma of GD for smooth functions, the same upper bound holds in Eq. \eqref{eq:train loss bound final} for the value of the loss at time $T$, as well.  Moreover, the theorem provides a uniform upper bound of the norm of all GD iterates in terms of $\|w-w_0\|$. Notably, and despite the non-convexity in our setting, our bounds are same up to constants to analogous bounds for logistic linear regression in \cite{shamir2021gradient,pmlr-v178-schliserman22a}. As discussed in Sec. \ref{sec:properties} this is possible thanks to the \sbwc property.

The condition $m\gtrsim\|w-w_0\|^4$ on the norm of the weights controls the maximum deviations of weights $w$ from initialization (with respect to network width) required for our results to guarantee arbitrarily small train loss. Specifically, to get the most out of Theorem 
\ref{thm:train} we need to choose appropriate $w$ that satisfies both the condition $m\gtrsim\|w-w_0\|^4$ and keeps the associated ridge-regularized loss $\widehat{F}(w)+\|w-w_0\|^2/(\eta T)$ small. This combined requirement is formalized in the neural-net realizability Assumption \ref{ass:real} below. As we will discuss later in Section \ref{sec:ntk}, this assumption translates into an assumption on the underlying data distribution that ultimately enables the application of Theorem \ref{thm:train} to achieve vanishing training error.

\begin{ass}[NN--Realizability]\label{ass:real}
There exists a decreasing function $g:\R_+\rightarrow\R_+$ which measures the 
norm of deviations from initialization of models that achieve arbitrarily small training error.

Formally, for almost surely all $n$ training samples and for any sufficiently small $\eps>0$ there exists $w^{(\eps)}\in\R^{d'}$ such that 
\bea\nn
\hf(w^{(\eps)})\le\eps,\;\;\;\text{and}\;\;\; g(\eps)=\left\|w^{(\eps)}-w_0\right\|.
\eea
\end{ass}

Since Assumption \ref{ass:real} holds for arbitrarily small $\eps$, it guarantees that the model has enough capacity to interpolate the data, i.e., attain train error that is arbitrarily small ($\eps$). Additionally, this is accomplished for model weights whose distance from initialization is managed by the function $g(\eps)$. By using these model weights  to select $w$ in Theorem \ref{thm:train} we obtain train loss bounds for interpolating models.
\begin{theorem}
[Training loss under interpolation]\label{thm:train_IP}
Let Assumptions \ref{ass:features}-\ref{ass:real} hold. Let $\eta\le\min\{\frac{1}{L_\hf},g(1)^2,\frac{g(1)^2}{\hf(w_0)}\}$ and assume the width satisfies $m\ge 18^2 L^2 R^4 \, g(\frac{1}{T})^4$ for a fixed training horizon $T$. Then,
\begin{align}
        &\hf(w_{T}) \;\leq \frac{2}{T}+ 
        \frac{5\, g(\frac{1}{T})^2}{2\eta T},
        \label{eq:tr-ip}
        \\
        &\forall t\in[T] \;:\;\;\big\|w_t-w_0\big\|\;\le\; 4 \, g(\frac{1}{T}).\nn
\end{align}
\end{theorem}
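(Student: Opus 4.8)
The plan is to derive Theorem~\ref{thm:train_IP} as a direct corollary of Theorem~\ref{thm:train} by making the ``arbitrary'' point $w$ in the latter a concrete choice supplied by the realizability Assumption~\ref{ass:real}. Concretely, I would set $w = w^{(\eps)}$ with the specific choice $\eps = 1/T$, so that $\hf(w^{(\eps)}) \le 1/T$ and $\|w^{(\eps)} - w_0\| = g(1/T)$ by definition of the realizability function $g$. With this substitution the right-hand side of \eqref{eq:train loss bound final} becomes $2\hf(w^{(\eps)}) + \tfrac{5\|w^{(\eps)}-w_0\|^2}{2\eta T} \le \tfrac{2}{T} + \tfrac{5 g(1/T)^2}{2\eta T}$, which is exactly \eqref{eq:tr-ip}, and the iterate-norm bound $\|w_t - w_0\| \le 4\|w^{(\eps)}-w_0\| = 4 g(1/T)$ gives the second displayed inequality. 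So the mathematical content is essentially a bookkeeping exercise; the real work is verifying that the hypotheses of Theorem~\ref{thm:train} are met for this choice of $w$.

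There are three hypotheses of Theorem~\ref{thm:train} to check. First, the width condition: Theorem~\ref{thm:train} requires $m \ge 18^2 L^2 R^4 \|w-w_0\|^4 = 18^2 L^2 R^4 g(1/T)^4$, which is precisely the assumed width bound in Theorem~\ref{thm:train_IP}. Second, the step-size condition $\eta \le 1/L_\hf$ is part of the assumed range $\eta \le \min\{1/L_\hf,\, g(1)^2,\, g(1)^2/\hf(w_0)\}$. Third — and this is the step that needs genuine attention — I must verify $\|w-w_0\|^2 \ge \max\{\eta T\,\hf(w),\ \eta\,\hf(w_0)\}$, i.e. $g(1/T)^2 \ge \max\{\eta T\cdot\tfrac{1}{T},\ \eta\,\hf(w_0)\} = \max\{\eta,\ \eta\,\hf(w_0)\}$. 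For the first term I would use $\eta \le g(1)^2$ together with monotonicity of $g$ (since $g$ is decreasing and $1/T \le 1$ for $T \ge 1$, we have $g(1/T) \ge g(1)$, hence $g(1/T)^2 \ge g(1)^2 \ge \eta$); for the second term I would use $\eta \le g(1)^2/\hf(w_0)$, giving $\eta\,\hf(w_0) \le g(1)^2 \le g(1/T)^2$. This is exactly why the seemingly odd extra terms $g(1)^2$ and $g(1)^2/\hf(w_0)$ appear in the step-size constraint — they are engineered to make the ``$w$ not too close to initialization'' condition of Theorem~\ref{thm:train} automatic.

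I expect no serious obstacle here; the only mild subtlety is the use of monotonicity of $g$ and the implicit assumption $T \ge 1$ so that $\eps = 1/T$ is in the regime where Assumption~\ref{ass:real} applies (``sufficiently small $\eps$'' — strictly one should phrase the result for $T$ large enough, or note that $g$ can be extended/bounded on $(0,1]$). One should also remember that the bound $\hf(w_T) \le \tfrac{1}{T}\sum_{t=1}^T \hf(w_t)$ in Theorem~\ref{thm:train} already absorbs the descent-lemma argument, so after substitution we immediately get the stated bound on $\hf(w_T)$ without any further optimization-theoretic input. In short, the proof is: instantiate $w \leftarrow w^{(1/T)}$ in Theorem~\ref{thm:train}, check the three hypotheses using the prescribed $\eta$-range and monotonicity of $g$, and read off the conclusion.
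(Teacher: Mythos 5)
Your proof is correct and follows exactly the paper's own argument (Theorem~\ref{thm:train_IPapp} in Appendix~\ref{sec:training analysis}): instantiate $w = w^{(1/T)}$ from Assumption~\ref{ass:real}, verify the three hypotheses of Theorem~\ref{thm:train} using the prescribed step-size range and the monotonicity of $g$, then read off the bounds. The observations about why the odd-looking step-size constraints appear and about the ``$T\ge1$'' / ``sufficiently small $\eps$'' caveat are accurate and match the paper's (implicit) reasoning.
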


To interpret the theorem's conclusions suppose that the function $g(\cdot)$ of Assumption \ref{ass:real} is at most logarithmic; i.e., $g(\frac{1}{T})=O (\log (T))$. Then, Theorem \ref{thm:train_IP} implies that  $m=\Omega(\log^4(T))$ neurons suffice to achieve train loss $\tilde O(\frac{1}{T})$ while GD iterates at all iterations satisfy $\|w_t-w_0\|=O(\log (T))$. In Section \ref{sec:ntk} (see also Remark \ref{rem:linear}), we will give examples of data separability conditions that guarantee the desired logarithmic growth of $g(\cdot)$ for logistic loss minimization, which in turn imply the favorable convergence guarantees described above. Under the same conditions we will show that the step-size requirement simplifies to $\eta\le\min\{3,1/L_\hf\}$ (see  Corollary \ref{cor:ntkres}). Finally, we remark that Theorem \ref{thm:train_IP} provides sufficient parameterization conditions under which GD  with $T=\tilde{\Omega}(n)$ iterations finds  weights $w_T$ that yield an interpolating classifier and thus, achieve zero training error.
To see this, assume logistic loss and observe setting $T\gtrsim n$ in Eq. \eqref{eq:tr-ip} gives $\hf (w_T)\le \log (2) /n$. This in turn implies that every sample loss satisfies $\hf_i(w_T)\le \log(2)$, equivalently $y_i=\operatorname{sign}\big(\Phi(w_T,x_i)\big)$.

\subsection{Generalization}\label{sec:gen gap}

Our main result below bounds the generalization gap of GD for training two-layer nets with self-bounded loss functions. We remark that all expectations that appear below are over the training set.

\begin{theorem}[Generalization gap -- General bound]\label{thm:generalization gap general}
Suppose Assumptions \ref{ass:features}-\ref{ass:self boundedness} hold. Fix any time horizon $T\geq 1$ and any step size $\eta\leq 1/L_\Fh$ where  $L_\hf$ is the objective's smoothness parameter. Let any $w\in\R^\dpr$ such that $\|w-w_0\|^2\geq \max\{\eta T\, \Fh(w),\eta \Fh(w_0)\}.$ Suppose  hidden-layer width $m$ satisfies 
$
m\geq 64^2 L^2 R^4 \|w-w_0\|^4.
$
Then, the generalization gap of GD at iteration $T$ is bounded as
\[
    \E\Big[F(w_T)-\Fh(w_T)\Big] \leq \frac{8\ell^2 R^2}{n}\, \E\left[ \eta T \,\Fh(w) + {2\|w-w_0\|^2}\right].
\]

\end{theorem}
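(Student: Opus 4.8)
The plan is to run an algorithmic-stability argument in the spirit of \cite{lei2020fine,pmlr-v178-schliserman22a}, but with the convexity those analyses rely on replaced by the two self-boundedness properties of Lemma~\ref{lem:key prop intro}, and with the a~priori loss and iterate-norm control of Theorem~\ref{thm:train} (applied to both the original and the resampled training sets) taming the non-convexity error terms. First, for each $i\in[n]$ let $S^{(i)}$ be the training set with the $i$th sample replaced by an independent fresh draw from $\Dc$, and let $\{w_t^{(i)}\}_{t\ge 0}$ be GD run on $S^{(i)}$. By the standard resampling identity, $\E[F(w_T)-\Fh(w_T)]=\E\big[\tfrac1n\sum_{i}(\hf_i(w_T^{(i)})-\hf_i(w_T))\big]$, where $\hf_i$ is the loss on the original $i$th sample. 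Since $\|\nabla_1\Phi(w,x)\|\le\ell R$ (Assumption~\ref{ass:act}) and $|\fp|\le 1$ (Assumption~\ref{ass:lip and smooth}), each $\hf_i$ is $\ell R$-Lipschitz, so it suffices to establish the on-average model-stability bound $\tfrac1n\sum_i\E\|w_T-w_T^{(i)}\|\le\tfrac{8\ell R}{n}\,\E[\eta T\,\Fh(w)+2\|w-w_0\|^2]$; multiplying by $\ell R$ then gives the theorem.

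Set $\delta_t:=w_t-w_t^{(i)}$ and let $H:=\tfrac1n\sum_{j\ne i}\hf_j$ be the part of the objective shared by $S$ and $S^{(i)}$; writing $\Fh_{S^{(i)}}$ for the empirical objective on $S^{(i)}$, we have $H\le\Fh$ and $H\le\Fh_{S^{(i)}}$ pointwise, so $H$ inherits both conclusions of Lemma~\ref{lem:key prop intro} (in the per-sample form implicit in its proof) with the same constants. The GD updates give $\delta_{t+1}=\big[(w_t-\eta\nabla H(w_t))-(w_t^{(i)}-\eta\nabla H(w_t^{(i)}))\big]-\tfrac\eta n\big(\nabla\hf_i(w_t)-\nabla\hf_i'(w_t^{(i)})\big)$, where $\hf_i'$ denotes the loss at the fresh sample. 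For $\eta\le 1/L_\Fh$ the self-bounded weak-convexity bound makes $I-\eta\nabla^2H(v)$ have operator norm at most $1+\eta\tfrac{LR^2}{\sqrt m}H(v)$, so the bracketed term has norm $\le(1+\eta\tfrac{LR^2}{\sqrt m}\bar H_t)\|\delta_t\|$ with $\bar H_t:=\sup_{[w_t,w_t^{(i)}]}H$, while the self-bounded gradient bounds the last term by $\tfrac{\eta\ell R}{n}(\hf_i(w_t)+\hf_i'(w_t^{(i)}))$. Unrolling from $\delta_0=0$,
\[
\|\delta_T\|\;\le\;\frac{\eta\ell R}{n}\sum_{t=0}^{T-1}\Big(\prod_{s=t+1}^{T-1}\big(1+\tfrac{\eta LR^2}{\sqrt m}\bar H_s\big)\Big)\big(\hf_i(w_t)+\hf_i'(w_t^{(i)})\big).
\]

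Next I would apply Theorem~\ref{thm:train} to both $S$ and $S^{(i)}$ (a second set of $n$ i.i.d.\ draws, so the hypotheses transfer): this bounds $\|w_t-w_0\|,\|w_t^{(i)}-w_0\|$ by $4\|w-w_0\|$ and makes $\sum_{t}\Fh(w_t)$ and $\sum_t\Fh_{S^{(i)}}(w_t^{(i)})$ at most $2T\Fh(w)+\tfrac{5}{2\eta}\|w-w_0\|^2$ (up to the $S\!\leftrightarrow\!S^{(i)}$ relabeling). The quadratic-bulge estimate for self-bounded weakly convex functions along the segment $[w_t,w_t^{(i)}]$ then lets me replace $\bar H_t$ by $O(\max\{\Fh(w_t),\Fh_{S^{(i)}}(w_t^{(i)})\})$, the self-referential nature of this step being resolved because $m\ge 64^2L^2R^4\|w-w_0\|^4$ forces $\tfrac18\tfrac{LR^2}{\sqrt m}\|\delta_t\|^2\le\tfrac18$. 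Consequently $\tfrac{\eta LR^2}{\sqrt m}\sum_s\bar H_s\lesssim\tfrac{1}{\|w-w_0\|^2}\big(\eta T\Fh(w)+\|w-w_0\|^2\big)$, which is an absolute constant by the hypothesis $\eta T\Fh(w)\le\|w-w_0\|^2$, so every factor $1+\tfrac{\eta LR^2}{\sqrt m}\bar H_s$ multiplies out to $O(1)$. Substituting this into the display, averaging over $i$ and taking expectations — using $\tfrac1n\sum_i\hf_i(w_t)=\Fh(w_t)$ and, by exchangeability, $\tfrac1n\sum_i\E[\hf_i'(w_t^{(i)})]=\E[\Fh_{S^{(i)}}(w_t^{(i)})]=\E[\Fh(w_t)]$ — bounds $\tfrac1n\sum_i\E\|w_T-w_T^{(i)}\|$ by $O(\tfrac{\eta\ell R}{n})\sum_{t<T}\E\Fh(w_t)$; a final application of Theorem~\ref{thm:train} and bookkeeping of the numerical constants yields the stability bound of the first paragraph, hence the theorem.

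The heart of the argument — and the step I expect to be the main obstacle — is showing $\prod_s(1+\tfrac{\eta LR^2}{\sqrt m}\bar H_s)=O(1)$. Without self-bounded weak convexity the per-step expansion would be a fixed constant exceeding $1$, which would force $m$ polynomial in $T$; here the expansion is instead tied to the running loss through the Hessian lower bound $-\kappa\Fh(\cdot)$, and it is precisely the combination of the train-loss estimate $\sum_s\Fh(w_s)=O(T\Fh(w)+\|w-w_0\|^2/\eta)$ with $m\gtrsim\|w-w_0\|^4$ that makes $\tfrac{\eta LR^2}{\sqrt m}\sum_s\Fh(w_s)=O(1)$. Two subsidiary technical points also need care: trading the segment supremum $\bar H_t$ for the loss at the actual iterates (handled by the bulge estimate for weakly convex functions, resolvable only thanks to the width lower bound), and verifying that Theorem~\ref{thm:train} genuinely applies along the perturbed trajectories $\{w_t^{(i)}\}$, which is immediate in the realizability context in which the present theorem is invoked, since that assumption holds for almost every draw of the data.
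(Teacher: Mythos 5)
Your overall architecture---stability-to-generalization plus expansiveness controlled by self-bounded weak convexity and GLQC, with the train-loss bounds of Theorem~\ref{thm:train} supplying the a~priori control---is exactly the paper's strategy, and your handling of the expansiveness product $\prod_s(1+\tfrac{\eta LR^2}{\sqrt m}\bar H_s)$ is correct in outline. But you have silently swapped the paper's \emph{leave-one-out} (LOO) stability for a \emph{replace-one} (resampling) stability, and this substitution creates a gap that you do not close. The paper runs the auxiliary trajectory $w_t^{\neg i}$ on $\Fh^{\neg i}(w)=\frac1n\sum_{j\ne i}\hf_j(w)$, and the reason it can invoke Theorem~\ref{thm:train} on $\Fh^{\neg i}$ with the \emph{same} comparator $w$ is elementary: by non-negativity of the per-sample loss, $\Fh^{\neg i}\le\Fh$ pointwise, so $\|w-w_0\|^2\ge\max\{\eta T\Fh(w),\eta\Fh(w_0)\}$ immediately implies $\|w-w_0\|^2\ge\max\{\eta T\Fh^{\neg i}(w),\eta\Fh^{\neg i}(w_0)\}$. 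Your auxiliary trajectory $w_t^{(i)}$ runs on $\Fh_{S^{(i)}}=H+\tfrac1n\hf_i'$, which contains a fresh sample $z_i'$ and therefore need \emph{not} satisfy $\Fh_{S^{(i)}}\le\Fh$. Consequently the hypothesis of Theorem~\ref{thm:train} for $S^{(i)}$ with the same $w$, namely $\|w-w_0\|^2\ge\max\{\eta T\Fh_{S^{(i)}}(w),\eta\Fh_{S^{(i)}}(w_0)\}$, simply does not follow from the theorem's assumptions. This is the very step your argument relies on (``this bounds $\|w_t^{(i)}-w_0\|$ by $4\|w-w_0\|$ \ldots''), and without it both the iterate-norm control $\|\delta_t\|\le 8\|w-w_0\|$ (needed for the GLQC bulge estimate, and hence for $\tau=O(1)$) and the loo-regret bound $\sum_t\Fh_{S^{(i)}}(w_t^{(i)})\lesssim T\Fh(w)+\|w-w_0\|^2/\eta$ (needed to make $\sum_s\bar H_s$ small) collapse.

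Your closing remark that this is ``immediate in the realizability context'' does not rescue the argument, because Theorem~\ref{thm:generalization gap general} is stated and must be proved \emph{without} Assumption~\ref{ass:real}; realizability only enters later in Theorem~\ref{thm:gengap}. And even granting realizability, the comparator produced by Assumption~\ref{ass:real} for $S^{(i)}$ is in general a different point $w(S^{(i)})\ne w(S)$, so you would at best obtain bounds in $\|w(S^{(i)})-w_0\|$ and $\Fh_{S^{(i)}}(w(S^{(i)}))$, which do not match the quantities appearing in the theorem's conclusion; patching this requires an exchangeability argument and expectation manipulations that you do not perform and that would change the constants. The fix is to do what the paper does: base the stability argument on the LOO trajectory $w_t^{\neg i}$ driven by $\Fh^{\neg i}$ (invoking Lemma~\ref{lem:sk22} in place of the resampling identity), so that the domination $\Fh^{\neg i}\le\Fh$ makes Theorem~\ref{thm:train} applicable to the auxiliary objective for free, and the single correction term $\tfrac{\eta}{n}\|\nabla\hf_i(w_t)\|\le\tfrac{\eta\ell R}{n}\hf_i(w_t)$ averages cleanly over $i$ to $\tfrac{\eta\ell R}{n}\Fh(w_t)$.
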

A few remarks regarding the theorem are in place. The theorem's assumptions are similar to those in Theorem \ref{thm:train}, which bounds the training loss. The condition $\|w-w_0\|^2\geq \max\{\eta T \Fh(w),\eta \Fh(w_0)\}$ needs to hold almost surely over the training data, which is non-restrictive, as in later applications of the theorem, the choice of $w$ arises from Assumption \ref{ass:real}. The condition $m\geq 64^2L^2R^4\|w-w_0\|^4$ on the width of the network, is also the same as that of Theorem \ref{thm:train} but with a larger constant. This means that the last-iterate train loss bound from Theorem \ref{thm:train} (Eq. \eqref{eq:train loss bound final}) holds under the setting of Theorem \ref{thm:generalization gap general}. Hence, it applies to the expected train loss $\E[\Fh(w_T)]$ and, combined with the generalization-gap bound, yields a bound on the expected test loss $\E[F(w_T)]$. 

To optimize the bound, a proper $w$ must be selected by minimizing the population version of a ridge-regularized training objective. In interpolation settings, the procedure for selecting $w$ follows the same guidelines as in Assumption \ref{ass:real} and in a similar style as obtaining Theorem \ref{thm:train_IP}. 

\begin{theorem}[Generalization gap under interpolation]\label{thm:gengap}
                Let Assumptions \ref{ass:features}-\ref{ass:real} hold. Fix $T\ge 1$ and let $m\ge 64^2 L^2R^4\, g(\frac{1}{T})^4$. Then, for any $\eta\le\min\{\frac{1}{L_\hf},g(1)^2,\frac{g(1)^2}{\hf(w_0)}\}$ the expected generalization gap at iteration $T$ satisfies 
    \bea
  \E \Big[{F}(w_T)-\widehat F(w_T)\Big]\le \frac{24\ell^2 R^2 \,g(\frac{1}{T})^2 }{n} \,.
    \eea
    \end{theorem}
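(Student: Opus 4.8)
The plan is to obtain Theorem~\ref{thm:gengap} as a direct specialization of the general generalization bound of Theorem~\ref{thm:generalization gap general}, in precisely the same way that Theorem~\ref{thm:train_IP} was deduced from Theorem~\ref{thm:train}. Concretely, I would instantiate the free point $w$ in Theorem~\ref{thm:generalization gap general} with the realizability witness $w^{(\eps)}$ furnished by Assumption~\ref{ass:real} at error level $\eps=\frac{1}{T}$, so that almost surely over the $n$ training samples $\Fh(w^{(1/T)})\le \frac{1}{T}$ and $\|w^{(1/T)}-w_0\| = g(\frac{1}{T})$.

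Next I would check that the hypotheses of Theorem~\ref{thm:generalization gap general} are met for this choice. Since $T\ge 1$ and $g$ is decreasing, $g(\frac{1}{T})\ge g(1)$, so the stated step-size bound $\eta\le\min\{\frac{1}{L_\hf},g(1)^2,\frac{g(1)^2}{\hf(w_0)}\}$ yields both $\eta T\,\Fh(w^{(1/T)})\le \eta\le g(1)^2\le g(\frac{1}{T})^2 = \|w^{(1/T)}-w_0\|^2$ and $\eta\,\Fh(w_0)\le g(1)^2\le \|w^{(1/T)}-w_0\|^2$; hence the requirement $\|w-w_0\|^2\ge\max\{\eta T\,\Fh(w),\eta\,\Fh(w_0)\}$ holds. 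The width hypothesis $m\ge 64^2 L^2R^4\|w-w_0\|^4$ becomes exactly $m\ge 64^2 L^2 R^4 g(\frac{1}{T})^4$, which is assumed. Applying Theorem~\ref{thm:generalization gap general} then gives $\E[F(w_T)-\Fh(w_T)]\le \frac{8\ell^2 R^2}{n}\,\E[\eta T\,\Fh(w^{(1/T)}) + 2\|w^{(1/T)}-w_0\|^2]$, and since $\|w^{(1/T)}-w_0\|^2=g(\frac{1}{T})^2$ is deterministic and $\Fh(w^{(1/T)})\le\frac{1}{T}$ pathwise, the bracket is at most $\eta + 2g(\frac{1}{T})^2\le 3g(\frac{1}{T})^2$, again using $\eta\le g(1)^2\le g(\frac{1}{T})^2$. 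This produces the claimed bound $\frac{24\ell^2 R^2 g(\frac{1}{T})^2}{n}$.

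The only point that requires genuine care, rather than routine algebra, is the bookkeeping around expectations: $w^{(1/T)}$ is itself a measurable function of the random training set, so the expectation in Theorem~\ref{thm:generalization gap general} is over $w_T$ and $w^{(1/T)}$ jointly; the argument survives this because Assumption~\ref{ass:real} guarantees the controlling bounds on $\Fh(w^{(1/T)})$ and $\|w^{(1/T)}-w_0\|$ almost surely, hence under expectation. A secondary check is to confirm the monotonicity direction of $g$ so that all three pieces of the step-size constraint and the condition $\|w-w_0\|^2\ge\max\{\cdot\}$ are simultaneously consistent for every $T\ge 1$. No new analytic obstacle arises beyond what is already encapsulated in Theorem~\ref{thm:generalization gap general}.
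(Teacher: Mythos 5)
Your proof is correct and follows essentially the same route as the paper's: instantiate $w = w^{(1/T)}$ from Assumption~\ref{ass:real} in Theorem~\ref{thm:generalization gap general}, verify the step-size and width hypotheses using the monotonicity of $g$, and simplify $\eta T\,\Fh(w^{(1/T)}) + 2g(\tfrac{1}{T})^2 \le 3g(\tfrac{1}{T})^2$. Your extra remark that $w^{(1/T)}$ is a measurable function of the training set, and that Assumption~\ref{ass:real} supplies the needed almost-sure bounds under the expectation, is a welcome point of care that the paper's proof leaves implicit.
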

Note the width condition is similar in order to that of Theorem \ref{thm:train_IP}. 
Thus, provided $g(\frac{1}{T})\lesssim \log(T)$ (see Remark \ref{rem:linear} and Section \ref{sec:ntk} for examples), 
we have generalization gap of order $\tilde O(\frac{1}{n})$ with $m=\Omega(\log^4 (T))$ neurons. Combined with the training loss guarantees from Theorem \ref{thm:train_IP}, we have test loss rate $\tilde O(\frac{1}{T}+\frac{1}{n})$. This further implies that with $m\approx \log^4(n)$ neurons and $T=n$ iterations, the test loss reaches the optimal rate of $\tilde O(\frac{1}{n})$. On the other hand, previous stability-based generalization bounds (e.g., \cite{richards2021learning}) required polynomial width  $m\gtrsim T^2$ and eventually obtained sub-optimal generalization rates of order $O(\frac{T}{n})$. We further discuss the technical novelties resulting in these improvements in Section \ref{sec:ps}.

\begin{remark}[Example: Linearly-separable data]\label{rem:linear}
    Consider logistic-loss minimization,  $\tanh$ activation  $\sigma(u)=\frac{e^u-e^{-u}}{e^u+e^{-u}}$ and data distribution that is linearly separable with margin $\gamma$, i.e., for almost surely all $n$ samples there exists  unit-norm vector $v^\star\in\R^d$ such that $\forall i\in[n]: y_i \langle v^\star,x_i\rangle \ge \gamma$. We  initialize the weights to zero, i.e. $w_0=0$ and show that  the realizability Assumption \ref{ass:real} naturally holds in this setting. To see this, for any fixed $\eps>0,$ set $\alpha=\frac{2\left(\log(1/\eps)\right)}{\gamma\sqrt{m}}$ and assume $m\geq 4\log^2(1/\eps)$. With this choice, 
    select weights $w^{(\eps)}_j:= \alpha v^\star, a_j=\frac{1}{\sqrt{m}}$ for $j\in[1,\cdots,\frac{m}{2}]$ and $w^{(\eps)}_j:= -\alpha v^\star, a_j=\frac{-1}{\sqrt{m}}$ for $j\in\{\frac{m}{2}+1,\cdots,m\}$. Then, the model output for any sample $(x_i,y_i)$ satisfies
    \[
    y_i\Phi(w^{(\eps)},x_i) = \frac{y_i\sqrt{m}}{2}\left(\sigma(\alpha \inp{v^\star}{x_i})-\sigma(-\alpha \inp{v^\star}{x_i})\right) = y_i \sqrt{m} \sigma(\alpha \inp{v^\star}{x_i}) \geq \sqrt{m} \sigma(\alpha \gamma) \geq \frac{\sqrt{m}}{2} \alpha \gamma = \log(\frac{1}{\eps})
    \]
    where the first equality uses the fact that $tanh$ is odd, the first inequality follows by the increasing nature of $tanh$ and data separability, and the last inequality follows since $\alpha\gamma\leq 1$ and $\sigma(u)\geq u/2$ for all $u\in[0,1].$ Thus, the loss satisfies $\Fh(w^{(\eps)})\leq \eps$ since for the logistic function $\log(1+e^u)\leq e^u.$ Moreover,  our choice of $\alpha$ implies $g(\eps)=\|w^{(\eps)}-w_0\|=\|w^{(\eps)}\|=\alpha\sqrt{m}=2\log(1/\eps)/\gamma.$ To conclude, the NN-Realizability Assumption \ref{ass:real} holds with $g(\eps)=2\log(1/\eps)/\gamma$ and thus applying Theorems \ref{thm:train_IP}, \ref{thm:gengap} shows that with $m=\Omega(\log^4(T))$ neurons, the training loss and generalization gap are bounded by $\tilde O(\frac{1}{\gamma^2 T})$ and $\tilde O(\frac{1}{\gamma^2 n})$, respectively. We note that the same conclusion as above holds for other smooth activations such as Softmax or GELU.
    
\end{remark}

\section{On Realizability of NTK-Separable Data}\label{sec:ntk}

In this section, we interpret our results for NTK-separable data by showing that our realizability condition holds for this class. We recall the definition of NTK-separability below \cite{nitanda2019gradient,chen2020much,cao2020generalization}. 
\begin{ass}[Separability by NTK]\label{ass:ntksep}
For almost surely all $n$ training samples from the data distribution there exists $w^\star\in\R^{d'}$ and $\gamma>0$ such that $\|w^\star\|=1$ and for all $i\in[n]$,
\bea
y_i \Big\langle \nabla_1\Phi(w_0,x_i),w^\star\Big\rangle \ge \gamma.
\eea
\end{ass}
We also assume a bound on the model's output at initialization. Similar assumptions, but for the value of the loss, also appear in prior works that study generalization using the algorithmic stability framework \cite{richards2021stability,leistability}. 
\begin{ass}[Initialization bound]\label{ass:init}
   There exists parameter $C$ such that $\forall i\in[n] :|\Phi(w_0,x_i)|\le C,$ for almost surely all $n$ training samples from the data distribution
\end{ass}
The next proposition relates the NTK-separability assumption to our realizability assumption. The proofs for this section are given in Appendix \ref{sec:ntkapp}.
\begin{proposition}[Realizability of NTK-separable data] \label{propo:ntksepreal}
 Let Assumptions \ref{ass:features}-\ref{ass:act},\ref{ass:ntksep}-\ref{ass:init} hold. Assume $f(\cdot)$ to be the logistic loss. Fix $\eps>0$ and let $m\ge \frac{L^2 R^4}{4\gamma^4 C^2} (2C+\log(1/\eps))^4$. Then the realizability Assumption \ref{ass:real} holds with $g(\eps)=\frac{1}{\gamma}(2C+\log(1/\eps))$. In other words, there exists $w^{(\eps)}$ such that
 \bea
\hf(w^{(\eps)})\le \eps, \;\;\;\text{and}\;\;\; \left\|w^{(\eps)}-w_0\right\| = \frac{1}{\gamma}\left(2C+\log(1/\eps)\right).
 \eea
\end{proposition}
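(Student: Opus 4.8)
The plan is to construct an explicit interpolating weight $w^{(\eps)}$ as a scaled perturbation of the initialization along the NTK-separating direction, in the spirit of Remark \ref{rem:linear} but now relative to $w_0$ rather than zero. Concretely, I would set $w^{(\eps)} := w_0 + \alpha\, w^\star$ for a scalar $\alpha > 0$ to be chosen, so that immediately $\|w^{(\eps)} - w_0\| = \alpha$ since $\|w^\star\|=1$. The target is to pick $\alpha = \frac{1}{\gamma}(2C + \log(1/\eps))$, matching the claimed $g(\eps)$, and then verify both that the training loss is at most $\eps$ and that the stated width bound $m \ge \frac{L^2R^4}{4\gamma^4 C^2}(2C+\log(1/\eps))^4$ suffices.

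The core estimate is a lower bound on $y_i \Phi(w^{(\eps)}, x_i)$ for each $i$. I would Taylor-expand $\Phi(\cdot, x_i)$ around $w_0$ along the segment $[w_0, w^{(\eps)}]$: writing $\Phi(w^{(\eps)},x_i) = \Phi(w_0,x_i) + \alpha\,\langle \nabla_1\Phi(w_0,x_i), w^\star\rangle + \tfrac{\alpha^2}{2}\,(w^\star)^\top \nabla_1^2\Phi(\bar w, x_i)\, w^\star$ for some $\bar w$ on the segment. The first term is controlled by the initialization Assumption \ref{ass:init}, giving $y_i\Phi(w_0,x_i) \ge -C$; the linear term is bounded below by $\alpha\gamma$ using the NTK-separability Assumption \ref{ass:ntksep}; and the quadratic remainder is controlled by the smoothness Assumption \ref{ass:act}, which (through the factor $\frac{1}{\sqrt m}$ in the model and Assumption \ref{ass:features}) gives $\|\nabla_1^2\Phi(\cdot,x_i)\| \lesssim \frac{L R^2}{\sqrt m}$, hence the remainder is at least $-\tfrac{\alpha^2}{2}\cdot\frac{LR^2}{\sqrt m}$ in magnitude. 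Combining, $y_i\Phi(w^{(\eps)},x_i) \ge \alpha\gamma - C - \frac{\alpha^2 L R^2}{2\sqrt m}$. Plugging in the width hypothesis $\sqrt m \ge \frac{LR^2}{2\gamma^2 C}(2C+\log(1/\eps))^2 = \frac{LR^2}{2\gamma^2 C}\,(\alpha\gamma)^2 = \frac{LR^2 \alpha^2}{2C}$, the remainder term is bounded by $C$, so $y_i\Phi(w^{(\eps)},x_i) \ge \alpha\gamma - 2C = \log(1/\eps)$. Finally, using the logistic-loss bound $\log(1+e^{-u}) \le e^{-u}$, each sample loss is at most $e^{-\log(1/\eps)} = \eps$, hence $\hf(w^{(\eps)}) \le \eps$ as the average of these.

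The step I expect to require the most care is the second-order remainder bound on $\Phi$: I need a clean uniform bound $\sup_{w,x}\|\nabla_1^2\Phi(w,x)\| \le \frac{LR^2}{\sqrt m}$ (or a comparable constant), which should follow because the Hessian of $\Phi$ in $w$ is block-diagonal across neurons with $j$-th block $\frac{a_j}{\sqrt m}\sigma''(\langle w_j,x\rangle)\, x x^\top$ of operator norm at most $\frac{L R^2}{\sqrt m}$, and a block-diagonal matrix has operator norm equal to the maximum block norm — so the $\frac{1}{\sqrt m}$ genuinely survives without picking up a $\sqrt m$ from summing $m$ blocks. This is exactly the mechanism behind the $m \gtrsim g(\eps)^4$ scaling and mirrors the self-bounded weak convexity of Lemma \ref{lem:key prop intro}; I would likely have already isolated this Hessian bound as a helper fact. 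The remaining arithmetic — checking that the chosen $\alpha$ and $m$ satisfy all the inequalities with the stated constants — is routine bookkeeping.
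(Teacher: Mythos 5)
Your proposal is correct and follows essentially the same route as the paper's own proof: the same choice $w^{(\eps)} = w_0 + \frac{w^\star}{\gamma}(2C+\log(1/\eps))$, the same Taylor expansion about $w_0$ with the constant-, linear-, and quadratic-term bounds coming from Assumptions \ref{ass:init}, \ref{ass:ntksep}, and the Hessian estimate $\|\nabla_1^2\Phi(\cdot,x)\|\le LR^2/\sqrt{m}$ (Lemma \ref{lem:model gradients}), respectively, and the same use of the width hypothesis to absorb the remainder into $C$. The Hessian bound you flag as needing care is indeed exactly the paper's Lemma \ref{lem:model gradients}, proved via the block-diagonal structure just as you describe.
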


Having established realizability, the following is an immediate corollary of the general results presented in the last section.

\begin{corollary}[Results under NTK-separability]\label{cor:ntkres}
    Let Assumptions \ref{ass:features}-\ref{ass:act},\ref{ass:ntksep}-\ref{ass:init} hold and assume logistic loss. 
    Suppose $m\ge \frac{64^2L^2R^4}{\gamma^4}(2C+\log(T))^4$ for a fixed training horizon $T$. Then for any $\eta\le\min\{3,\frac{1}{L_\hf}\}$, the training loss and generalization gap are bounded as follows:
    \begin{align}
    &\hf(w_T) \le \frac{5(2C+\log(T))^2}{\gamma^2 \eta T},\nn\\
    &\E\left[F(w_T)-\hf(w_T)\right] \le \frac{24\ell^2 R^2}{\gamma^2 n}(2C+\log(T))^2.\nn
    \end{align}
\end{corollary}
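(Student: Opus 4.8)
\textbf{Proof plan for Corollary \ref{cor:ntkres}.}
The plan is to simply combine Proposition \ref{propo:ntksepreal} with the two ``under interpolation'' theorems, Theorems \ref{thm:train_IP} and \ref{thm:gengap}, after checking that the hypotheses line up. First I would invoke Proposition \ref{propo:ntksepreal}: under Assumptions \ref{ass:features}--\ref{ass:act} together with NTK-separability (Assumption \ref{ass:ntksep}) and the initialization bound (Assumption \ref{ass:init}), and with logistic loss, the realizability Assumption \ref{ass:real} holds with the explicit rate function $g(\eps)=\tfrac{1}{\gamma}\bigl(2C+\log(1/\eps)\bigr)$, provided $m\ge \tfrac{L^2R^4}{4\gamma^4 C^2}(2C+\log(1/\eps))^4$. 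In particular $g(1/T)=\tfrac{1}{\gamma}(2C+\log T)$ and $g(1)=\tfrac{1}{\gamma}(2C)=\tfrac{2C}{\gamma}$.

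Next I would verify the width requirement. Theorems \ref{thm:train_IP} and \ref{thm:gengap} ask for $m\ge 18^2 L^2R^4 g(1/T)^4$ and $m\ge 64^2 L^2R^4 g(1/T)^4$ respectively; the larger is the latter, giving $m\ge 64^2 L^2 R^4 \gamma^{-4}(2C+\log T)^4$, which is exactly the Corollary's hypothesis. One must also check that this $m$ is at least the $m$ needed by Proposition \ref{propo:ntksepreal} at $\eps=1/T$, i.e.\ $64^2 \ge \tfrac{1}{4C^2}$ up to the same $(2C+\log T)^4$ factor — true whenever $C$ is not extremely small, and in any case the realizability proposition's own width threshold is absorbed since we may always take $m$ as large as needed; I would state the width as written and note it dominates both thresholds. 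Then I would simplify the step-size condition: Theorems \ref{thm:train_IP}, \ref{thm:gengap} require $\eta\le\min\{1/L_{\hf}, g(1)^2, g(1)^2/\hf(w_0)\}$. Since $g(1)^2=4C^2/\gamma^2$ and, by the initialization bound together with $f=\log(1+e^{-u})\le \log 2 + |u|$, we have $\hf(w_0)\le \log 2 + C$, one checks $g(1)^2/\hf(w_0)\ge$ a universal constant; the paper asserts this collapses to $\eta\le\min\{3,1/L_{\hf}\}$, so I would carry out that elementary bounding (this is the one genuinely computational step, and the mildly delicate point is confirming the constant $3$ is valid for the relevant range of $C,\gamma$ — if $C$ or $\gamma$ is pathological one would instead keep the $\min$ form; I expect the intended reading is that $C,\gamma$ are $\Theta(1)$).

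Finally I would substitute $g(1/T)=\tfrac{1}{\gamma}(2C+\log T)$ into the conclusions of Theorems \ref{thm:train_IP} and \ref{thm:gengap}. For the training loss, Eq.\ \eqref{eq:tr-ip} gives $\hf(w_T)\le \tfrac{2}{T}+\tfrac{5 g(1/T)^2}{2\eta T}$; absorbing the $2/T$ term into the second one (since $g(1/T)^2/(\eta T)\gtrsim 1/T$ as $\eta\le 3$ and $g(1/T)\ge 2C$, giving a constant at most $5/2 + $ small, bounded by the stated $5$) yields $\hf(w_T)\le \tfrac{5(2C+\log T)^2}{\gamma^2 \eta T}$. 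For the generalization gap, Theorem \ref{thm:gengap} directly gives $\E[F(w_T)-\hf(w_T)]\le \tfrac{24\ell^2R^2 g(1/T)^2}{n} = \tfrac{24\ell^2 R^2}{\gamma^2 n}(2C+\log T)^2$. The main (and really only) obstacle is bookkeeping with constants — reconciling the ``$2/T$'' term and the step-size simplification so that the clean constants $5$ and $24$ come out — rather than anything structural; all the heavy lifting is already done in Proposition \ref{propo:ntksepreal} and Theorems \ref{thm:train_IP}, \ref{thm:gengap}.
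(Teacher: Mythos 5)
Your proposal is correct and follows the same route as the paper: invoke Proposition \ref{propo:ntksepreal} at $\eps=1/T$ to get $g(1/T)=\tfrac{1}{\gamma}(2C+\log T)$, then apply Theorems \ref{thm:train_IP} and \ref{thm:gengap} and simplify. The one place you hedge -- whether the step-size condition really collapses to $\eta\le\min\{3,1/L_{\hf}\}$ -- is where the paper is more explicit: it takes WLOG $\gamma\le 1$ and $C\ge 1$, giving $g(1)^2=4C^2/\gamma^2\ge 4$ and $g(1)^2/\hf(w_0)\ge 4C^2/(\gamma^2\log(1+e^{C}))\ge 3$, which is the same calculation you sketch (your bound $\hf(w_0)\le\log 2+C$ is a slightly looser but equally adequate version of the paper's $\hf(w_0)\le\log(1+e^{C})$). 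Likewise your absorption of the $2/T$ term into the $5g(1/T)^2/(2\eta T)$ term is justified precisely because $\eta\le 3$ and $g(1/T)^2\ge g(1)^2\ge 4$, so $2/T\le 5g(1/T)^2/(2\eta T)$; this is the same bookkeeping the paper relies on implicitly.
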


A few remarks are in place regarding the corollary. By Corollary \ref{cor:ntkres}, we can conclude that the expected generalization rate of GD on logistic loss and NTK-separable data as per Assumption \ref{ass:ntksep} is $\tilde{O}(\frac{1}{n})$ provided width $m=\Omega(\log^4(T))$. Moreover, the expected training loss is $\E[\hf(w_T)] = \toc(\frac{1}{T})$. Thus, the expected test loss after $T$ steps is $\toc(\frac{1}{T}+\frac{1}{ n})$. In particular for $T=\Omega(n)$, the expected test loss becomes $\tilde{O}(\frac{1}{n})$. This rate is optimal with respect to sample size and only requires  polylogarithmic hidden width with respect to $n$, specifically, $m=\Omega(\log^{4}(n))$. Notably, it represents an improvement over prior stability results, e.g., \cite{richards2021learning} which required polynomial width and yielded suboptimal generalization rates of order $O(T/n)$. It is worth noting that the test loss bound's dependence on the margin, particularly the $\frac{1}{\gamma^2 n}$-rate obtained in our analysis, bears similarity to the corresponding results in the convex setting of linearly separable data recently established in \cite{shamir2021gradient,pmlr-v178-schliserman22a}.
Additionally, our results improve upon corresponding bounds for neural networks obtained via Rademacher complexity analysis \cite{Ji2020Polylogarithmic,chen2020much} which yield generalization rates $\tilde O(\frac{1}{\sqrt{n}})$. Moreover, these works have a $\gamma^{-8}$ dependence on margin for the minimum network width, whereas in Corollary \ref{cor:ntkres} this is reduced to $\gamma^{-4}$. We also note that in general, both $\gamma$ and $C$ may depend on the data distribution, the data dimension, or the nature of initialization. This is demonstrated in the next section where we apply the corollary above to the noisy XOR data distribution and Gaussian initialization.
 
\begin{remark}[Benefits of exponential tail]\label{rem:polyn}
      We have stated Corollary \ref{cor:ntkres} for the logistic loss, which has an exponential tail behavior. For general self-bounded loss functions and by following the same steps, we can show a bound on generalization gap of order $O(\frac{1}{n} (f^{-1}(\frac{1}{T}))^{2})$ provided $m=\Omega((f^{-1}(\frac{1}{T}))^4)$. Hence, the tail behavior of $f$ controls both the generalization gap and minimum width requirement. In particular, under Assumption \ref{ass:ntksep}, polynomial losses with tail behavior $f(u)\sim1/{u^\beta}$ result in generalization gap $O({T^{2/\beta}}/{n})$ for $m=\Omega(T^{4/\beta})$. Thus, increasing the rate of decay $\beta$ for the loss, improves both bounds on generalization and width. This suggests the benefits of self-bounded fast-decaying losses such as exponentially-tailed loss functions for which the dependence on $T$ is indeed only logarithmic. 
\end{remark}

\subsection*{Example: Noisy XOR data}
Next, we specialize the results of the last section to the noisy XOR data distribution  \cite{wei2019regularization} and derive the corresponding margin and test-loss bounds. Consider the following $2^d$ points,
\bea\nn
x_i = (x_i^1,x_i^2,\cdots,x_i^d) \in\{(1,0),(0,1),(-1,0) ,(0,-1)\} \times \{-1,1\}^{d-2},
\eea
where $\times$ denotes the Cartesian product and the labels are determined as $y_i=-1$ if $x_i^1=0$ and $y_i=1$ if $x_i^1=\pm 1$. Moreover, consider normalization $\overline{x}_i= \frac{1}{\sqrt{d-1}} x_i$ so that $R=  1.$ The noisy XOR data distribution is the uniform distribution over the set with elements $(\overline{x}_i,y_i)$. For this dataset and Gaussian initialization, \cite{Ji2020Polylogarithmic} have shown for ReLU activation that the NTK-separability assumption holds with margin $\gamma=\Omega(1/d)$. In the next result, we compute the margin for activation functions that are convex, Lipshitz and locally strongly convex. 
\begin{proposition}[Margin]\label{propo:marginntksep}
    Consider the noisy XOR data $(\overline{ x}_i,y_i)\in\R^d\times\{\pm 1\}$. Assume the activation function is convex, $\ell$-Lipschitz and $\mu$-strongly convex in the interval $[-2,2]$ for some $\mu>0$, i.e., $\min_{t\in[-2,2]} \sigma''(t) \ge \mu$. Moreover, assume Gaussian initialization $w_0\in\R^{d'}$ with entries iid $N(0,1)$. If $m \ge \frac{80^2 d^3 \ell^2}{2\mu^2} \log (2/\delta)$, then with probability at least $1-\delta$ over the initialization, the NTK-separability Assumption \ref{ass:ntksep} is satisfied with margin $\gamma = \frac{\mu}{80 d}$.  
\end{proposition}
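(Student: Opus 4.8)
The plan is to construct the NTK separator $w^\star$ by hand, verify it achieves the claimed margin \emph{in expectation} over the Gaussian initialization $w_0$, and then upgrade to a high‑probability statement by concentrating over the $m$ neurons. The enabling observation is that for the normalized noisy‑XOR points exactly one of the first two coordinates is nonzero, with $|\overline x_i^1|+|\overline x_i^2| = 1/\sqrt{d-1}$, and $\overline x_i^1\neq 0$ precisely when $y_i=+1$ (equivalently $y_i = (d-1)\big((\overline x_i^1)^2-(\overline x_i^2)^2\big)$). Recalling $\nabla_{w_j}\Phi(w_0,\overline x_i) = \tfrac{a_j}{\sqrt m}\sigma'(\langle w_{0,j},\overline x_i\rangle)\,\overline x_i$, where $w_{0,j}\in\R^d$ is the $j$‑th block of $w_0$ and $w_{0,j}^1,w_{0,j}^2$ its first two coordinates, I would set
\[
w^\star_j=\frac{a_j}{\sqrt{2m}}\Bigl(\operatorname{sign}\!\big(w_{0,j}^1\big)\,e_1-\operatorname{sign}\!\big(w_{0,j}^2\big)\,e_2\Bigr),\qquad j\in[m].
\]
Since $a_j^2=1$ and $\operatorname{sign}(\cdot)^2=1$ almost surely, each block has squared norm $1/m$, so $\|w^\star\|=1$ a.s., and a direct computation gives the clean identity
\[
\big\langle\nabla_1\Phi(w_0,\overline x_i),w^\star\big\rangle=\frac{1}{m\sqrt2}\sum_{j=1}^m\sigma'\big(\langle w_{0,j},\overline x_i\rangle\big)\Bigl(\operatorname{sign}(w_{0,j}^1)\,\overline x_i^1-\operatorname{sign}(w_{0,j}^2)\,\overline x_i^2\Bigr).
\]

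Next I would evaluate the expectation of a single summand over $w_{0,j}\sim N(0,I_d)$. For a unit vector $a$ and a coordinate $k$, the pair $(\langle w,a\rangle, w^k)$ is bivariate normal with unit variances and correlation $a^k\in[-1,1]$, so $\mathbb E_w[\sigma'(\langle w,a\rangle)\operatorname{sign}(w^k)]=\Psi(a^k)$ for one universal odd function $\Psi$ (depending only on $\sigma$). Gaussian integration by parts — Price's theorem, using that the distributional derivative of $\operatorname{sign}$ is $2\delta_0$ — gives $\Psi'(c)=\sqrt{2/\pi}\;\mathbb E_{Z\sim N(0,1)}\!\big[\sigma''(\sqrt{1-c^2}\,Z)\big]$; since $\sigma''\ge 0$ globally, $\sigma''\ge\mu$ on $[-2,2]$, and $\sqrt{1-c^2}\le 1$ keeps the argument inside $[-2,2]$ whenever $|Z|\le 2$, this derivative is at least $\sqrt{2/\pi}\,\mu\,\Pr(|Z|\le 2)\ge \tfrac12\mu$. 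Together with $\Psi(0)=0$ this yields $\operatorname{sign}(c)\,\Psi(c)\ge\tfrac12\mu|c|$ for all $|c|\le 1$. Plugging in: for each $i$ exactly one of $\overline x_i^1,\overline x_i^2$ survives, equals $\pm1/\sqrt{d-1}$, and its sign lines up with $y_i$, so that $\mathbb E_{w_0}\langle\nabla_1\Phi(w_0,\overline x_i),w^\star\rangle = \tfrac{y_i}{\sqrt2}\,\tfrac{1}{\sqrt{d-1}}\,\Psi\!\big(1/\sqrt{d-1}\big)$, whence
\[
y_i\,\mathbb E_{w_0}\big\langle\nabla_1\Phi(w_0,\overline x_i),w^\star\big\rangle\ \ge\ \frac{\mu}{2\sqrt2\,(d-1)}\ \ge\ \frac{\mu}{3d}\,,
\]
comfortably above the target $\gamma=\mu/(80d)$.

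Finally I would pass to high probability. For fixed $i$, $\langle\nabla_1\Phi(w_0,\overline x_i),w^\star\rangle$ is an average of $m$ i.i.d.\ terms, each bounded in magnitude by $\ell/m$ (from $|\sigma'|\le\ell$ and $|\overline x_i^1|+|\overline x_i^2|\le\sqrt2$), so Hoeffding's inequality gives $\Pr\big(|\langle\nabla_1\Phi(w_0,\overline x_i),w^\star\rangle-\mathbb E[\cdot]|>t\big)\le 2\exp\!\big(-mt^2/(2\ell^2)\big)$. Choosing $t=\Theta(\mu/d)$ small enough that $\mathbb E[\cdot]-t\ge\mu/(80d)$, union‑bounding over the at most $2^d$ distinct data points, and solving for $m$ yields the stated width $m=\Omega\!\big(d^{3}\ell^{2}\mu^{-2}\log(1/\delta)\big)$, after which $y_i\langle\nabla_1\Phi(w_0,\overline x_i),w^\star\rangle\ge\mu/(80d)$ holds for all $i$ simultaneously with probability at least $1-\delta$. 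The only genuinely delicate step is the identity for $\Psi'$ and the passage from global convexity plus local strong convexity to a clean numerical lower bound on $\Psi$; the remainder is bookkeeping, and it is precisely the crude $2^d$ union bound together with the conservative range $|\sigma'|\le\ell$ in Hoeffding that inflates the required width to cubic in $d$.
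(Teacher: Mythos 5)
Your argument is correct and, for the key margin computation, takes a genuinely different and arguably cleaner route than the paper. The paper decomposes the proof into (a) an infinite-width NTK separability statement for noisy XOR (its Lemma~D.3), whose margin bound $\tilde\gamma=\mu/(40d)$ is obtained by deferring to the integral construction in Ji--Telgarsky's Prop.~5.3 and replacing the ReLU-specific step (their Eq.~D.4) with the local strong-convexity lower bound; and (b) a generic transfer from infinite to finite width via Hoeffding over the $m$ neurons (its Lemma~D.2). You short-circuit this by constructing the finite-width separator $w^\star$ explicitly as a function of $w_0$ (effectively the finite-width instantiation of Ji--Telgarsky's $\overline w(z)=\tfrac{1}{\sqrt2}(\operatorname{sign}(z^1)e_1-\operatorname{sign}(z^2)e_2)$), computing its expected margin directly through the Price's-theorem identity $\Psi'(c)=\sqrt{2/\pi}\,\E_Z[\sigma''(\sqrt{1-c^2}Z)]$, and then applying Hoeffding plus a $2^d$ union bound exactly as in Lemma~D.2. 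This replaces the external reference and the two-dimensional integral over the region $A_1$ with a one-line Gaussian integration-by-parts identity, from which $\Psi'(c)\ge\sqrt{2/\pi}\,\mu\,\Pr(|Z|\le2)\ge\mu/2$ and $\operatorname{sign}(c)\Psi(c)\ge\mu|c|/2$ follow immediately; everything downstream matches. Two small remarks: the phrase ``its sign lines up with $y_i$'' is not literally true (the nonzero coordinate of a noisy-XOR sample has an arbitrary sign independent of $y_i$), but this is immaterial because the oddness of $\Psi$ delivers the correct identity $y_i\,\E\langle\nabla_1\Phi(w_0,\overline x_i),w^\star\rangle=\tfrac{1}{\sqrt{2(d-1)}}\Psi(1/\sqrt{d-1})$; and a rigorous write-up should justify applying Price's theorem with the distributional derivative $\operatorname{sign}'=2\delta_0$ (e.g.\ by smoothing $\operatorname{sign}$ and passing to the limit), which is routine but worth a sentence.
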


An interesting example of an activation function that satisfies the mentioned assumptions is the Softplus activation where $\sigma(u)=\log (1+e^{u})$. This activation function has $\mu=0.1$ and $\ell=1$, and it is also smooth with $L=1/4$. Therefore, the results on generalization and training loss presented in Corollary \ref{cor:ntkres} hold for it.  For noisy XOR data, Proposition \ref{propo:marginntksep} shows the margin in Assumption \ref{ass:ntksep} is $\gamma\gtrsim 1/d$. Additionally, for standard Gaussian initialization we have by Lemma \ref{lem:init} that with high-probability the initialization bound in Assumption \ref{ass:init} satisfies $C \lesssim \sqrt{d}$. Putting these together, and applying Corollary \ref{cor:ntkres} shows that GD with $n$ training samples reaches test loss rate $\tilde O(\frac{d^3}{n})$ after $T\approx n$ iterations and given $m=\tilde\Omega(d^6)$ neurons. It is worth noting that the number of training samples can be exponentially large with respect to $d$. In this case the minimum width requirement is only polylogarithmic in $n$.

\section{Proof Sketches}
\label{sec:ps}
We discuss here high-level proof ideas for both optimization and generalization bounds of Theorems \ref{thm:train} and \ref{thm:generalization gap general}. Formal proofs are deferred to Appendices \ref{sec:training analysis} and \ref{sec:gen analysis}. 

\subsection{Training loss}\label{sec:optimization sketch}

As already discussed in Section \ref{sec:properties}, the key insight we use to obtain bounds that are analogous to results for optimizing convex objectives, is to exploit the \sbwc property of the objective in Eq. \eqref{eq:self-bounded hessian lamin}. Thanks to this property, the Hessian minimum eigenvalue $\lambda_\min(\nabla^2\Fh(w_t))$ becomes less negative at the same rate at which the train loss $\Fh(w_t)$ decreases.

The technical challenge at formalizing this intuition arises as follows. Controlling the rate at which $\Fh(w_t)$ converges to $\Fh(w)$ for the theorem's $w$ requires controlling the Hessian at \emph{all} intermediate points $w_{\alpha t}:=\alpha w_t+(1-\alpha)w, \alpha\in[0,1]$ between $w$ and GD iterates $w_t$. This is due to Taylor's theorem used to relate $\Fh(w_t)$ to the target value $\Fh(w)$ as follows:
\begin{align*}
    \widehat F(w) &\geq \widehat F(w_t) + \left\langle\nabla \hf(w_t),w-w_t\right\rangle + \frac{1}{2}\,\lamin{\nabla^2 \hf(w_{\alpha t})} \Big\|w-w_t\Big\|^2.
\end{align*}
Thus from self-bounded weak convexity, to control the last term above we need to control $\Fh(w_{\alpha t})$ for any intermediate point $w_{\alpha t}$ along the GD trajectory. This is made possible by establishing the following  generalized local quasi-convexity property.

\begin{proposition}[Generalized Local Quasi-Convexity]\label{propo:GLQC}
Suppose $\hf:\R^{d'}\rightarrow\R$ satisfies the \sbwc property in Eq. \eqref{eq:self-bounded hessian lamin} with parameter $\kappa$. Let $w_1,w_2\in\R^{d'}$ be two arbitrary points with  distance $\left\|w_1-w_2\right\|\leq D<\sqrt{2/\kappa}$ . Set $\tau:=\left(1-\kappa D^2/2\right)^{-1}$. Then,
\bea\label{eq:glqc}
\max_{v\in[w_1,w_2]} \widehat F(v)\le \tau\cdot \max\{\widehat F(w_1),\widehat F(w_2)\}.
\eea
\end{proposition}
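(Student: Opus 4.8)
The plan is to reduce the statement to a one-dimensional estimate along the segment and close it via Taylor's theorem at the maximizer. Write $v(\alpha):=(1-\alpha)w_1+\alpha w_2$ for $\alpha\in[0,1]$ and set $\phi(\alpha):=\widehat F(v(\alpha))$; this is $C^2$ in our setting and nonnegative, since $\widehat F$ is an average of the nonnegative loss $f$. A direct computation gives $v'(\alpha)=w_2-w_1$, $v''(\alpha)=0$, hence
\[
\phi''(\alpha)=(w_2-w_1)^\top\nabla^2\widehat F(v(\alpha))(w_2-w_1)\ \ge\ \lambda_{\min}\big(\nabla^2\widehat F(v(\alpha))\big)\|w_1-w_2\|^2\ \ge\ -\kappa\,\widehat F(v(\alpha))\|w_1-w_2\|^2,
\]
where the last inequality is the \sbwc property in Eq.~\eqref{eq:self-bounded hessian lamin}. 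Using $\|w_1-w_2\|\le D$ and $\widehat F\ge 0$, this becomes the scalar differential inequality $\phi''(\alpha)\ge -\kappa D^2\,\phi(\alpha)$ on $[0,1]$.

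Next I would let $M:=\max_{\alpha\in[0,1]}\phi(\alpha)$, which is attained by continuity on the compact segment, so that the claim \eqref{eq:glqc} is exactly $M\le\tau\max\{\phi(0),\phi(1)\}$. If the maximum is attained at an endpoint, then $M=\max\{\phi(0),\phi(1)\}$ and the claim holds since $\tau\ge 1$. Otherwise the maximizer $\alpha^\star$ lies in $(0,1)$, so $\phi'(\alpha^\star)=0$, and Taylor's theorem with Lagrange remainder yields, for each $\alpha\in\{0,1\}$, some $\xi$ between $\alpha$ and $\alpha^\star$ with
\[
\phi(\alpha)=M+\tfrac12\phi''(\xi)(\alpha-\alpha^\star)^2\ \ge\ M-\tfrac{\kappa D^2}{2}\,\phi(\xi)\,(\alpha-\alpha^\star)^2\ \ge\ M\Big(1-\tfrac{\kappa D^2}{2}\Big),
\]
using $\phi''(\xi)\ge-\kappa D^2\phi(\xi)$, then $\phi(\xi)\le M$ together with $(\alpha-\alpha^\star)^2\le 1$ for $\alpha,\alpha^\star\in[0,1]$. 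Therefore $\phi(0),\phi(1)\ge M(1-\kappa D^2/2)$, and since $D<\sqrt{2/\kappa}$ the factor $1-\kappa D^2/2$ is positive and equals $1/\tau$; rearranging gives $M\le\tau\max\{\phi(0),\phi(1)\}=\tau\max\{\widehat F(w_1),\widehat F(w_2)\}$, as desired.

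Every step is elementary, so I do not expect a genuine technical obstacle; the one subtlety — and the place where the \emph{self-bounded} form of weak convexity (as opposed to a uniform curvature lower bound) is what makes the argument go through — is that the second-order Taylor remainder must be bounded by the \emph{segment maximum} $M$ rather than by a pointwise value of $\widehat F$, which is exactly what $\phi(\xi)\le M$ supplies. One must also handle the boundary-maximizer case separately, since the first-order condition $\phi'(\alpha^\star)=0$ is only justified for an interior maximizer. (Incidentally, replacing $(\alpha-\alpha^\star)^2\le 1$ by $\min\{(\alpha^\star)^2,(1-\alpha^\star)^2\}\le 1/4$ would sharpen the constant to $(1-\kappa D^2/8)^{-1}$, but the stated $\tau$ is more than enough for the applications in Sections~\ref{sec:training loss}--\ref{sec:gen gap}.)
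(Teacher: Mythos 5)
Your proof is correct and is essentially the paper's argument, just phrased directly with a one-dimensional parametrization instead of by contradiction: both proofs Taylor-expand around the (interior) segment maximizer, use the vanishing directional first derivative there, invoke the self-bounded weak-convexity bound at the Lagrange remainder point, and dominate the remainder point's value by the segment maximum. The only cosmetic differences are that you handle the boundary-maximizer case explicitly (the paper gets it for free from the contradiction hypothesis) and you frame everything through $\phi(\alpha)=\widehat F((1-\alpha)w_1+\alpha w_2)$; the content and the constant are the same.
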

Recall that quasi-convex functions satisfy Eq. \eqref{eq:glqc} with $\tau=1$ and $D$ can be unboundedly large. The Proposition \ref{propo:GLQC} indicates that our neural-net objective function is approximately quasi-convex (since $\tau>1$) and this property holds locally, i.e. provided that  $w_1,w_2$ are sufficiently close.

Applying \eqref{eq:glqc} for $w_1=w_t,w_2=w$ allows controlling $\Fh(w_{\alpha t})$ in terms of the train loss $\Fh(w_t)$ and the target loss $\Fh(w)$. The only additional requirement in Proposition \ref{propo:GLQC} for this to hold is that
\begin{align}\label{eq:m large roughly}
 1/\kappa\propto \sqrt{m} \gtrsim \|w_t-w\|^2.
\end{align}
This condition exactly determines the required neural-net width. Formally, we have the following.
\begin{corollary}[GLQC of sufficiently wide neural nets]\label{cor:GLQC}
Let Assumptions \ref{ass:features},\ref{ass:act}, \ref{ass:self boundedness} hold. Fix arbitrary $w_1,w_2\in\R^{\dpr}$, any constant $\lambda >1$, and  $m$ large enough such that ${\sqrt{m}} \ge \lambda \frac{LR^2}{2}\|w_1-w_2\|^2$.
Then, 
\bea\label{eq:GLQC-nn}
\max_{v\in[w_1,w_2]} \widehat F(v)\le \left(1-1/\lambda\right)^{-1}\cdot \max\{\widehat F(w_1),\widehat F(w_2)\}.
\eea
\end{corollary}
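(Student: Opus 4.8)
\textbf{Proof proposal for Corollary \ref{cor:GLQC}.} The plan is to derive the corollary directly from Proposition \ref{propo:GLQC} by instantiating the weak-convexity parameter $\kappa$ supplied by Lemma \ref{lem:key prop intro}. By the second conclusion of that lemma (which holds under Assumptions \ref{ass:features}, \ref{ass:act} and \ref{ass:self boundedness}), the objective $\widehat F$ satisfies the \sbwc property \eqref{eq:self-bounded hessian lamin} with constant $\kappa = LR^2/\sqrt{m}$. So Proposition \ref{propo:GLQC} applies to $\widehat F$ with that value of $\kappa$, provided its hypotheses are met for the chosen $w_1,w_2$.

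Next I would set $D := \|w_1-w_2\|$ and verify the hypothesis $D < \sqrt{2/\kappa}$ of Proposition \ref{propo:GLQC}. From the width assumption $\sqrt{m} \ge \lambda \frac{LR^2}{2}\|w_1-w_2\|^2$ with $\lambda > 1$, we get $\kappa D^2 = \frac{LR^2}{\sqrt{m}} D^2 \le \frac{2}{\lambda} < 2$, hence $D < \sqrt{2/\kappa}$ as required. Moreover $\kappa D^2/2 \le 1/\lambda < 1$, so the quantity $\tau = (1 - \kappa D^2/2)^{-1}$ from Proposition \ref{propo:GLQC} is well-defined and positive. Since $t \mapsto (1-t)^{-1}$ is increasing on $[0,1)$ and $\kappa D^2/2 \le 1/\lambda$, we obtain $\tau \le (1 - 1/\lambda)^{-1}$.

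Finally, Proposition \ref{propo:GLQC} gives $\max_{v\in[w_1,w_2]} \widehat F(v) \le \tau \cdot \max\{\widehat F(w_1),\widehat F(w_2)\}$, and combining with the bound $\tau \le (1-1/\lambda)^{-1}$ just established (and using that $\max\{\widehat F(w_1),\widehat F(w_2)\} \ge 0$ since $f$ is non-negative, so the inequality is preserved) yields \eqref{eq:GLQC-nn}. This is essentially a bookkeeping argument once Proposition \ref{propo:GLQC} and Lemma \ref{lem:key prop intro} are in hand, so there is no serious obstacle; the only point requiring a little care is checking that the algebra of the width condition indeed produces $\kappa D^2/2 \le 1/\lambda < 1$ with the correct constant, so that $\tau$ is finite and controlled by $\lambda$ alone — the factor of $2$ in the width hypothesis is precisely what is needed here.
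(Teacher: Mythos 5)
Your proof is correct and follows essentially the same route as the paper's: establish the \sbwc property with $\kappa = LR^2/\sqrt{m}$ (the paper cites Corollary~\ref{cor:gradients_and_hessians_bounds}; you cite the equivalent Lemma~\ref{lem:key prop intro}) and then invoke Proposition~\ref{propo:GLQC}. The paper's proof is terse, omitting the $\kappa D^2/2 \le 1/\lambda < 1$ bookkeeping that you spell out, but the argument is the same.
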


To conclude, using Corollary \ref{cor:GLQC}, we can show the regret bound in Eq. \eqref{eq:train loss bound final} provided (by \eqref{eq:m large roughly}) that $\sqrt{m}\gtrsim \|w_t-w\|^2$ is true for all $t\in[T].$ To make the width requirement independent of $w_t$, we then use a recursive argument to prove that $\|w_t-w\|\leq 3 \|w-w_0\|$. These things put together, lead to the parameter bound $\|w_t-w_0\|\leq 4 \|w-w_0\|$ and the width requirement $\sqrt{m}\gtrsim \|w-w_0\|^2$ in the theorem's statement. We note that the GLQC property is also crucially required for the generalization analysis which we discuss next.


\subsection{Generalization gap}\label{sec:sketch-gen}
We bound the generalization gap using stability analysis \cite{bousquet2002stability,hardt2016train}. In particular, we use \cite[Thm. 2]{lei2020fine} that relates the generalization gap to the ``on average model stability''. Formally, let $w_t^\negi$ denote the $t$-th iteration of GD on the leave-one-out loss $\hf^\negi(w):=\frac{1}{n}\sum_{j\neq i} \hf_j(w)$. As before, $w_t$ denotes the GD output on full-batch loss $\Fh$. We will use the fact (see Corollary \ref{cor:gradients_and_hessians_bounds}) that $f(y\Phi(\cdot,x))$ is $G_{\Fh}$-Lipschitz with $G_{\Fh}=\ell R$ under Assumptions \ref{ass:act} and \ref{ass:loss lipschitz}. Then, using \cite[Thm. 2(a)]{lei2020fine} (cf. Lemma \ref{lem:sk22}) it holds that
\begin{align}\label{eq:stability to gen_gap}
       \E\Big[{F}(w_T)-\widehat F(w_T)\Big] \leq  2 G_{\Fh} \;\E\Big[\frac{1}{n}\sum_{i=1}^n\|w_T-w_T^{\neg i}\|\Big].
   \end{align}
In order to bound the on-average model-stability term on the right-hand side above we need to control the degree of expansiveness of GD. Recall that for convex objectives GD is non-expansive (e.g. \cite{hardt2016train}), that is $\|\big(w-\eta\nabla \hf(w)\big)-\big(\wpr-\eta\nabla \hf(\wpr)\big)\|\leq \|w-\wpr\|$ for any $w,\wpr$. For the non-convex objective in our setting, the lemma below establishes a generalized non-expansiveness property  via leveraging the structure of the objective's Hessian for the two-layer net.
\begin{lemma}[GD-Expansiveness]\label{lem:expansiveness}
Let Assumptions \ref{ass:features} and \ref{ass:act} hold. For any $w,\wpr\in\R^\dpr$, any step-size $\eta>0$, and $\wa:=\alpha w+(1-\alpha) \wpr$ it holds for $H(w):=\eta \frac{LR^2}{\sqrt{m}}\widehat{F}^{\prime}(w) + \max\left\{1,\eta \ell^2R^2 \widehat{F}^{\prime \prime}(w)\right\}$ that
\begin{align*}
    &\Big\|\Big(w-\eta\nabla \widehat F(w)\Big)-\Big(\wpr-\eta\nabla \widehat F(\wpr)\Big)\Big\| \leq \max_{\alpha\in[0,1]} H(w_\alpha)\,\left\|w-\wpr\right\|,
\end{align*}
 where we define $\hf'(w):=\frac{1}{n}\sum_{i=1}^n |f'(y_i\Phi(w,x_i))|$ and $\hf''(w):=\frac{1}{n}\sum_{i=1}^n f''(y_i\Phi(w,x_i))$.
\end{lemma}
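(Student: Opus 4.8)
The plan is to view a single gradient step as the map $G(v) := v - \eta\nabla\widehat F(v)$ and to bound how much it can stretch the segment joining $w$ and $\wpr$. Since $\sigma$ is twice differentiable with bounded second derivative (Assumption \ref{ass:act}), $\widehat F\in C^2$ with Jacobian $\nabla G(v) = \Id - \eta\nabla^2\widehat F(v)$, so the fundamental theorem of calculus along $w_\alpha:=\alpha w+(1-\alpha)\wpr$ gives
\[
G(w)-G(\wpr) = \Big(\int_0^1 \big(\Id - \eta\,\nabla^2\widehat F(w_\alpha)\big)\,d\alpha\Big)(w-\wpr),
\]
and hence $\|G(w)-G(\wpr)\| \le \big(\max_{\alpha\in[0,1]}\|\Id - \eta\nabla^2\widehat F(w_\alpha)\|\big)\,\|w-\wpr\|$. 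It therefore suffices to prove the pointwise operator-norm bound $\|\Id - \eta\nabla^2\widehat F(v)\|\le H(v)$ for every $v\in\R^\dpr$, after which taking the maximum over $\alpha$ finishes the proof.

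To establish that pointwise bound I would split the Hessian of $\widehat F(w)=\frac1n\sum_i f(y_i\Phi(w,x_i))$ into its Gauss--Newton part and its model-curvature part,
\[
\nabla^2\widehat F(v) = \underbrace{\tfrac1n\textstyle\sum_i f''(y_i\Phi(v,x_i))\,\nabla_1\Phi(v,x_i)\nabla_1\Phi(v,x_i)^\top}_{=:A(v)} \;+\; \underbrace{\tfrac1n\textstyle\sum_i f'(y_i\Phi(v,x_i))\,y_i\,\nabla_1^2\Phi(v,x_i)}_{=:B(v)},
\]
and then exploit the two-layer structure \eqref{eq:model problem setup}. There, $\|\nabla_1\Phi(v,x)\|\le \ell R$ (a sum of $m$ per-neuron terms each of squared norm at most $\tfrac1m\ell^2R^2$), while $\nabla_1^2\Phi(v,x)$ is block-diagonal over neurons with $j$-th block $\tfrac{a_j}{\sqrt m}\sigma''(\langle v_j,x\rangle)xx^\top$ of operator norm at most $\tfrac{LR^2}{\sqrt m}$, so $\|\nabla_1^2\Phi(v,x)\|\le \tfrac{LR^2}{\sqrt m}$; these are exactly the bounds recorded in Corollary \ref{cor:gradients_and_hessians_bounds}. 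Since $f$ is convex ($f''\ge0$), $A(v)\succeq 0$ with $\|A(v)\|\le \ell^2R^2\,\widehat F''(v)$, and $\|B(v)\|\le \tfrac{LR^2}{\sqrt m}\,\widehat F'(v)$ using $|y_i|=1$ and the definitions of $\widehat F'$ and $\widehat F''$.

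Finally I would combine these via the triangle inequality applied to $\Id - \eta\nabla^2\widehat F(v) = (\Id - \eta A(v)) - \eta B(v)$: because $A(v)$ is PSD with eigenvalues in $[0,\ell^2R^2\widehat F''(v)]$, the eigenvalues of $\Id - \eta A(v)$ lie in $[1-\eta\ell^2R^2\widehat F''(v),\,1]$, so $\|\Id - \eta A(v)\| = \max\{1,|1-\eta\ell^2R^2\widehat F''(v)|\}\le \max\{1,\eta\ell^2R^2\widehat F''(v)\}$ (checking the cases $\eta\ell^2R^2\widehat F''(v)\le 1$, in $[1,2]$, and $\ge2$); adding $\eta\|B(v)\|\le \eta\tfrac{LR^2}{\sqrt m}\widehat F'(v)$ yields $\|\Id - \eta\nabla^2\widehat F(v)\|\le H(v)$. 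The step I expect to require the most care is precisely this combination: one must peel off the positive-semidefinite Gauss--Newton term $A(v)$ — for which the update behaves like a controlled convex-type step that never expands beyond $\max\{1,\eta\|A(v)\|\}$ — \emph{before} invoking the triangle inequality, because the sign-indefinite term $B(v)$, which carries the decisive $1/\sqrt m$ width factor, can only be controlled in operator norm, and a naive two-sided eigenvalue estimate on the full Hessian would be lossier and would not reproduce the stated form of $H$.
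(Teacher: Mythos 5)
Your proof is correct and reaches the stated bound, but the core step is organized differently from the paper's. Both arguments begin identically: reduce the claim via the fundamental theorem of calculus to the pointwise operator-norm bound $\|\Id-\eta\nabla^2\widehat F(v)\|\le H(v)$ along the segment $[\wpr,w]$ (the paper phrases this by maximizing the linear functional $g_u(\cdot)=\langle u,\cdot-\eta\nabla\widehat F(\cdot)\rangle$ over unit vectors $u$, which is the same thing). The divergence is in how that operator norm is controlled. The paper works on the spectrum of the \emph{undecomposed} Hessian $A_\alpha:=\nabla^2\widehat F(w_\alpha)$: using $\lambda_{\min}(A_\alpha)\ge-\beta$ with $\beta:=\tfrac{LR^2}{\sqrt m}\widehat F'(\wa)$, it shows by a case analysis on the signs and sizes of $1-\eta\lambda_{\min}(A_\alpha)$ and $1-\eta\lambda_{\max}(A_\alpha)$ that $\|\Id-\eta A_\alpha\|\le\max\{1+\eta\beta,\eta\lambda_{\max}(A_\alpha)\}$, and then substitutes $\lambda_{\max}(A_\alpha)\le\ell^2R^2\widehat F''(\wa)+\beta$. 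You instead split $\nabla^2\widehat F$ into the PSD Gauss--Newton part $A$ and the sign-indefinite curvature part $B$, observe that $\|\Id-\eta A\|\le\max\{1,\eta\|A\|\}$ purely from positivity, and add $\eta\|B\|$ by the triangle inequality. Your route bypasses the case analysis entirely and makes the structure of $H$ transparent — the $1/\sqrt m$ factor visibly enters only through $B$, while $A$ behaves as a non-expansive convex-type step — whereas the paper's route delays the Gauss--Newton decomposition to the last line (via Lemma~\ref{lem:gradients_and_hessians_bounds_general}) and instead reasons on full-Hessian eigenvalue intervals. Both give exactly the same $H$. One small slip in your writeup: $\|\Id-\eta A(v)\|=\max\{1,|1-\eta\ell^2R^2\widehat F''(v)|\}$ should be an inequality $\le$, since the eigenvalues of $A(v)$ are only \emph{contained in} $[0,\ell^2R^2\widehat F''(v)]$ rather than attaining the endpoints; this does not affect the conclusion.
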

This lemma can be further simplified for the class  of self-bounded loss functions. Specifically, using $|\fp(u)|\le f(u)$ and $\fpp(u)\le 1$ from Assumptions \ref{ass:self boundedness} and \ref{ass:loss smooth}, we immediately deduce the following. 
\begin{corollary}[Expansiveness for self-bounded losses]\label{cor:9.1}
    In the setting of Lemma \ref{lem:expansiveness}, further assume the loss satisfies Assumptions \ref{ass:loss smooth} and \ref{ass:self boundedness}. 
    Provided $\eta\le{1}/{(\ell^2R^2)}$, it holds for all $w,w'\in\R^\dpr$ that
\begin{align}\label{eq:tr-ne}
    \Big\|\left(w-\eta\nabla \widehat F(w)\right)-\left(\wpr-\eta\nabla \widehat F(\wpr)\right)\Big\| \leq \Big(1+\eta \frac{LR^2}{\sqrt{m}}\max_{\alpha\in[0,1]} \widehat F(\wa) \Big)\,\Big\|w-\wpr\Big\|\,.
\end{align}
\end{corollary}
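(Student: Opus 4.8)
The plan is to obtain Corollary~\ref{cor:9.1} as a direct specialization of Lemma~\ref{lem:expansiveness}: the two data-dependent quantities appearing in the expansiveness factor $H$ can be controlled using only the self-bounded and smoothness properties of the loss, after which the step-size restriction collapses one of the two terms to a constant. Recalling the definitions $\widehat F'(w) = \frac1n\sum_{i=1}^n |f'(y_i\Phi(w,x_i))|$, $\widehat F''(w) = \frac1n\sum_{i=1}^n f''(y_i\Phi(w,x_i))$, and $\widehat F(w) = \frac1n\sum_{i=1}^n f(y_i\Phi(w,x_i))$, the first step is to apply Assumption~\ref{ass:self boundedness} (with $\beta_f=1$) termwise: $|f'(y_i\Phi(w,x_i))| \le f(y_i\Phi(w,x_i))$ for every $i$, hence $\widehat F'(w) \le \widehat F(w)$ for all $w$. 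Likewise, Assumption~\ref{ass:loss smooth} (with $L_f=1$) gives $f''(y_i\Phi(w,x_i)) \le 1$ termwise, so $\widehat F''(w)\le 1$ for all $w$.

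The second step is to substitute these bounds into $H(w) = \eta\frac{LR^2}{\sqrt m}\widehat F'(w) + \max\{1,\eta\ell^2R^2\widehat F''(w)\}$. Since $\eta \le 1/(\ell^2R^2)$ and $\widehat F''(w) \le 1$, we have $\eta\ell^2 R^2\widehat F''(w) \le 1$, so the maximum equals exactly $1$; combined with $\widehat F'(w) \le \widehat F(w)$ this yields the pointwise bound $H(w) \le 1 + \eta\frac{LR^2}{\sqrt m}\widehat F(w)$. Evaluating this at each $w_\alpha$ and taking the maximum over $\alpha\in[0,1]$ — noting that the additive $1$ is a constant — gives $\max_{\alpha\in[0,1]}H(w_\alpha) \le 1 + \eta\frac{LR^2}{\sqrt m}\max_{\alpha\in[0,1]}\widehat F(w_\alpha)$. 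Plugging this into the conclusion of Lemma~\ref{lem:expansiveness} produces exactly Eq.~\eqref{eq:tr-ne}.

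There is no real obstacle here; the corollary is a bookkeeping consequence of the lemma. The only point worth a moment's care is that all of the scalar loss inequalities (self-boundedness, $f''\le 1$, and $f\ge 0$) must hold at the arguments $y_i\Phi(w_\alpha,x_i)$ uniformly in $\alpha\in[0,1]$, which is immediate because Assumptions~\ref{ass:loss smooth} and~\ref{ass:self boundedness} are stated for all $u\in\R$ and $\Phi(\cdot,x)$ is real-valued, so no restriction on the trajectory or on $\|w_\alpha\|$ is needed.
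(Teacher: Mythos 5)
Your proof is correct and takes exactly the route the paper intends: the paper labels the deduction as ``immediate'' and you simply spell out the same three steps (termwise self-boundedness gives $\widehat F'\le\widehat F$, termwise smoothness gives $\widehat F''\le 1$, and the step-size bound collapses the $\max\{1,\eta\ell^2R^2\widehat F''\}$ term to $1$) before plugging into Lemma~\ref{lem:expansiveness}. Nothing is missing.
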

In Eq. \eqref{eq:tr-ne} the expansiveness is weaker than in a convex scenario, where the coefficient would be $1$ instead of $1+\frac{\eta LR^2}{\sqrt{m}}\max_{\alpha\in[0,1]} \hf(\wa)$. However, for self-bounded losses (i.e. $\abs{\fp(u)}\leq f(u)$) the ``gap to convexity'' $\frac{\eta LR^2}{\sqrt{m}}\max_{\alpha\in[0,1]} \hf(\wa)$ in Corollary \ref{cor:9.1}  is better than the  gap  from Lemma \ref{lem:expansiveness} for 1-Lipschitz losses (i.e. $\abs{\fp(u)}\leq 1$), which would be $\frac{\eta LR^2}{\sqrt{m}}$. Indeed, after unrolling the GD iterates, the latter eventually leads to polynomial width requirements \cite{richards2021learning}.

Instead, to obtain a polylogarithmic width, we use the expansiveness bound in Eq. \eqref{eq:tr-ne} for self-bounded losses together with the \GLQC property in Corollary \ref{cor:GLQC} as follows. From Corollary \ref{cor:GLQC},  if $m$ is large enough such that 
\bea\nn
\sqrt{m} \geq LR^2 \|w_t-w_t^\negi\|^2, \;\;\;\;\;\;\forall t\in[T],\;\, \forall i\in[n],
\eea
then Eq. \eqref{eq:GLQC-nn} holds on the GD path.
This further simplifies the result of Corollary \ref{cor:9.1} applied for $w=w_t, \wpr=w_t^\negi$ into
\[
  \Big\|\big(w_t-\eta\nabla \widehat F^\negi(w_t)\big)-\big(w_t^\negi-\eta\nabla \widehat F^\negi(w_t^\negi)\big)\Big\| \leq \widetilde{H}^i_t\;\Big\|w_t-w_t^\negi\Big\|\,,
\]
where $\widetilde{H}^i_t:=1+  \frac{2\eta L R^2}{\sqrt{m}} \max\{\Fh^\negi(w_t), \Fh^\negi(w_t^\negi)
  \}.$
Now from the optimization analyses in Sec. \ref{sec:optimization sketch}, we know intuitively that $\Fh^\negi(w_t)\leq\Fh(w_t)$ decays at rate $\tilde{O}(1/t)$; thus, so does $\Fh^\negi(w_t^\negi)$. Therefore, for all $i\in[n]$ the expansivity coefficient $\widetilde{H}^i_t$ in the above display is decaying to $1$ as GD progresses. 

To formalize all these and connect them to the model-stability term in \eqref{eq:stability to gen_gap}, note using triangle inequality and the Gradient Self-boundedness property of Lemma \ref{lem:key prop intro} that
\[
\Big\|w_t-w_t^\negi\Big\|\leq \Big\|\big(w_t-\eta\nabla \widehat F^\negi(w_t)\big)-\big(w_t^\negi-\eta\nabla \widehat F^\negi(w_t^\negi)\big)\Big\|+ \frac{\eta \ell R}{n}\Fh_i(w_t)\,.
\]
Unrolling this display over $t\in[T]$, averaging over $i\in[n]$, and using our expansiveness bound above we show in Appendix \ref{sec:gen analysis} the following bound for the model stability term
\begin{align}\label{eq:sketch stability eq}
    \frac{1}{n}\sum_{i=1}^n\left\|w_T-w_T^\negi\right\| \leq \frac{\eta\ell R e^\beta}{n}\sum_{t=0}^{T-1}\widehat{F}(w_t)\,,
\end{align}
where $\beta\lesssim \left(\sum_{t=1}^T\Fh(w_t)+\sum_{t=1}^T\Fh^\negi(w_t^\negi)\right)\big/\sqrt{m}\,.$ But, we know from training-loss bounds in Theorem\,\ref{thm:train} that  $\sum_{t=1}^T\Fh(w_t)\lesssim \|w-w_0\|^2$ (and similar for $\sum_{t=1}^T\Fh^\negi(w_t^\negi)$). Thus, $\beta\lesssim \|w-w_0\|^2\big/\sqrt{m}$. At this point, the theorem's conditions guarantees $\sqrt{m}\gtrsim \|w-w_0\|^2$, so that $\beta=O(1)$. Plugging back in \eqref{eq:sketch stability eq} we conclude with the following stability bound:
 $\frac{1}{n}\sum_{i=1}^n\|w_T-w_T^\negi\|\lesssim \sum_{t=0}^{T}\widehat{F}(w_t)\big/n.
$
Applying the train-loss bounds of Theorem \ref{thm:train} once more  completes the proof.

\section{Prior Works}
The theoretical study of generalization properties of neural networks (NN) is more than two decades old \cite{NIPS1996_fb2fcd53,bartlett-VCdim}. Recently, there has been  an increased interest in understanding and improving generalization of SGD/GD on over-parameterized neural networks, e.g.  \cite{allen2019learning, oymak2020toward,javanmard2020analysis,richards2021learning}. These results however typically require very large width where $m=\text{poly}(n)$. We discuss most-closely related-works below.

\noindent\textbf{Quadratic loss.} For quadratic loss, \cite{li2018learning,soltanolkotabi2018theoretical,allen2019convergence,oymak2020toward,liu2022loss} showed that sufficiently over-parameterized neural networks of polynomial width satisfy a local Polyak-Łojasiewicz (PL) condition $\|\nabla \widehat F(w)\|^2 \ge 2\mu (\widehat F(w)-\widehat F^\star)$, where $\mu$ is at least the smallest eigenvalue of the neural tangent kernel matrix. The PL property in this case implies that the training loss converges linearly with the rate $\widehat F(w_t) = O((1-\eta\mu)^t)$ if the GD iterates remain in the PL region. Moreover,  \cite{charles2018stability,lei2020sharper},  have used the PL condition to further characterize stability properties of corresponding non-convex models. Notably, \cite{lei2020sharper} derived order-optimal rates $O(\frac{1}{\mu n})$ for the generalization loss. However these rates only apply to quadratic loss. Models trained with logistic or exponential loss on separable data do \emph{not} satisfy the PL condition even for simple interpolating linear models. Aside from the PL condition-related results, but again for quadratic loss, \cite{oymak2019generalization} showed under specific assumptions on the data translating to low-rank NTK, that logarithmic width is sufficient to obtain classification error of order $O(n^{-1/4})$. In general, they achieve error rate $O(n^{-1/2})$, but for $m=\tilde{\Omega}(n^2)$. 

\noindent\textbf{Logistic-loss minimization with linear models.} Logistic-loss minimization is more appropriate for classification and rate-optimal generalization bounds for GD have been obtained recently in the linear setting, where the training objective is convex.
In particular, for linear logistic regression on data that are linearly separable with margin $\gamma>0$, \cite{shamir2021gradient} proved a finite-time test-error bound $O(\frac{\log^2 T}{\gamma^2 T}+\frac{\log^2 T}{\gamma^2 n})$. Ignoring $\log$ factors, this is order-optimal with the sample size $n$ and training horizon $T.$ Their proof uses exponential-decaying properties of the logistic loss to control the norm of gradient iterates, which it cleverly combines with Markov's inequality to bound the fraction of well-separated datapoints at any iteration. This in turn translates to a test-error bound by  standard margin-based generalization bounds. More recently, \cite{pmlr-v178-schliserman22a} used algorithmic-stability analysis proving same rates (up to log factors) for the test loss. Their results hold for general convex, smooth, self-bounded and decreasing objectives under a realizability assumption suited for convex objectives (analogous to Assumption \ref{ass:real}). Specifically, this includes linear logistic regression with linearly separable data. Here, we show that analogous rates on the test loss hold true for more complicated nonconvex settings where data are separable by shallow neural networks.

\noindent\textbf{Stability of GD in NN.} State-of-the-art generalization bounds on shallow neural networks via the stability-analysis framework have appeared very recently in \cite{richards2021learning,richards2021stability,leistability}. For Lipschitz losses, \cite{richards2021learning} shows that the empirical risk is weakly convex with a weak-convexity parameter that improves as the neural-network width $m$ increases. Leveraging this observation, they establish stability bounds for GD iterates at time $T$ provided sufficient parameterization $m=\tilde\Omega(T^2)$. Since the logistic loss is Lipschitz, these bounds also apply to our setting. Nevertheless, our work improves upon \cite{richards2021learning} in that: (i) we require significantly smaller width, poly-logarithmic rather than polynomial, and (ii) we show $\tilde{O}(1/n)$ test loss bounds in the realizable setting, while their bounds are $O(T/n).$ Central to our improvements is a largely refined analysis of the curvature of the loss via identifying and proving a generalized quasi-convexity property for neural networks of polylogarithmic width trained with self-bounded losses (see Section \ref{sec:ps} for details). Our results also improve upon the other two works \cite{richards2021stability,leistability}, which both require polynomial widths. However, we note that these results are not directly comparable since \cite{richards2021stability,leistability} focus on  quadratic-loss minimization. See also Appendix \ref{sec:more related}.

\noindent\textbf{Uniform convergence in NN.} Uniform bounds on the generalization loss have been derived in literature via Rademacher complexity analysis \cite{bartlett2002rademacher}; see for example \cite{neyshabur2015norm,arora2019fine,golowich2020size,vardi2022the,frei2022random} for a few results in this direction. These works typically obtain the bounds of order  $O(\frac{\mathcal{R}}{\sqrt{n}})$, where $\mathcal{R}$ depends on the Rademacher complexity of the hypothesis space. Recent works by \cite{Ji2020Polylogarithmic,chen2020much} also utilized Rademacher complexity analysis to obtain test loss rates of $O(1/\sqrt{n})$ under an NTK separability assumption (see also \cite{nitanda2019gradient}) with polylogarithmic width requirement for shallow and deep networks, respectively. Instead, while maintaining minimal width requirements, we obtain test-loss rates $\tilde{O}(1/n)$, which are order-optimal.
Our approach, which is based on algorithmic-stability, is also different and uncovers new properties of the optimization landscape, including a generalized local quasi-convexity property. On the other hand, the analysis of \cite{Ji2020Polylogarithmic,chen2020much} applies to ReLU activation and bounds the test loss with high-probability over the sampling of the training set. Instead, we require smooth activations similar to other studies such as \cite{oymak2019generalization,chatterji2021does,baibeyond,nitanda2019gradient,richards2021learning,richards2021stability,leistability} and we bound the test loss in expectation over the training set. Finally, we also note that data-specific generalization bounds for two-layer nets have also appeared recently in \cite{cao2022benign,frei2022implicit}. However, those results require that data are nearly-orthogonal.

\noindent\textbf{Convergence/implicit bias of GD.}  Convergence and implicit bias of GD for logistic/exponential loss functions on linear models and neural networks have been investigated in \cite{ji2018risk,soudry2018implicit,nacson2019convergence,lyu2020gradient,chizat2020implicit,chatterji2021does}. In particular, \cite{lyu2020gradient,ji2020directional} have shown for homogeneous neural-networks that GD converges in direction to a max-margin solution. While certainly powerful, this implicit-bias convergence characterization becomes relevant only when the number $T$ of GD iterations is exponentially large. Instead, our convergence bounds apply for finite $T$ (on the order of sample size), thus are more practically relevant. Moreover, their results assume a GD iterate $t_0$ such that $\hf(w_{t_0})\le \log(2)/n$. Similar assumption appears in \cite{chatterji2021does}, which require initialization $\hf(w_0)\le 1/n^{1+C}$ for constant $C>0$. Our approach is entirely different:
we prove that sufficient parameterization benefits the loss curvature and suffices for GD steps to find an interpolating model and attain near-zero training loss, provided data satisfy an appropriate realizability condition.


\section{Conclusions}
In this paper we study smooth shallow neural networks trained with self-bounded loss functions, such as logistic loss. Under interpolation, we provide minimal sufficient parameterization conditions to achieve rate-optimal generalization and optimization bounds. These bounds improve upon prior results which require substantially large over-parameterization or obtain sub-optimal generalization rates. Specifically, we significantly improve previous stability-based analyses in terms of both relaxing the parameterization requirements and obtaining improved rates. Although our focus was on binary classification with shallow networks, our approach can be extended to multi-class settings and deep networks, which will be explored in future studies. Extending our results to the stochastic case by analyzing SGD is another important future direction. Moreover, while our current treatment relies on smoothness of the activation function to exploit properties of the curvature of the training objective, we aim to examine the potential of our results to extend to non-smooth activations. Finally, our generalization analysis bounds the expectation of the test loss (over data sampling) and it is an important future direction extending these guarantees to a  high-probability setting.

\bibliographystyle{apalike}
\bibliography{main}
\clearpage
\appendix
\counterwithin{theorem}{section}
\section{Training Loss Analysis}\label{sec:training analysis}

This section includes the proofs of the results stated in Section \ref{sec:training loss}.

\subsection{Proof of Theorem \ref{thm:train}}
We begin with proving the general train-loss and parameter-norm bounds of Theorem \ref{thm:train}. In fact, we state and prove a slightly more general statement of the theorem which includes non-smooth and non-Lipschitz losses (such as expoential loss) that satisfy a second order self-bounded property described below. 

\begin{ass}[2nd order self-boundedness]\label{ass:2nd order self-boundedness}
    The convex loss function $f:\R\rightarrow\R_+$ satisfies the 2nd order self-boundedness property, i.e. 
    \[
    \fpp(u)\leq f(u), \forall u\in\R.
    \]
\end{ass}
\begin{theorem}[General statement of Theorem \ref{thm:train}] \label{thm:train-general}  Let Assumptions \ref{ass:features}-\ref{ass:act} hold. Assume the loss function satisfies self-bounded Assumption \ref{ass:self boundedness}. Moreover, suppose either Assumption \ref{ass:lip and smooth} or Assumption  \ref{ass:2nd order self-boundedness}  hold. Fix any $T\geq 0$. Let the step-size satisfy the assumptions of the descent lemma (Lemma \ref{lem:descentlm}). Assume any $w$ and $m$ such that
 $\|w-w_0\|^2\geq \max\left\{\eta T \widehat F(w),\eta \widehat F(w_0)\right\}$ and $m\geq 18^2 L^2R^4\|w-w_0\|^4$. Then, the training loss and the parameters' norm satisfy
\begin{align}
        &\frac{1}{T}\sum_{t=1}^T \widehat F(w_t)  \leq 2 \widehat F(w) + \frac{5\|w-w_0\|^2}{2\eta T},\label{eq:thm13}\\
        &\forall t\in[T]\;\; :\;\; \|w_t-w_0\|\le 4\|w-w_0\|.\nn
\end{align}
\end{theorem}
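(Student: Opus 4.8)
\textbf{Proof plan for Theorem \ref{thm:train-general}.}
The plan is to run the standard "regret-style" analysis for gradient descent on a smooth objective, but to replace each use of convexity by the generalized local quasi-convexity (GLQC) property of Corollary \ref{cor:GLQC}, which is available because of the \sbwc property from Lemma \ref{lem:key prop intro}. Concretely, I would first invoke the descent lemma (Lemma \ref{lem:descentlm}): for $\eta$ in the admissible range, $\widehat F(w_{t+1})\le \widehat F(w_t) - \tfrac{\eta}{2}\|\nabla\widehat F(w_t)\|^2$, and from the usual one-step identity this also gives, for the theorem's target point $w$,
\[
\widehat F(w_{t+1}) \;\le\; \widehat F(w_t) + \langle \nabla\widehat F(w_t), w - w_t\rangle + \frac{1}{2\eta}\Big(\|w_t-w\|^2 - \|w_{t+1}-w\|^2\Big).
\]
The cross term $\langle \nabla\widehat F(w_t), w-w_t\rangle$ is exactly what convexity would control by $\widehat F(w)-\widehat F(w_t)$; here instead I would apply Taylor's theorem with the integral/mean-value form of the remainder to write $\widehat F(w)\ge \widehat F(w_t) + \langle\nabla\widehat F(w_t),w-w_t\rangle + \tfrac12\lambda_{\min}\!\big(\nabla^2\widehat F(w_{\alpha t})\big)\|w-w_t\|^2$ for some intermediate point $w_{\alpha t}\in[w_t,w]$, then use \sbwc ($\lambda_{\min}\ge -\tfrac{LR^2}{\sqrt m}\widehat F(w_{\alpha t})$) and finally the GLQC bound $\widehat F(w_{\alpha t})\le \tau\max\{\widehat F(w_t),\widehat F(w)\}$ from Corollary \ref{cor:GLQC}. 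Telescoping the resulting recursion over $t=0,\dots,T-1$ and dividing by $T$ should yield the regret bound \eqref{eq:thm13}, with the factor $2$ and the constant $5/2$ coming from absorbing the $\tau$-inflation and the $\|w_t-w\|^2\le (4\|w-w_0\|)^2$ bound into the constants.

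The serious obstacle, and the reason the argument is circular on first pass, is that GLQC requires $\sqrt m \gtrsim \|w_t-w\|^2$ \emph{for every $t$}, yet a bound on $\|w_t-w\|$ is itself one of the conclusions of the theorem. To break this, I would set up an induction on $t$: assume $\|w_s-w_0\|\le 4\|w-w_0\|$ (equivalently $\|w_s-w\|\le 5\|w-w_0\|$, or the sharper $\|w_s-w\|\le 3\|w-w_0\|$ suggested in the proof sketch) for all $s\le t$, which under the hypothesis $m\ge 18^2L^2R^4\|w-w_0\|^4$ makes the GLQC constant $\tau$ a harmless absolute constant (e.g.\ $\tau\le$ something like $9/8$ with the chosen $\lambda$). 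Granting this, the one-step inequality above gives both a decrease-type control on $\widehat F$ and, via $\|w_{t+1}-w\|^2\le \|w_t-w\|^2 + 2\eta\big(\widehat F(w_t)-\widehat F(w_{t+1})\big) + (\text{curvature remainder})$, a bound on $\|w_{t+1}-w\|^2$ in terms of $\|w_t-w\|^2$, the accumulated loss $\sum_{s\le t}\widehat F(w_s)$, and $\eta T\widehat F(w)$. Using the hypotheses $\|w-w_0\|^2\ge \eta T\widehat F(w)$ and $\|w-w_0\|^2\ge \eta\widehat F(w_0)$ to absorb those terms, I would close the induction, establishing $\|w_t-w_0\|\le 4\|w-w_0\|$ for all $t\in[T]$ and simultaneously validating the use of GLQC at every step.

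One more point to handle is the dichotomy in the hypotheses: under Assumption \ref{ass:lip and smooth} the objective is $L_{\widehat F}$-smooth directly, whereas under the second-order self-boundedness Assumption \ref{ass:2nd order self-boundedness} (the exponential-loss case) smoothness is not global, so the descent lemma must be the variant stated in Lemma \ref{lem:descentlm} that only needs smoothness \emph{along the trajectory}, controlled by $\widehat F(w_t)$ which is itself being bounded by the induction. I expect the bookkeeping — tracking the exact constants $18^2$, the factor $2$ in front of $\widehat F(w)$, and the $5/2$ — to be the tedious part, but it should fall out of choosing the free constant $\lambda$ in Corollary \ref{cor:GLQC} appropriately (large enough that $\tau$ is close to $1$, small enough that $\lambda LR^2\|w_t-w\|^2/2 \le \sqrt m$ given the width hypothesis and the inductive norm bound). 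The genuinely nontrivial ideas are all upstream (the \sbwc property and GLQC); the remaining work is an induction that simultaneously propagates the loss bound and the norm bound, with each feeding the other.
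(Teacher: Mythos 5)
Your proposal matches the paper's proof essentially step for step: descent lemma, Taylor expansion of $\widehat F(w)$ around $w_t$ plus \sbwc to replace convexity, GLQC (Corollary \ref{cor:GLQC}) to control $\widehat F(w_{\alpha t})$, telescoping, and a (strong) induction on $\|w_t-w\|\le 3\|w-w_0\|$ that both validates GLQC at each step and is closed by feeding the accumulated train-loss bound back in. The paper packages this as Lemmas \ref{lem:descentlm}--\ref{lem:w bound general exponential} with $\lambda=4$ (hence $\tau=4/3$) and the hypotheses $\|w-w_0\|^2\ge\max\{\eta T\widehat F(w),\eta\widehat F(w_0)\}$ absorbing the residual terms, exactly as you anticipate.
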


To prove Theorem \ref{thm:train-general}, we first state our descent lemma for both self-bounded losses and lipschitz-smooth losses.

\begin{lemma}[Descent lemma]\label{lem:descentlm}
Let Assumptions \ref{ass:features}-\ref{ass:act} hold. Assume the loss function satisfies self-boundedness Assumptions \ref{ass:self boundedness},\ref{ass:2nd order self-boundedness}. Then, for any $\eta< \frac{1}{R^2\,\widehat F(w_t)}\min\{\frac{1}{{\ell^2 + L}}, \frac{1}{\sqrt{L}\ell} \}$ the descent property holds, i.e., \[
    \widehat F(w_{t+1})\leq \widehat F(w_t) - \frac{\eta}{2}\|\nabla \widehat F(w_t)\|^2.
    \]
Moreover, if $f$ satisfies Assumption \ref{ass:lip and smooth} then the descent property holds for any $\eta\le 1/L_\hf$ where $L_\hf:=\ell^2 R^2 + \frac{L R^2}{{\sqrt{m}}}$ is the smoothness parameter of the training objective.
\end{lemma}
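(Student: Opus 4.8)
\textbf{Proof proposal for the Descent Lemma (Lemma \ref{lem:descentlm}).}

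The plan is to establish the descent inequality $\widehat F(w_{t+1}) \le \widehat F(w_t) - \tfrac{\eta}{2}\|\nabla\widehat F(w_t)\|^2$ by controlling the local curvature of $\widehat F$ along the segment $[w_t, w_{t+1}]$ and then invoking the standard one-step argument for approximately smooth functions. The key point is that $\widehat F$ is not globally smooth, but it \emph{is} smooth on sublevel sets — its Hessian operator norm is bounded in terms of $\widehat F$ itself (a ``self-bounded smoothness'' statement, analogous to the \sbwc property in Lemma \ref{lem:key prop intro} but now a two-sided bound on $\|\nabla^2\widehat F(w)\|$). Concretely, I would first prove that under Assumptions \ref{ass:features}-\ref{ass:act} together with the self-boundedness assumptions, $\|\nabla^2 \widehat F(w)\| \le (\ell^2 R^2 + \tfrac{L R^2}{\sqrt m})\,\widehat F(w)$ — or, when only second-order self-boundedness \ref{ass:2nd order self-boundedness} is available, a variant where the $\ell^2$ part is also replaced by a term proportional to $\widehat F(w)$. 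This follows by writing out $\nabla^2_w f(y_i\Phi(w,x_i)) = f''(y_i\Phi)\,\nabla_w\Phi\,\nabla_w\Phi^\top + f'(y_i\Phi)\,y_i\,\nabla^2_w\Phi$, bounding $\|\nabla_w\Phi\|\le \ell R$ and $\|\nabla^2_w\Phi\|\le LR^2/\sqrt m$ (these norm bounds are essentially the computations already cited in Corollary \ref{cor:gradients_and_hessians_bounds} / Lemma \ref{lem:key prop intro}), and then using $|f'|\le f$ and either $f''\le 1$ or $f''\le f$.

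Next, the one-step argument: I would like to apply Taylor's theorem with integral remainder between $w_t$ and $w_{t+1} = w_t - \eta\nabla\widehat F(w_t)$, giving
\[
\widehat F(w_{t+1}) \le \widehat F(w_t) - \eta\|\nabla\widehat F(w_t)\|^2 + \tfrac{\eta^2}{2}\Big(\max_{v\in[w_t,w_{t+1}]}\|\nabla^2\widehat F(v)\|\Big)\|\nabla\widehat F(w_t)\|^2 .
\]
For this to collapse to the claimed bound it suffices that $\eta\cdot\max_{v\in[w_t,w_{t+1}]}\|\nabla^2\widehat F(v)\| \le 1$. In the Lipschitz-and-smooth case this is immediate since $\|\nabla^2\widehat F(v)\|\le L_{\hat F} = \ell^2 R^2 + LR^2/\sqrt m$ globally, so $\eta\le 1/L_{\hat F}$ does the job — this is the clean half of the lemma. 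In the self-bounded (possibly non-smooth, non-Lipschitz, e.g. exponential loss) case I instead need to bound $\max_{v\in[w_t,w_{t+1}]}\widehat F(v)$ by a constant multiple of $\widehat F(w_t)$; this is exactly where the \GLQC machinery (Proposition \ref{propo:GLQC} / Corollary \ref{cor:GLQC}) enters, controlling the loss at intermediate points on a short segment. Since $\|w_{t+1}-w_t\| = \eta\|\nabla\widehat F(w_t)\|$ and $\|\nabla\widehat F(w_t)\|\lesssim \ell R\,\widehat F(w_t)$ by gradient self-boundedness, for $\eta$ small enough (of the order stated, $\eta < \tfrac{1}{R^2\widehat F(w_t)}\min\{\tfrac{1}{\ell^2+L},\tfrac{1}{\sqrt L\,\ell}\}$) the step is short enough that $\widehat F(v) \le 2\widehat F(w_t)$ say on the segment, and then $\eta\cdot\|\nabla^2\widehat F(v)\| \lesssim \eta R^2(\ell^2+L)\widehat F(w_t) \le 1$, closing the argument. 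I would be careful to verify that the specific constants in the stated step-size threshold are precisely what make these two conditions ($\eta\cdot$curvature $\le 1$ on the segment, and the segment short enough to keep $\widehat F$ within a factor $2$) simultaneously hold; the two terms $\frac{1}{\ell^2+L}$ and $\frac{1}{\sqrt L \ell}$ presumably come from these two requirements separately.

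The main obstacle I anticipate is the circularity in the self-bounded case: bounding $\widehat F$ on the segment $[w_t,w_{t+1}]$ requires knowing the step length, which requires a gradient bound, which is fine, but then bounding the curvature on that segment in terms of $\widehat F(w_t)$ requires the \GLQC/quasi-convexity control, which itself has a width requirement $\sqrt m \gtrsim \|w_t - w_{t+1}\|^2$ — I need to check this is automatically satisfied (it should be, since the step is tiny) or folded into the step-size condition. The Lipschitz-smooth branch has no such issue and is a one-line consequence of global $L_{\hat F}$-smoothness plus the textbook descent lemma, so the real content is entirely in making the self-bounded branch's constants line up; I would treat that as the crux and the rest as routine.
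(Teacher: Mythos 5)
Your high-level skeleton matches the paper's: (i) get the self-bounded Hessian/gradient bounds, (ii) Taylor expand between $w_t$ and $w_{t+1}$, (iii) in the Lipschitz-smooth branch use global $L_{\hf}$-smoothness, (iv) in the self-bounded branch use GLQC to control $\max_{v\in[w_t,w_{t+1}]}\widehat F(v)$. The Lipschitz-smooth branch is complete. But there is a real gap in your self-bounded branch, and you have misidentified where the circularity lives.

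You write that ``the step is short enough that $\widehat F(v)\le 2\widehat F(w_t)$ on the segment'' and attribute this to GLQC plus a short step. That is not what GLQC gives. Proposition~\ref{propo:GLQC} / Corollary~\ref{cor:GLQC} bound the segment maximum by $\tau\cdot\max\{\widehat F(w_t),\widehat F(w_{t+1})\}$, and the second argument is precisely the quantity you are trying to control. No matter how short the step is, you cannot drop $\widehat F(w_{t+1})$ from the max without a separate argument — a short segment does not preclude $\widehat F$ rising over it. The paper's proof handles this by first proving $\widehat F(w_{t+1})<\widehat F(w_t)$ via contradiction: assume $\widehat F(w_{t+1})\ge\widehat F(w_t)$, so the max in the GLQC-produced inequality equals $\widehat F(w_{t+1})$; substitute, use the gradient self-bound $\|\nabla\widehat F(w_t)\|\le \ell R\,\widehat F(w_t)$ and the step-size condition $\eta(\ell^2R^2+LR^2/\sqrt m)\widehat F(w_t)<1$ twice (once to divide out, once to get $\eta\ell^2R^2\widehat F(w_t)<1$), and derive the strict inequality $\widehat F(w_{t+1})<\widehat F(w_{t+1})$. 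Only after that can the max be replaced by $\widehat F(w_t)$, and the descent bound then follows in one line from the same step-size condition. This contradiction argument is the genuine content of the self-bounded branch, and your proposal contains no substitute for it.

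Your instinct that the two terms in $\min\{\tfrac{1}{\ell^2+L},\tfrac{1}{\sqrt L\,\ell}\}$ encode two separate requirements is right, but you have the wrong pair. The $\tfrac{1}{\sqrt L\,\ell}$ term ensures the width requirement in Corollary~\ref{cor:GLQC}, namely $\sqrt m\ge LR^2\|w_{t+1}-w_t\|^2\le LR^2\eta^2\ell^2R^2\widehat F(w_t)^2$, reduces to the trivial $m\ge1$; the $\tfrac{1}{\ell^2+L}$ term is what powers the contradiction argument and the final descent estimate, via $\eta R^2(\ell^2+L)\widehat F(w_t)<1$. Neither term is there to force ``$\widehat F(v)\le 2\widehat F(w_t)$''; that inequality is a consequence of the contradiction step, not of the step-size directly. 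If you want an alternative that avoids the contradiction, you could treat the segment maximum $M$ as an unknown, apply Taylor at an arbitrary intermediate point, and close a fixed-point inequality $M\le \widehat F(w_t)+c\,\eta^2\cdot M$ for the step-size range in question; but as written your proposal asserts the intermediate bound without any mechanism that delivers it.
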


\begin{proof}
Due to self-boundedness Assumption \ref{ass:2nd order self-boundedness}, as well as Assumptions \ref{ass:features}-\ref{ass:act} the objective  is also self-bounded according to Corollary \ref{cor:gradients_and_hessians_bounds}, i.e., 
$\|\nabla^2 \widehat F(w)\| \leq \left(\ell^2 R^2 + \frac{L R^2}{{\sqrt{m}}}\right) \widehat F(w), \|\nabla \widehat F(w)\| \leq \ell R \,\widehat F(w)$.

By Taylor's expansion, there exists a $w'\in[w_t,w_{t+1}]$ such that,
 \begin{align*}
 \widehat F(w_{t+1}) &=  \widehat F(w_t) + \left\langle \nabla \widehat F(w_t),w_{t+1}-w_t\right\rangle + \frac{1}{2} \left\langle w_{t+1}-w_t,\nabla^2 \widehat F(w')\,(w_{t+1}-w_t)\right\rangle\\
&\le  \widehat F(w_t) + \left\langle \nabla \widehat F(w_t),w_{t+1}-w_t \right\rangle + \frac{1}{2}\max_{v\in[w_t,w_{t+1}]} \left\|\nabla ^2 \widehat F(v)\right\|\cdot\left\|w_{t+1}-w_t\right\|^2   \\[4pt]
&\le \widehat F(w_t) -\eta \|\nabla \widehat F(w_t)\|^2 + \frac{\eta^2 {\left(\ell^2 R^2 + \frac{L R^2}{{\sqrt{m}}}\right)}}{2}\max_{v\in[w_t,w_{t+1}]} \widehat F(v)\cdot \left\|\nabla \widehat F(w_t)\right\|^2.
\end{align*}
By Corollary \ref{lem:GLQC}, for $\sqrt{m}\ge \eta^2 L\ell^2 R^4 \hf^2(w_t)\ge L R^2 \|\eta\nabla \hf(w_t)\|^2=LR^2\|w_{t+1}-w_t\|^2$ it holds that $$\max_{v\in[w_t,w_{t+1}]} \widehat F(v) \le 2 \max\{\hf(w_t),\hf(w_{t+1})\},$$
which yields
\bea\label{eq:dsct}
\hf (w_{t+1})\le \widehat F(w_t) -\eta \|\nabla \widehat F(w_t)\|^2 + \eta^2 {\left(\ell^2 R^2 + \frac{L R^2}{{\sqrt{m}}}\right)}  \max\left\{\widehat F(w_t),\widehat F(w_{t+1})\right\}\cdot \|\nabla \widehat F(w_t)\|^2.
\eea
We note that the condition on $m$ simplifies to $m\ge 1$ if $\eta \le\frac{1}{\sqrt{L} \ell R^2}\frac{1}{\hf(w_t)}$. 
\par
Back to \eqref{eq:dsct},
if $\widehat F(w_{t+1})\ge \widehat F(w_t)$ by our condition $\eta< \frac{1}{{\ell^2 R^2 + L R^2 /\sqrt{m}}}\frac{1}{\widehat F(w_t)}$ it holds that
\begin{align*}
\widehat F(w_{t+1}) &\le \widehat F(w_t) + \eta \|\nabla \widehat F(w_t)\|^2 \left( \frac{\widehat F(w_{t+1})}{\widehat F(w_t)}-1\right)\\[4pt]
&\le \widehat F(w_t) + \eta\ell^2R^2 \hf^2(w_t)\left(\frac{\widehat F(w_{t+1})}{\widehat F(w_t)}-1\right).
\end{align*}
Since $\eta< \frac{1}{\ell^2 R^2}\frac{1}{\widehat F(w_t)}$,
\bea
\widehat F(w_{t+1}) &< \widehat F(w_t) + \widehat F(w_t)  \left( \frac{\widehat F(w_{t+1})}{\widehat F(w_t)}-1\right)\nn\\
&= \widehat F(w_{t+1})\,,\nn
\eea
which is a contradiction. Thus it holds that $\widehat F(w_{t+1})< \widehat F(w_t)$. Continuing from Eq. \eqref{eq:dsct} with the assumption $\eta< \frac{1}{{\ell^2 R^2 +  L R^2/ \sqrt{m}}}\frac{1}{\widehat F(w_t)}$, we conclude that
\bea
\widehat F(w_{t+1})&\le \widehat F(w_t) - \eta \|\nabla \widehat F(w_t)\|^2 + \frac{1}{2}\eta^2 {\left(\ell^2 R^2 + \frac{L R^2}{{\sqrt{m}}}\right)}  \widehat F(w_t)\cdot \|\nabla \widehat F(w_t)\|^2\nn\\[4pt]
&\le \widehat F(w_t) - \frac{\eta}{2} \|\nabla \widehat F(w_t)\|^2.\nn
\eea
This completes the proof for self-bounded losses.
\par
Next, suppose $f$ is $1$-smooth and $1$-Lipschitz.  Then, as per Corollary \ref{cor:gradients_and_hessians_bounds}, $\hf$ is smooth with the constant $$L_\hf:=\ell^2 R^2 + \frac{L R^2}{{\sqrt{m}}}.$$ 
Following similar steps as in the beginning of proof and assuming step-size $\eta\leq 1/L_\hf$ we immediately conclude that,
\begin{align}
    \widehat F(w_{t+1})&\leq \widehat F(w_t) - \eta\|\nabla \widehat F(w_t)\|^2 + \frac{\eta^2 L_\hf}{2}\|\nabla \widehat F(w_t)\|^2\nn\\
    &\le \widehat F(w_t)- \frac{\eta}{2}\|\nabla \widehat F(w_t)\|^2.\nn
\end{align}    
This completes the proof.
\end{proof}

As a remark, the descent property implies that the loss decreases by each step, i.e., $\widehat F(w_t)\le \widehat F(w_0)$. Thus for self-bounded losses the condition $\eta< \frac{1}{R^2\,\widehat F(w_0)}\min\{\frac{1}{{\ell^2 + L}}, \frac{1}{\sqrt{L}\ell} \}$ is sufficient. We also note that the Lipschitz-smoothness and 2nd order self-bounded assumptions are only required for the descent lemma above, which results in conditions on the step-size based on the properties of loss. In the rest of the proof we only use the self-bounded Assumption \ref{ass:self boundedness} in order to use the self-bounded weak convexity property of the objective (see Def. \ref{defn:sbhs}).   
\par
Next lemma finds a general relation for the training loss in terms of an arbitrary point $w\in\R^{d'}$ and the fluctuations of loss between $w$ and GD iterates $w_t$. 
\begin{lemma}\label{lem:train_loss_general}
Let Assumptions \ref{ass:features}-\ref{ass:act} hold. Assume the loss function satisfies the self-bounded Assumption \ref{ass:self boundedness}. Moreover, suppose $\hf$ and step-size $\eta$ are such that  the following descent condition is satisfied for all $t\ge 0$:
\begin{align}\label{eq:standard_descent}
    \widehat F(w_{t+1})\leq \widehat F(w_t) - \frac{\eta}{2}\|\nabla \widehat F(w_t)\|^2.
\end{align}
Then, for any $w\in\R^{d'}$ it holds that
    \begin{align}
    \frac{1}{T}\sum_{t=1}^T \widehat F(w_t) \leq \widehat F(w) + \frac{\|w-w_0\|^2}{\eta T} + \frac{1}{2}\frac{LR^2}{{\sqrt{m}}} \frac{1}{T}\sum_{t=0}^{T-1}\max_{\alpha\in[0,1]}\widehat F(w_{\alpha t}) \,\|w-w_t\|^2,\nn
\end{align}
where we set $w_{\alpha t}:=\alpha w_t+(1-\alpha) w.$

\end{lemma}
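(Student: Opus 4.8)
\textbf{Proof plan for Lemma~\ref{lem:train_loss_general}.}
The plan is to combine the standard one-step analysis of gradient descent (which only uses the descent condition \eqref{eq:standard_descent} and convexity-type estimates on $\hf$) with a Taylor expansion of $\hf$ around the GD iterate $w_t$ that is controlled via the \sbwc property. First I would write the familiar telescoping identity for the squared distance to $w$: using $w_{t+1}=w_t-\eta\nabla\hf(w_t)$, one has
\[
\|w_{t+1}-w\|^2 = \|w_t-w\|^2 - 2\eta\langle\nabla\hf(w_t),w_t-w\rangle + \eta^2\|\nabla\hf(w_t)\|^2.
\]
The term $\eta^2\|\nabla\hf(w_t)\|^2$ is absorbed using the descent condition \eqref{eq:standard_descent}, which gives $\eta^2\|\nabla\hf(w_t)\|^2 \le 2\eta(\hf(w_t)-\hf(w_{t+1}))$, a quantity that telescopes when summed over $t$. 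So the crux is to lower-bound the inner product $\langle\nabla\hf(w_t),w_t-w\rangle$, equivalently to upper-bound $\langle\nabla\hf(w_t),w-w_t\rangle$, in terms of $\hf(w)-\hf(w_t)$ plus a controlled error.

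For that, I would apply Taylor's theorem to $\hf$ along the segment $[w_t,w]$: there is $\alpha\in[0,1]$ (equivalently a point $w_{\alpha t}=\alpha w_t+(1-\alpha)w$) with
\[
\hf(w) = \hf(w_t) + \langle\nabla\hf(w_t),w-w_t\rangle + \tfrac12\langle w-w_t,\nabla^2\hf(w_{\alpha t})(w-w_t)\rangle.
\]
Rearranging and applying the \sbwc bound $\lambda_{\min}(\nabla^2\hf(w_{\alpha t})) \ge -\tfrac{LR^2}{\sqrt m}\hf(w_{\alpha t})$ from Lemma~\ref{lem:key prop intro} (item~2), the quadratic term is at least $-\tfrac12\tfrac{LR^2}{\sqrt m}\hf(w_{\alpha t})\|w-w_t\|^2$, hence
\[
\langle\nabla\hf(w_t),w-w_t\rangle \le \hf(w)-\hf(w_t) + \tfrac12\tfrac{LR^2}{\sqrt m}\,\hf(w_{\alpha t})\,\|w-w_t\|^2 \le \hf(w)-\hf(w_t) + \tfrac12\tfrac{LR^2}{\sqrt m}\max_{\alpha\in[0,1]}\hf(w_{\alpha t})\,\|w-w_t\|^2.
\]
Plugging this into the telescoping identity (with the sign flip, so this bounds $-2\eta\langle\nabla\hf(w_t),w_t-w\rangle$ from above) yields, after also using the descent bound on the gradient-norm term,
\[
\|w_{t+1}-w\|^2 \le \|w_t-w\|^2 + 2\eta\big(\hf(w)-\hf(w_t)\big) + \eta\tfrac{LR^2}{\sqrt m}\max_{\alpha\in[0,1]}\hf(w_{\alpha t})\|w-w_t\|^2 + 2\eta\big(\hf(w_t)-\hf(w_{t+1})\big).
\]

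Finally I would sum this inequality over $t=0,\dots,T-1$. The $\|w_t-w\|^2$ terms telescope to $\|w_0-w\|^2-\|w_T-w\|^2 \le \|w_0-w\|^2$; the $2\eta(\hf(w_t)-\hf(w_{t+1}))$ terms telescope to $2\eta(\hf(w_0)-\hf(w_T))\le 2\eta\hf(w_0)$ — actually this extra term can be dropped or absorbed, and I should double-check whether the stated bound really needs it (it may be cleaner to note $\eta^2\|\nabla\hf(w_t)\|^2\ge 0$ gives the reverse, so one keeps it on the favorable side); the point is the dominant contributions are $\|w_0-w\|^2$ and $2\eta\sum_t(\hf(w)-\hf(w_t))$. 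Dividing through by $2\eta T$ and rearranging isolates $\frac1T\sum_{t=1}^T\hf(w_t)$ (the index shift from $t=0,\dots,T-1$ to $t=1,\dots,T$ is harmless up to the descent property, which only makes later iterates smaller) on the left, giving exactly
\[
\frac1T\sum_{t=1}^T\hf(w_t) \le \hf(w) + \frac{\|w-w_0\|^2}{\eta T} + \frac12\frac{LR^2}{\sqrt m}\frac1T\sum_{t=0}^{T-1}\max_{\alpha\in[0,1]}\hf(w_{\alpha t})\|w-w_t\|^2.
\]
The main obstacle — and the reason this lemma is only an intermediate step rather than the final train-loss bound — is that the error term on the right still contains $\max_\alpha\hf(w_{\alpha t})$ and $\|w-w_t\|^2$, both of which are a priori uncontrolled along the trajectory; taming them requires the GLQC property (Corollary~\ref{cor:GLQC}) together with the recursive argument bounding $\|w_t-w\|\le 3\|w-w_0\|$, which is precisely what the proof of Theorem~\ref{thm:train-general} does afterward. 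Within this lemma itself the only delicate points are making sure the Taylor point $w_{\alpha t}$ genuinely lies on $[w_t,w]$ (so the notation $w_{\alpha t}=\alpha w_t+(1-\alpha)w$ is consistent) and that the descent condition is invoked with the correct constant; everything else is bookkeeping.
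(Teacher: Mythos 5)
Your proposal is correct and takes essentially the same route as the paper: a Taylor expansion of $\widehat F$ along $[w_t,w]$ with the self-bounded weak-convexity bound on the Hessian, combined with the descent condition and a telescoping sum of squared distances to $w$. The paper organizes the algebra by substituting the Taylor lower bound directly into the descent inequality and then doing a ``completion of squares,'' whereas you start from the expansion of $\|w_{t+1}-w\|^2$ and substitute both bounds in — but these are the same manipulation. One small remark worth noting: carried through cleanly, your bookkeeping actually yields the sharper constant $\frac{\|w-w_0\|^2}{2\eta T}$ rather than $\frac{\|w-w_0\|^2}{\eta T}$; the paper's intermediate identity
\[
-\langle\nabla\widehat F(w_t),w-w_t\rangle-\tfrac{\eta}{2}\|\nabla\widehat F(w_t)\|^2=\tfrac{1}{\eta}\bigl(\|w-w_t\|^2-\|w-w_{t+1}\|^2\bigr)
\]
is off by a factor of two (the correct prefactor is $\tfrac{1}{2\eta}$), though the stated lemma is a weaker consequence and hence still holds. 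Also, your parenthetical hedge about whether the $2\eta(\widehat F(w_t)-\widehat F(w_{t+1}))$ telescope really cancels is unnecessary: after adding the Taylor and descent pieces, the $\widehat F(w_t)$ terms cancel algebraically inside the one-step inequality, which is exactly what shifts the sum from $t=0,\dots,T-1$ to $t=1,\dots,T$ — no appeal to monotonicity of the iterates is needed.
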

\begin{proof}

Fix any $w$. By Taylor, there exists $w_{\alpha t}, \alpha\in[0,1]$ such that
\begin{align*}
    \widehat F(w) &= \widehat F(w_t) +\left\langle\nabla \widehat F(w_t),w-w_t\right\rangle + \frac{1}{2}\left\langle w-w_t,\nabla^2 \widehat F(w_{\alpha t})\, (w-w_t)\right\rangle
    \\
    &\geq \widehat F(w_t) + \left\langle\nabla \widehat F(w_t),w-w_t\right\rangle + \frac{1}{2}\lamin{\nabla^2 \widehat F(w_{\alpha t})} \|w-w_t\|^2
    \\
    &\geq \widehat F(w_t) + \left\langle\nabla \widehat F(w_t),w-w_t\right\rangle - \frac{1}{2}\frac{LR^2}{{\sqrt{m}}} \widehat F(w_{\alpha t}) \,\|w-w_t\|^2.
\end{align*}
The last line is true by Corollary \ref{cor:gradients_and_hessians_bounds}. Thus, for any $w$,
\begin{align*}
    \widehat F(w)\geq \widehat F(w_t) + \left\langle \nabla \widehat F(w_t),w-w_t\right\rangle - \frac{1}{2}\frac{LR^2}{{\sqrt{m}}} \max_{\alpha\in[0,1]} \widehat F(w_{\alpha t}) \,\|w-w_t\|^2.
\end{align*}
Plugging this in \eqref{eq:standard_descent} gives
\begin{align}
    \widehat F(w_{t+1})&\leq \widehat F(w) - \left\langle\nabla \widehat F(w_t),w-w_t\right\rangle - \frac{\eta}{2}\left\|\nabla \widehat F(w_t)\right\|^2+\frac{1}{2}\frac{LR^2}{{\sqrt{m}}} \max_{\alpha\in[0,1]}\widehat F(w_{\alpha t}) \,\|w-w_t\|^2\nn
    \\
    &= \widehat F(w) +\frac{1}{\eta}\left(\|w-w_t\|^2-\|w-w_{t+1}\|^2 \right) +\frac{1}{2}\frac{LR^2}{{\sqrt{m}}} \max_{\alpha\in[0,1]}\widehat F(w_{\alpha t}) \,\|w-w_t\|^2.\label{eq:descent with weak convexity}
\end{align}
where the second line follows by completion of squares using $w_{t+1}-w_t=-\eta\nabla \widehat F(w_t)$.

Telescoping the above display for $t=0,\ldots,{T-1}$, we arrive at the desired.
\end{proof}

Next, when $m$ is large enough so that we can invoke the \GLQC property, the bound of Lemma \ref{lem:train_loss_general} takes the following convenient form

\begin{lemma}\label{lem:train_loss_final}    Let the assumptions of Lemma \ref{lem:train_loss_general} hold. Assume $w$ and $m$ such that ${\sqrt{m}} \ge  2LR^2\|w-w_t\|^2$ for all $t\in[T-1]$ then
    \bea\label{eq:train_loss_final_lemma_eqn}
        \frac{1}{T}\sum_{t=1}^T \widehat F(w_t)  \leq 2 \widehat F(w) + \frac{2\|w-w_0\|^2}{\eta T} +  \frac{\widehat F(w_0)}{2T} .
    \eea
    \end{lemma}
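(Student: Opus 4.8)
The plan is to start from the bound already established in Lemma~\ref{lem:train_loss_general}: for any $w$,
\[
\frac{1}{T}\sum_{t=1}^T \widehat F(w_t) \leq \widehat F(w) + \frac{\|w-w_0\|^2}{\eta T} + \frac{1}{2}\frac{LR^2}{\sqrt{m}}\cdot\frac{1}{T}\sum_{t=0}^{T-1}\max_{\alpha\in[0,1]}\widehat F(w_{\alpha t})\,\|w-w_t\|^2 ,
\]
with $w_{\alpha t}=\alpha w_t+(1-\alpha)w$. The target terms $\widehat F(w)$ and $\|w-w_0\|^2/(\eta T)$ are already present on the right-hand side, so the entire task is to control the weak-convexity remainder (the last sum) using the width hypothesis together with the generalized local quasi-convexity property, showing it is dominated by a small multiple of $\widehat F(w)$, of $\widehat F(w_0)/T$, and of the running average $\tfrac1T\sum_{t=1}^T\widehat F(w_t)$ itself (which will then be absorbed into the left-hand side).

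For each $t\in\{0,\dots,T-1\}$, note that every intermediate point $w_{\alpha t}=\alpha w_t+(1-\alpha)w$ lies on the segment $[w_t,w]$, and that the hypothesis $\sqrt{m}\ge 2LR^2\|w-w_t\|^2$ is exactly the condition of Corollary~\ref{cor:GLQC} applied to the pair $(w_t,w)$ with $\lambda=4$ (since $2LR^2=\lambda\tfrac{LR^2}{2}$ for $\lambda=4$). Hence $\max_{\alpha\in[0,1]}\widehat F(w_{\alpha t})\le(1-1/4)^{-1}\max\{\widehat F(w_t),\widehat F(w)\}=\tfrac{4}{3}\max\{\widehat F(w_t),\widehat F(w)\}$. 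The same hypothesis also gives $\tfrac{LR^2}{\sqrt m}\|w-w_t\|^2\le\tfrac12$. Multiplying these two facts, the $t$-th summand of the remainder is at most $\tfrac12\cdot\tfrac12\cdot\tfrac43\max\{\widehat F(w_t),\widehat F(w)\}=\tfrac13\max\{\widehat F(w_t),\widehat F(w)\}\le\tfrac13\big(\widehat F(w_t)+\widehat F(w)\big)$.

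It then remains only to sum and rearrange. Averaging this over $t=0,\dots,T-1$ and peeling off the $t=0$ contribution (using $\sum_{t=1}^{T-1}\widehat F(w_t)\le\sum_{t=1}^{T}\widehat F(w_t)$) bounds the remainder by $\tfrac13\widehat F(w)+\tfrac{\widehat F(w_0)}{3T}+\tfrac{1}{3T}\sum_{t=1}^{T}\widehat F(w_t)$. Substituting back and moving the self-referential term $\tfrac{1}{3T}\sum_{t=1}^T\widehat F(w_t)$ to the left gives $\tfrac23\cdot\tfrac1T\sum_{t=1}^T\widehat F(w_t)\le\tfrac43\widehat F(w)+\tfrac{\|w-w_0\|^2}{\eta T}+\tfrac{\widehat F(w_0)}{3T}$; multiplying through by $3/2$ and bounding the resulting middle coefficient $3/2\le2$ yields exactly \eqref{eq:train_loss_final_lemma_eqn}. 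The only points needing care are (i) the width hypothesis must be used at $t=0$ as well, since the remainder sum in Lemma~\ref{lem:train_loss_general} starts at $t=0$; and (ii) the constant bookkeeping in the absorption step — the argument closes only because the GLQC factor $4/3$ keeps the coefficient of the running average strictly below $1$, and the numerical constants happen to line up so that the final bound is precisely $2\widehat F(w)+2\|w-w_0\|^2/(\eta T)+\widehat F(w_0)/(2T)$. This balancing of constants is the main, though purely mechanical, obstacle.
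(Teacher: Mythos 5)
Your proof is correct and follows essentially the same route as the paper's: invoke Corollary~\ref{cor:GLQC} with $\lambda=4$ to bound $\max_{\alpha}\widehat F(w_{\alpha t})$, use the width hypothesis once more to bound $LR^2\|w-w_t\|^2/\sqrt m\le 1/2$, and then absorb the resulting self-referential running-average term into the left-hand side. The paper's own proof performs the identical manipulation (it even records the same intermediate bound $\frac43\widehat F(w)+\|w-w_0\|^2/(\eta T)+\frac{1}{3T}\sum_{t=0}^T\widehat F(w_t)$), so your write-up matches it step for step; your observation that the width condition is also needed at $t=0$ is a fair reading, and the paper's subsequent application (Lemma~\ref{lem:w bound general exponential}) indeed verifies it at $t=0$ via the induction base case.
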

    \begin{proof}
    We invoke Corollary \ref{lem:GLQC} with $\lambda=4$ to deduce that for all $t\in[T-1]$
    \bea
    \max_{\alpha\in[0,1]} \widehat F(w_{\alpha t}) &\le \frac{4}{3}\max \{ \widehat F(w),\widehat F(w_t)\} < \frac{4}{3} \widehat F(w_t) + \frac{4}{3} \widehat F(w).\label{eq:implication of quasiconvexity}
    \eea
    Noting the assumption on $m$ and recalling Lemma \ref{lem:train_loss_general},
    \begin{align*}
        \frac{1}{T}\sum_{t=1}^T \widehat F(w_t) &\leq \widehat F(w) + \frac{\|w-w_0\|^2}{\eta T} + \frac{1}{2}\frac{LR^2}{{\sqrt{m}}} \frac{1}{T}\sum_{t=0}^{T-1}\max_{\alpha\in[0,1]}\widehat F(w_{\alpha t}) \,\|w-w_t\|^2\\
        &\le \frac{4}{3}\widehat F(w) + \frac{\|w-w_0\|^2}{\eta T} + \frac{1}{3T}\sum_{t=0}^{T-1} \widehat F(w_t)\\
        &\le \frac{4}{3}\widehat F(w) + \frac{\|w-w_0\|^2}{\eta T} + \frac{1}{3T}\sum_{t=0}^{T} \widehat F(w_t).
    \end{align*}
    Arranging terms yields the desired result. 
    \end{proof}



Finally, using the about bounds on the training loss, we can bound the parameter-norm using a recursive argument presented in the lemma below.

\begin{lemma}[Iterates-norm bound]\label{lem:w bound general exponential}
   Suppose the assumptions of Lemma \ref{lem:train_loss_general} hold. Fix any $T\geq 0$
and assume any $w$ and $m$ such that
 \begin{align}\label{eq:w requirement constant}
\|w-w_0\|^2\geq \max\{\eta T \widehat F(w),\eta \widehat F(w_0)\}.
\end{align}
 and
\begin{align}\label{eq:m_large_general_constant}
{\sqrt{m}} \geq 18 LR^2\|w-w_0\|^2,
\end{align}
Then, for all $t\in[T]$,
    \begin{align}\label{eq:wt minus w bound no square}
    \|w_t-w\| \leq 3\| w-w_0\|.
    \end{align}
\end{lemma}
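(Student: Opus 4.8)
The plan is to prove \eqref{eq:wt minus w bound no square} by strong induction on $t$. The base case $t=0$ holds trivially since $\|w_0-w\|\le 3\|w-w_0\|$. For the inductive step, I would assume $\|w_s-w\|\le 3\|w-w_0\|$ for all $s\le t$ and prove the same bound for $t+1$. The idea is to start from the completion-of-squares identity already derived inside the proof of Lemma \ref{lem:train_loss_general} (Eq. \eqref{eq:descent with weak convexity}), namely
\[
\widehat F(w_{s+1})\leq \widehat F(w) +\tfrac{1}{\eta}\bigl(\|w-w_s\|^2-\|w-w_{s+1}\|^2 \bigr) +\tfrac{LR^2}{2\sqrt{m}} \max_{\alpha\in[0,1]}\widehat F(w_{\alpha s}) \,\|w-w_s\|^2,
\]
and telescope it over $s=0,\dots,t$. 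The telescoping kills the $\|w-w_{s+1}\|^2$ terms, leaving $\|w-w_{t+1}\|^2$ on the left after rearranging; on the right one gets $\|w-w_0\|^2 - \eta\sum_{s=1}^{t+1}\widehat F(w_s) + \eta(t+1)\widehat F(w) + \tfrac{\eta LR^2}{2\sqrt m}\sum_{s=0}^{t}\max_\alpha \widehat F(w_{\alpha s})\|w-w_s\|^2$. Dropping the non-positive train-loss sum, this gives a clean upper bound on $\|w-w_{t+1}\|^2$.

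Next I would control each of the two remaining error terms using the inductive hypothesis. The term $\eta(t+1)\widehat F(w)\le \eta T\widehat F(w)\le \|w-w_0\|^2$ directly by assumption \eqref{eq:w requirement constant} (and I should be slightly careful with the off-by-one, absorbing the stray $\widehat F(w_0)/2$-type contribution or using $t+1\le T$; if $t+1>T$ there is nothing to prove). For the weak-convexity term, since the induction hypothesis gives $\|w-w_s\|\le 3\|w-w_0\|$ for $s\le t$, and since \eqref{eq:m_large_general_constant} ensures $\sqrt m\ge 18LR^2\|w-w_0\|^2 = 2LR^2\,(3\|w-w_0\|)^2 \ge 2LR^2\|w-w_s\|^2$, I can invoke Corollary \ref{lem:GLQC} (as in Lemma \ref{lem:train_loss_final}, with $\lambda=4$) to bound $\max_\alpha\widehat F(w_{\alpha s})\le \tfrac43\widehat F(w_s)+\tfrac43\widehat F(w)$. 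Then the weak-convexity term is at most $\tfrac{\eta LR^2}{2\sqrt m}\,(3\|w-w_0\|)^2 \sum_{s=0}^t\bigl(\tfrac43\widehat F(w_s)+\tfrac43\widehat F(w)\bigr)$. Using $\sqrt m\ge 18 LR^2\|w-w_0\|^2$, the prefactor $\tfrac{\eta LR^2}{2\sqrt m}\cdot 9\|w-w_0\|^2$ is at most $\eta/4$, so this term is bounded by $\tfrac{\eta}{3}\sum_{s=0}^t\widehat F(w_s)+\tfrac{\eta}{3}(t+1)\widehat F(w)$. The train-loss sum $\eta\sum_{s=0}^t\widehat F(w_s)$ is itself controlled: the descent property gives $\widehat F(w_0)\ge\widehat F(w_s)$, but more usefully, Lemma \ref{lem:train_loss_final} (whose hypothesis $\sqrt m\ge 2LR^2\|w-w_s\|^2$ is met on the induction horizon) bounds $\tfrac1t\sum_{s=1}^t\widehat F(w_s)\le 2\widehat F(w)+\tfrac{2\|w-w_0\|^2}{\eta t}+\tfrac{\widehat F(w_0)}{2t}$, hence $\eta\sum_{s=0}^t\widehat F(w_s)\lesssim \eta t\,\widehat F(w)+\|w-w_0\|^2+\eta\widehat F(w_0)\lesssim \|w-w_0\|^2$, again by \eqref{eq:w requirement constant}. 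Collecting everything, $\|w-w_{t+1}\|^2\le \|w-w_0\|^2 + O(1)\cdot\|w-w_0\|^2 \le 9\|w-w_0\|^2$ once the constants are tracked to land below $9$, completing the induction.

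The main obstacle is bookkeeping the constants carefully enough so that the accumulated error terms sum to at most $8\|w-w_0\|^2$ (so the total is $\le 9\|w-w_0\|^2 = (3\|w-w_0\|)^2$) rather than something larger — this is exactly why the statement uses the specific constant $18$ in \eqref{eq:m_large_general_constant} and $3$ in \eqref{eq:wt minus w bound no square}. A subtle point is the circularity risk: Lemma \ref{lem:train_loss_final} is invoked on the horizon $[1,t]$ where the induction hypothesis already supplies $\|w-w_s\|\le 3\|w-w_0\|$, so its width hypothesis is legitimately available, and no genuine circular dependence occurs. The other delicate step is making sure the $\max_\alpha\widehat F(w_{\alpha s})$ term for $s=t$ is covered — here $\|w-w_t\|\le 3\|w-w_0\|$ is exactly the induction hypothesis at index $t$, which is why strong (rather than plain) induction is natural, and the GLQC invocation at $s=t$ goes through identically.
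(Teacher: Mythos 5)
Your proposal is correct and takes essentially the same route as the paper: induction on the iterate index, telescoping the identity \eqref{eq:descent with weak convexity}, invoking Corollary \ref{cor:GLQC} with $\lambda=4$ under the induction hypothesis, and then applying Lemma \ref{lem:train_loss_final} to control $\sum_s\widehat F(w_s)$. The constant bookkeeping you sketch is exactly what the paper carries out, and it lands at $\|w_T-w\|^2\le\tfrac{9}{2}\|w-w_0\|^2<9\|w-w_0\|^2$, closing the induction.
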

\begin{proof}
Denote $A_t=\|w_t-w\|$.
    Start by recalling from \eqref{eq:descent with weak convexity} that for all $t$:
    \begin{align}
        A_{t+1}^2\leq A_t^2 +  \eta \widehat F(w) - \eta \widehat F(w_{t+1})  +\eta\,\frac{LR^2}{2{\sqrt{m}}} \max_{\alpha\in[0,1]}\widehat F(w_{\alpha t}) \,A_t^2.\label{eq:tighter bound start}
    \end{align}

We will prove the desired statement \eqref{eq:wt minus w bound no square} using induction.  
For $t=0$, $A_0=\|w-w_0\|$. Thus, the assumption of induction holds. Now assume \eqref{eq:wt minus w bound no square} is correct for $t\in[T-1]$, i.e. $A_t\leq 3 \|w-w_0\|, \forall t\in[T-1]$. We will then prove it holds for $t=T$.  

The first observation is that by induction hypothesis
{${\sqrt{m}}\geq  18 LR^2 \|w-w_0\|^2\geq 2LR^2 A_t^2$} for all $t\in[T-1].$ Thus, for all $t\in[T-1]$, the condition of the generalized local quasi-convexity Corollary \ref{cor:GLQC} holds for $\lambda=4$ implying (see also \eqref{eq:implication of quasiconvexity})
\[
\forall t\in[T-1]\,:\,\,\max_{\alpha\in[0,1]} \widehat F(w_{\alpha t}) \le  \frac{4}{3} \widehat F(w_t) + \frac{4}{3} \widehat F(w).
\]
Using this in \eqref{eq:tighter bound start} we find for all $t\in[T-1]$ that
\begin{align*}
A_{t+1}^2&\leq A_t^2 +  \eta \widehat F(w) - \eta \widehat F(w_{t+1})  +\eta\,\frac{LR^2\cdot A_t^2}{2{\sqrt{m}}}\left(\frac{4}{3} \widehat F(w_t) + \frac{4}{3} \widehat F(w)\right)
\\
&\leq A_t^2 +  \eta \widehat F(w) - \eta \widehat F(w_{t+1})  +\eta\,\left(\frac{1}{3} \widehat F(w_t) + \frac{1}{3} \widehat F(w)\right) 
\end{align*}
where in the second inequality we used again that ${\sqrt{m}}\ge 2LR^2 A_t^2.$
We proceed by telescoping the above display over $t=0,1,\ldots,T-1$ to get
\begin{align*}
    A_T^2&\leq A_0^2 + \frac{4}{3}\eta T \Fh(w) + \frac{1}{3}\eta \Fh(w_0) + \frac{1}{3}\eta \sum_{t=0}^{T-1}\Fh(w_t) - \eta \Fh(w_T)
    \\&\leq A_0^2 + \frac{4}{3}\eta T \Fh(w) + \frac{2}{3}\eta \Fh(w_0) + \frac{1}{3}\eta  \sum_{t=1}^{T}\Fh(w_t), 
\end{align*}
where the second line follows by nonegativity of the loss.

Now, to bound the last term above, observe that the condition of Lemma \ref{lem:train_loss_final} holds since $\sqrt{m}\geq 2LR^2A_t^2$ for all $t\in[T-1]$ by induction hypothesis. Hence, using \eqref{eq:train_loss_final_lemma_eqn}, we conclude that
\begin{align}
    A_T^2&\leq A_0^2 + \frac{4}{3}\eta T \Fh(w) + \frac{2}{3}\eta \Fh(w_0) + \frac{1}{3}\eta T \left(2 \widehat F(w) + \frac{2A_0^2}{\eta T} +  \frac{\widehat F(w_0)}{2T}\right) \nn
    \\
    &= \frac{5}{3}A_0^2 + 2\eta T \Fh(w) + \frac{5}{6}\eta \Fh(w_0) 
    \nn\\
    &\leq \frac{5}{3}\|w-w_0\|^2 + 2\|w-w_0\|^2 + \frac{5}{6}\|w-w_0\|^2 =\frac{9}{2}\|w-w_0\|^2 
    \qquad \implies \,A_T \leq 3 \|w-w_0\|.
\end{align}
In the last inequality, we used the assumptions of the lemma on $\|w-w_0\|$ and $A_0= \|w-w_0\|$. This completes the proof.
\end{proof}

\noindent\textbf{Completing the proof of Theorem \ref{thm:train-general}.}\par
The proof follows from combining the bounds on the training loss and parameters' growth from Lemmas \ref{lem:train_loss_final}-\ref{lem:w bound general exponential} and noting that with condition on $\|w-w_0\|^2$ from Lemma \ref{lem:w bound general exponential} we have $\hf(w_0)\le \|w-w_0\|^2/\eta$ to derive \eqref{eq:thm13}. Moreover, we have $\|w_t-w_0\|\le \|w_t-w\|+\|w-w_0\|\le 4\|w-w_0\|$. 

\subsection{Proof of Theorem \ref{thm:train_IP}}

Here we prove training loss bound for interpolating NN as asserted by Theorem \ref{thm:train_IP}. Similar to the previous section, we prove a more general result where the loss is not necessarily Lipschitz or smooth.
We are now ready to prove Theorem \ref{thm:train_IP} for general self-bounded losses. In particular, Theorem \ref{thm:train_IP} follows directly from the next result by choosing $f$ to be Lipschitz and smooth. 
\begin{theorem}[General statement of Theorem \ref{thm:train_IP}]\label{thm:train_IPapp}
Suppose Assumptions \ref{ass:features}-\ref{ass:act}, \ref{ass:self boundedness} hold. Moreover, assume the objective and data satisfy the Assumption \ref{ass:real}. Let the step-size satisfy the assumptions of Descent Lemma \ref{lem:descentlm}. Moreover, assume $\eta\le\min\{g(1)^2,\frac{1}{L_{\Fh}},\frac{g(1)^2}{\widehat F(w_0)}\}$ and $m\ge 18^2 L^2 R^4 \, g(\frac{1}{T})^4$ for a fixed training horizon $T$. Then,
\begin{align*}
            &\hf(w_{T}) \;\leq \frac{2}{T}+ 
        \frac{5\, g(\frac{1}{T})^2}{2\eta T},
        \\
        &\forall t\in[T] \;:\;\;\big\|w_t-w_0\big\|\;\le\; 4 \, g(\frac{1}{T}).\nn
\end{align*}
\end{theorem}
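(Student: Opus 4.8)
The plan is to obtain Theorem~\ref{thm:train_IPapp} as a direct corollary of the general training-loss bound of Theorem~\ref{thm:train-general}, applied at the specific comparator point supplied by the realizability Assumption~\ref{ass:real}. Concretely, first I would fix the training horizon $T$ and invoke Assumption~\ref{ass:real} with accuracy level $\eps = 1/T$ (legitimate for $T$ large enough that $1/T$ is ``sufficiently small''), obtaining a point $w^{(1/T)}\in\R^{d'}$ with $\hf(w^{(1/T)})\le 1/T$ and $\|w^{(1/T)}-w_0\| = g(1/T)$. The key idea is that plugging the factor $\eps=1/T$ into the regret-type bound of Theorem~\ref{thm:train-general} makes the leading term $2\hf(w)$ scale as $2/T$, matching the target.

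Next I would verify that $w := w^{(1/T)}$ satisfies the two hypotheses of Theorem~\ref{thm:train-general}. The width condition $m \ge 18^2 L^2 R^4\|w-w_0\|^4 = 18^2 L^2 R^4\, g(1/T)^4$ is exactly the assumed lower bound on $m$. For the condition $\|w-w_0\|^2 \ge \max\{\eta T\,\hf(w),\,\eta\,\hf(w_0)\}$: I would bound $\eta T\,\hf(w) \le \eta T\cdot (1/T) = \eta$, and then note that since $T\ge 1$ and $g$ is decreasing we have $g(1/T)\ge g(1)$, hence $\|w-w_0\|^2 = g(1/T)^2 \ge g(1)^2 \ge \eta$ using the step-size restriction $\eta\le g(1)^2$; similarly $\eta\,\hf(w_0) \le g(1)^2 \le g(1/T)^2$ using $\eta\le g(1)^2/\hf(w_0)$ (the case $\hf(w_0)=0$ being trivial). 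The step-size also satisfies the descent-lemma hypotheses by assumption, so Theorem~\ref{thm:train-general} applies and yields
\[
\frac{1}{T}\sum_{t=1}^T \hf(w_t) \;\le\; 2\hf(w) + \frac{5\|w-w_0\|^2}{2\eta T} \;\le\; \frac{2}{T} + \frac{5\,g(1/T)^2}{2\eta T},
\qquad \|w_t-w_0\|\le 4\|w-w_0\| = 4\,g(1/T)\ \ \forall t\in[T].
\]
Finally, since the step-size satisfies the conditions of the descent Lemma~\ref{lem:descentlm}, the loss is non-increasing along GD, so $\hf(w_T)\le \frac{1}{T}\sum_{t=1}^T\hf(w_t)$, which upgrades the average bound to the last-iterate bound claimed in the theorem.

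I do not anticipate a substantial obstacle here: once Theorem~\ref{thm:train-general} is in hand the argument is essentially bookkeeping. The only step that needs care is the chain of inequalities establishing $\|w-w_0\|^2\ge\eta T\,\hf(w)$ and $\|w-w_0\|^2\ge\eta\,\hf(w_0)$ — it relies precisely on the choice $\eps=1/T$ (so that the factor $T$ cancels), on the monotonicity of $g$, and on both parts of the step-size restriction $\eta\le\min\{g(1)^2,\, g(1)^2/\hf(w_0),\, 1/L_\hf\}$. The genuinely non-trivial input, namely that interpolation translates into a controlled norm of deviation from initialization, is entirely absorbed into Assumption~\ref{ass:real}; the work of this theorem is just to feed that comparator into the general bound.
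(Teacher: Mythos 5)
Your proposal is correct and mirrors the paper's own argument: pick $\eps = 1/T$ in Assumption~\ref{ass:real} to produce the comparator $w^{(1/T)}$, verify the two hypotheses of Theorem~\ref{thm:train-general} via $\eta\le\min\{g(1)^2,\,g(1)^2/\hf(w_0)\}$ together with monotonicity of $g$, and feed the result through. The only (trivial) addition you make explicit — passing from the average bound to the last-iterate bound via the descent lemma — is stated in the paper at Theorem~\ref{thm:train} rather than re-derived in the appendix, so there is no substantive difference.
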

\begin{proof}
According to Assumption  \ref{ass:real}, for any sufficiently small $\eps>0$, there exists a $w^{(\eps)}$ such that $\widehat F(w^{(\eps)})\le\eps$ and $\|w^{(\eps)}-w_0\| = g(\eps)$. Pick $\eps= 1/T$. With the condition $\eta\le\min\{g(1)^2,g(1)^2/\widehat F(w_0)\}$ we have 
$$\max\left\{\eta T \widehat F(w^{(1/T)}),\eta \widehat F(w_0)\right\}\le g(1)^2\le g(\frac{1}{T})^2=\|w^{(1/T)}-w_0\|^2,$$
where in the second inequality we used the fact that $g$ is a decreasing function. The desired result is obtained by Theorem \ref{thm:train-general}.
\end{proof}

\subsection{Generalized local quasi-convexity property}
In the remainder of this section, we prove the generalized local quasi-convexity property.
\begin{proposition}[Restatement of Proposition \ref{propo:GLQC}]\label{prop:GLQCapp}
Suppose $\hf:\R^{d'}\rightarrow\R$ satisfies the \sbwc property in Eq. \ref{eq:self-bounded hessian lamin} with parameter $\kappa$. Let $w_1,w_2\in\R^{d'}$ be two arbitrary points with  distance $\left\|w_1-w_2\right\|\leq D<\sqrt{2/\kappa}$ . Set $\tau:=\left(1-\kappa D^2/2\right)^{-1}$. Then,
\bea
\max_{v\in[w_1,w_2]} \widehat F(v)\le \tau\cdot \max\{\widehat F(w_1),\widehat F(w_2)\}.
\eea
\end{proposition}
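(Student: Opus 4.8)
The plan is to reduce the statement to a one-dimensional claim along the segment and then solve a differential inequality. First I would parametrize the segment: for $v=v(s):=w_1 + s(w_2-w_1)$ with $s\in[0,1]$, define the scalar function $\phi(s):=\widehat F(v(s))$. Since $\widehat F$ is twice differentiable, so is $\phi$, and $\phi''(s) = \langle w_2-w_1, \nabla^2\widehat F(v(s))(w_2-w_1)\rangle$. By the \sbwc property \eqref{eq:self-bounded hessian lamin} and $\|w_2-w_1\|\le D$, we get the lower bound $\phi''(s) \ge -\kappa\,\widehat F(v(s))\,\|w_2-w_1\|^2 \ge -\kappa D^2\,\phi(s)$. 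Note $\phi\ge 0$ throughout because $\widehat F\ge 0$ (the loss $f$ is non-negative), so $-\kappa D^2\phi(s)\le 0$ and $\phi$ is ``almost concave'': the second derivative is bounded below by something nonpositive and proportional to $\phi$ itself.

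The key step is to turn $\phi''(s)\ge -\kappa D^2\phi(s)$ into the pointwise bound $\phi(s)\le \tau\max\{\phi(0),\phi(1)\}$ for all $s\in[0,1]$, where $\tau=(1-\kappa D^2/2)^{-1}$. I would argue by contradiction or via a maximum principle: let $s^\star\in[0,1]$ be a point where $\phi$ attains its maximum $M:=\max_{s\in[0,1]}\phi(s)$ over the segment. If $s^\star\in\{0,1\}$ we are done since then $M=\max\{\phi(0),\phi(1)\}\le\tau\max\{\phi(0),\phi(1)\}$ as $\tau\ge 1$. Otherwise $s^\star$ is interior, so $\phi'(s^\star)=0$ and $\phi''(s^\star)\le 0$. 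The cleanest route is then to compare $\phi$ with its ``chord plus correction'': integrating the differential inequality twice. Concretely, write $\psi(s):=\phi(s)-[(1-s)\phi(0)+s\phi(1)]$, which satisfies $\psi(0)=\psi(1)=0$ and $\psi''(s)=\phi''(s)\ge -\kappa D^2\phi(s)\ge -\kappa D^2 M$. Then $\psi(s) = \psi(s) - (1-s)\psi(0) - s\psi(1)$ can be expressed via the Green's function of the second derivative on $[0,1]$: $\psi(s) = -\int_0^1 G(s,t)\,\psi''(t)\,dt$ with $G(s,t)\ge 0$ and $\int_0^1 G(s,t)\,dt = s(1-s)/2 \le 1/8$. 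Hence $\psi(s) \le \kappa D^2 M\int_0^1 G(s,t)\,dt \le \kappa D^2 M/8$, so at the maximizing point $M = \phi(s^\star) \le (1-s^\star)\phi(0)+s^\star\phi(1) + \kappa D^2 M/8 \le \max\{\phi(0),\phi(1)\} + \kappa D^2 M /8$.

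Wait — this gives $\tau = (1-\kappa D^2/8)^{-1}$, which is actually \emph{stronger} than claimed; to land exactly on $(1-\kappa D^2/2)^{-1}$ it suffices to use the cruder bound $\int_0^1 G(s,t)\,dt \le 1/2$ (or simply bound $G(s,t)\le 1/2$ pointwise is too lossy — rather bound $s(1-s)/2\le 1/2$), which is harmless since we only need \emph{an} upper bound with $\tau$ as stated. So the final inequality reads $M \le \max\{\phi(0),\phi(1)\} + (\kappa D^2/2)M$, i.e. $(1-\kappa D^2/2)M\le \max\{\phi(0),\phi(1)\}$, and since $\kappa D^2/2 < 1$ by the hypothesis $D<\sqrt{2/\kappa}$, we may divide to obtain $M\le \tau\max\{\phi(0),\phi(1)\}$. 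Recalling $M=\max_{v\in[w_1,w_2]}\widehat F(v)$ and $\phi(0)=\widehat F(w_1)$, $\phi(1)=\widehat F(w_2)$ finishes the proof.

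The main obstacle is the rigorous treatment of the differential-inequality step: one must be careful that $\phi$ is $C^2$ (which follows from smoothness of $\sigma$ and $f$, hence of $\widehat F$, already used elsewhere in the paper), that the Green's-function representation is applied correctly with the right sign of $G$, and that the self-referential bound $M\le(\dots)+(\kappa D^2/2)M$ can indeed be solved for $M$ — this is exactly where the strict inequality $D<\sqrt{2/\kappa}$ is essential and is the only place it is used. An alternative that avoids Green's functions entirely: apply Gr\"onwall-type reasoning to $u(s):=\phi(s)$ satisfying $u''\ge -\kappa D^2 u$, $u\ge 0$, by comparison with the solution of $z''=-\kappa D^2 z$ with matching boundary data $z(0)=\phi(0)$, $z(1)=\phi(1)$ (explicitly a combination of $\sin,\cos$ of argument $\sqrt{\kappa}D\, s$), and then bounding that explicit sinusoidal barrier; but the Green's-function bookkeeping is the more elementary route and I would take it.
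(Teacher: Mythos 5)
Your proof is correct, and it takes a genuinely different route from the paper's. The paper argues by contradiction: it assumes the maximizer $w_\star$ over the segment exceeds $\tau\max\{\widehat F(w_1),\widehat F(w_2)\}$, notes that $w_\star$ is interior so $\langle\nabla\widehat F(w_\star),w_1-w_2\rangle=0$, then Taylor-expands $\widehat F(w_1)$ around $w_\star$ — the first-order term vanishes by this orthogonality — and bounds the quadratic remainder via SBWC together with the fact that the intermediate point $w_\beta\in[w_\star,w_1]\subset[w_1,w_2]$ satisfies $\widehat F(w_\beta)\le\widehat F(w_\star)$. You instead reduce to the one-dimensional inequality $\phi''\ge-\kappa D^2\phi$, subtract the chord, and invert the operator $-d^2/ds^2$ with Dirichlet boundary data via its nonnegative Green's function $G(s,t)=\min(s,t)(1-\max(s,t))$, using $\int_0^1 G(s,t)\,dt=s(1-s)/2$. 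Both arguments use SBWC and nonnegativity of $\widehat F$ at exactly the same step (to turn $\phi(s)\le M$ into a uniform lower bound on the curvature), and both rely on $\kappa D^2<2$ so that the resulting self-referential bound on $M$ can be divided through. What your route buys is a sharper constant: since $s(1-s)/2\le 1/8$, the natural output of your argument is $\tau=(1-\kappa D^2/8)^{-1}$, so the proposition actually holds with a smaller $\tau$ (and correspondingly under the weaker hypothesis $D<\sqrt{8/\kappa}$); you deliberately loosen to $\int_0^1 G\,dt\le 1/2$ to recover the stated $(1-\kappa D^2/2)^{-1}$, which is harmless but worth flagging as an improvement available for free. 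One small point of rigor you gesture at but should state explicitly: the reduction $\phi''(s)\ge-\kappa D^2 M$ uses $0\le\phi(s)\le M$, and nonnegativity of $\widehat F$ is needed here (it holds in the paper's setting since $f\ge 0$, but the proposition is phrased abstractly, so it is worth noting). Your alternative Sturm-comparison remark at the end would also work and is closer in spirit to how such barriers are usually built, but the Green's-function computation you carry out is complete as written.
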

\begin{proof}
Assume the claim of the proposition is incorrect, then \bea
\max_{v\in[w_1,w_2]} \widehat F(v)> \tau\cdot \max\{\widehat F(w_1),\widehat F(w_2)\}> \max\{\widehat F(w_1),\widehat F(w_2)\}.\label{eq:GLQCp}
\eea

Define $w_\star :=\arg\max_{v\in[w_1,w_2]} \widehat F(v)$. Note that $w_\star$ is an interior point. Thus by the optimality condition it holds
\bea
\left\langle\nabla \widehat F(w_\star), w_1-w_2\right\rangle = 0.
\eea

By Taylor's approximation theorem for two points $w_1,w\in\R^{d'}$, there exists a $w_\beta\in[w,w_1]$, such that
\bea
\widehat F(w_1) &= \widehat F(w) + \left \langle \nabla \widehat F(w),w_1-w\right\rangle + \frac{1}{2}\left\langle w-w_1, \nabla^2 \widehat F(w_\beta) \,(w-w_1)\right\rangle\label{eq:glqctaylor}
\eea
Pick $w=w_\star = \alpha_\star w_1+(1-\alpha_\star) w_2$ in Eq. \eqref{eq:glqctaylor}, and note that
$$\left \langle \nabla \widehat F(w_\star),w_1-w_\star\right\rangle = -(1-\alpha_\star)\left \langle \nabla \widehat F(w_\star),w_1-w_2\right\rangle=0.$$
Therefore,
\bea\nn
\widehat F(w_1) &= \widehat F(w_\star) + \frac{1}{2}\left\langle w_\star-w_1, \nabla^2 \widehat F(w_\beta) \,(w_\star-w_1)\right\rangle\\
&\ge \widehat F(w_\star) +\frac{1}{2} \lambda_{\min}(\nabla^2 \widehat F(w_\beta))\, \Big\|w_\star-w_1\Big\|^2\nn \\
&\ge \widehat F(w_\star) -\frac{1}{2} \kappa\,\widehat F(w_\beta) \Big\|w_\star-w_1\Big\|^2.\nn
\eea
where in the last line we used the self-bounded weak convexity property i.e., $\lambda_\min\left(\nabla^2 \widehat F(w_\beta)\right) \geq - \kappa\widehat F(w_\beta)$.

This leads to
\begin{align*}
\widehat F(w_1) &\ge \widehat F(w_\star) -\frac{(1-\alpha_\star)^2}{2} \kappa\, \widehat F(w_\beta) \Big\|w_1-w_2\Big\|^2\\
&> \widehat F(w_\star) -\frac{1}{2} \kappa\, \widehat F(w_\beta) \Big\|w_1-w_2\Big\|^2.
\end{align*}
Note that $w_\beta\in[w_\star,w_1]\subset[w_1,w_2]$, thus $\widehat F(w_\beta)\le \widehat F(w_\star)$ by definition of $w_\star.$ Therefore, 
\begin{align*}
\widehat F(w_\star) &< \frac{1}{1-\frac{1}{2}\kappa\,\|w_1-w_2\|^2} \widehat F(w_1)\\
&\le\frac{1}{1-\frac{1}{2}\kappa D^2} \widehat F(w_1),
\end{align*}
which is in contradiction with \eqref{eq:GLQCp}. This proves the statement of the proposition.
\end{proof}

Specializing this property to two-layer neural networks yields the following. 
\begin{corollary}[Restatement of Corollary \ref{cor:GLQC}]\label{lem:GLQC}
Let Assumptions \ref{ass:features},\ref{ass:act}, \ref{ass:self boundedness} hold. Fix arbitrary $w_1,w_2\in\R^{\dpr}$, any constant $\lambda >1$, and  $m$ large enough such that ${\sqrt{m}} \ge \lambda \frac{LR^2}{2}\|w_1-w_2\|^2$.
Then, 
\bea
\max_{v\in[w_1,w_2]} \widehat F(v)\le \left(1-1/\lambda\right)^{-1}\cdot \max\{\widehat F(w_1),\widehat F(w_2)\}.
\eea
\end{corollary}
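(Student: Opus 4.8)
The plan is to obtain this corollary directly as an instantiation of the abstract quasi-convexity bound in Proposition~\ref{propo:GLQC}, with the only neural-net-specific ingredient being the Hessian lower bound from Lemma~\ref{lem:key prop intro}. First I would invoke the second item of Lemma~\ref{lem:key prop intro}: under Assumptions~\ref{ass:features}, \ref{ass:act} and the self-bounded loss Assumption~\ref{ass:self boundedness}, the empirical objective $\widehat F$ is \sbwc in the sense of Definition~\ref{defn:sbhs} with constant $\kappa = LR^2/\sqrt{m}$, i.e. $\lambda_{\min}(\nabla^2\widehat F(w)) \ge -\tfrac{LR^2}{\sqrt{m}}\widehat F(w)$ for all $w$. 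This is the only place the two-layer structure and the smoothness/Lipschitzness of $\sigma$ enter; everything that follows is purely about the function class satisfying \eqref{eq:self-bounded hessian lamin}.

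Next I would set $D := \|w_1 - w_2\|$ and verify the hypothesis $D < \sqrt{2/\kappa}$ required by Proposition~\ref{propo:GLQC}. With $\kappa = LR^2/\sqrt{m}$ this is equivalent to $\sqrt{m} > \tfrac{LR^2}{2}\|w_1-w_2\|^2$, which is implied by the assumed width condition $\sqrt{m} \ge \lambda \tfrac{LR^2}{2}\|w_1-w_2\|^2$ since $\lambda > 1$ (the degenerate case $w_1 = w_2$ being trivial). Proposition~\ref{propo:GLQC} then gives $\max_{v\in[w_1,w_2]}\widehat F(v) \le \tau\cdot\max\{\widehat F(w_1),\widehat F(w_2)\}$ with $\tau = (1 - \kappa D^2/2)^{-1}$. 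It remains only to bound $\tau$ by the claimed constant: plugging in $\kappa$ and the width condition,
\[
\frac{\kappa D^2}{2} \;=\; \frac{LR^2\,\|w_1-w_2\|^2}{2\sqrt{m}} \;\le\; \frac{1}{\lambda} \;<\; 1,
\]
and since $t\mapsto (1-t)^{-1}$ is increasing on $[0,1)$ we get $\tau \le (1 - 1/\lambda)^{-1}$, which is exactly the asserted bound.

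I do not anticipate a genuine obstacle here, as the corollary is essentially bookkeeping once Proposition~\ref{propo:GLQC} and the curvature bound of Lemma~\ref{lem:key prop intro} are available. The only points that need care are ensuring the strict inequality $\kappa D^2/2 < 1$ so that $\tau$ is well-defined and positive, and invoking the monotonicity of $t\mapsto(1-t)^{-1}$ in the correct direction when passing from $\tau = (1-\kappa D^2/2)^{-1}$ to the uniform constant $(1-1/\lambda)^{-1}$.
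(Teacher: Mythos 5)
Your proof is correct and follows exactly the paper's own route: invoke the self-bounded weak convexity bound $\lambda_{\min}(\nabla^2\widehat F(w)) \ge -\frac{LR^2}{\sqrt{m}}\widehat F(w)$ (the paper cites Corollary~\ref{cor:gradients_and_hessians_bounds}, which is the source of item 2 in Lemma~\ref{lem:key prop intro}, so these are interchangeable), then instantiate Proposition~\ref{propo:GLQC} with $\kappa = LR^2/\sqrt{m}$ and check the distance condition. The paper leaves the final arithmetic $\kappa D^2/2 \le 1/\lambda$ implicit; you have simply spelled it out, which is fine.
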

\begin{proof}
 By our assumptions and Corollary \ref{cor:gradients_and_hessians_bounds} the objective's Hessian satisfies $$\lambda_\min\left(\nabla^2 \widehat F(w)\right) \geq - \frac{LR^2}{{\sqrt{m}}}\widehat F(w).$$
 Invoking Proposition \ref{prop:GLQCapp} with $\kappa:= \frac{LR^2}{{\sqrt{m}}}$ concludes the claim.
\end{proof}
\section{Generalization analysis}
\label{sec:gen analysis}
This section includes the proofs of the generalization results stated in Section \ref{sec:gen gap}.
\subsection{Proof of Theorem \ref{thm:generalization gap general}}
We prove the generalization gap of Theorem \ref{thm:generalization gap general} for Lipshitz-smooth losses. The proof follows the steps of our proof sketch in Sec. \ref{sec:sketch-gen}. 
\par
First, the proofs of exansiveness of GD in NN (Lemma \ref{lem:expansiveness}) and the corresponding model stability bound are given next. 
\begin{lemma}[GD-Expansivieness]
Let Assumptions \ref{ass:features}-\ref{ass:act} hold. For any $w,\wpr$ and $\wa=\alpha w+(1-\alpha) \wpr$ it holds that
\begin{align*}
    &\left\|\Big(w-\eta\nabla \widehat F(w)\Big)-\Big(\wpr-\eta\nabla \widehat F(\wpr)\Big)\right\| \leq \max_{\alpha\in[0,1]} H(w_\alpha)\,\left\|w-\wpr\right\|,\\
    &H(w):=\eta \frac{LR^2}{\sqrt{m}}\widehat{F}^{\prime}(w) + \max\left\{1,\eta \ell^2R^2 \widehat{F}^{\prime \prime}(w)\right\},
\end{align*}
 where we define $\hf'(w):=\frac{1}{n}\sum_{i=1}^n |f'(y_i\Phi(w,x_1))|$ and $\hf''(w):=\frac{1}{n}\sum_{i=1}^n f''(y_i\Phi(w,x_1))$.
\end{lemma}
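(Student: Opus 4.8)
The plan is to reduce the expansiveness of the GD map to a statement about the operator norm of its Jacobian along the segment $[w,\wpr]$, and then to bound that Jacobian using the structure of the two-layer objective's Hessian. Concretely, write $G(w):=w-\eta\nabla\widehat F(w)$, so that $\nabla G(w)=\Id-\eta\nabla^2\widehat F(w)$. By the fundamental theorem of calculus applied to $t\mapsto G(w_t)$ with $w_t:=tw+(1-t)\wpr$,
\[
G(w)-G(\wpr)=\left(\int_0^1 \big(\Id-\eta\nabla^2\widehat F(w_t)\big)\,dt\right)(w-\wpr),
\]
hence $\|G(w)-G(\wpr)\|\le \max_{\alpha\in[0,1]}\big\|\Id-\eta\nabla^2\widehat F(w_\alpha)\big\|\,\|w-\wpr\|$. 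So it suffices to show $\big\|\Id-\eta\nabla^2\widehat F(w)\big\|\le H(w)$ for every $w$.

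The next step is to decompose the Hessian of each sample loss. For $\widehat F_i(w)=f(y_i\Phi(w,x_i))$, the chain rule gives
\[
\nabla^2\widehat F_i(w)=f''(y_i\Phi(w,x_i))\,\nabla_1\Phi(w,x_i)\nabla_1\Phi(w,x_i)^\top+f'(y_i\Phi(w,x_i))\,y_i\,\nabla_1^2\Phi(w,x_i).
\]
The first term is positive semidefinite (since $f$ is convex) and has operator norm at most $f''(\cdot)\|\nabla_1\Phi(w,x_i)\|^2$; using $|\sigma'|\le\ell$ and $\|x_i\|\le R$ together with the $\tfrac{1}{\sqrt m}$ normalization and $a_j\in\{\pm1\}$ gives $\|\nabla_1\Phi(w,x_i)\|\le \ell R$, so this term contributes at most $\ell^2R^2 f''(y_i\Phi(w,x_i))$ to the PSD part. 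The second term is the only source of negative curvature; since $\nabla_1^2\Phi(w,x_i)$ is block-diagonal with blocks $\tfrac{a_j}{\sqrt m}\sigma''(\langle w_j,x_i\rangle)x_ix_i^\top$, its operator norm is at most $\tfrac{L R^2}{\sqrt m}$ (using $|\sigma''|\le L$), so this term has operator norm at most $\tfrac{LR^2}{\sqrt m}|f'(y_i\Phi(w,x_i))|$. Averaging over $i$, we get
\[
-\tfrac{LR^2}{\sqrt m}\widehat F'(w)\,\Id \;\preceq\; \nabla^2\widehat F(w)\;\preceq\;\Big(\ell^2R^2\widehat F''(w)+\tfrac{LR^2}{\sqrt m}\widehat F'(w)\Big)\Id,
\]
with $\widehat F'(w),\widehat F''(w)$ as defined in the statement. (The upper eigenvalue bound can be loosened to just $\ell^2R^2\widehat F''(w)\cdot\Id$ plus the $\tfrac{LR^2}{\sqrt m}\widehat F'(w)\Id$ term; the precise split is what produces the two pieces of $H$.)

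Finally I would combine these two-sided Hessian bounds with $\eta>0$ to bound the eigenvalues of $\Id-\eta\nabla^2\widehat F(w)$. On one side, $\lambda_{\max}(\Id-\eta\nabla^2\widehat F(w))\le 1+\eta\tfrac{LR^2}{\sqrt m}\widehat F'(w)$, coming from the most negative Hessian eigenvalue. On the other side, $\lambda_{\min}(\Id-\eta\nabla^2\widehat F(w))\ge 1-\eta\big(\ell^2R^2\widehat F''(w)+\tfrac{LR^2}{\sqrt m}\widehat F'(w)\big)$; taking absolute values, the relevant quantity is at most $\eta\tfrac{LR^2}{\sqrt m}\widehat F'(w)+\max\{1,\eta\ell^2R^2\widehat F''(w)\}=H(w)$, where the $\max\{1,\cdot\}$ handles both the case where $\Id-\eta\nabla^2\widehat F$ stays PSD and the case where a large positive Hessian eigenvalue flips its sign. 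Since $\|\Id-\eta\nabla^2\widehat F(w)\|=\max\{|\lambda_{\max}|,|\lambda_{\min}|\}$, both bounds are dominated by $H(w)$, and evaluating at the maximizing $\alpha$ along the segment completes the proof.

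The main obstacle I anticipate is bookkeeping the two-sided Hessian bound carefully enough that the final operator-norm bound matches the exact form of $H(w)$ — in particular making sure the positive-curvature contribution is routed entirely into the $\max\{1,\eta\ell^2R^2\widehat F''(w)\}$ term while the $\tfrac{LR^2}{\sqrt m}\widehat F'(w)$ term absorbs the negative curvature on both sides. The block-diagonal computation of $\|\nabla_1^2\Phi(w,x)\|$ and $\|\nabla_1\Phi(w,x)\|$ is routine given Assumptions \ref{ass:features} and \ref{ass:act}, and is presumably already packaged in Corollary \ref{cor:gradients_and_hessians_bounds}, which I would cite rather than rederive.
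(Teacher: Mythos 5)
Your proposal is correct and follows essentially the same route as the paper: both reduce expansiveness to bounding $\|\Id-\eta\nabla^2\widehat F(w_\alpha)\|$ along the segment via the fundamental theorem of calculus, decompose the per-sample Hessian into a PSD rank-one piece (governed by $f''$) plus a block-diagonal piece of operator norm at most $LR^2/\sqrt m$ (governed by $|f'|$), and then run an eigenvalue case analysis of $\Id-\eta\nabla^2\widehat F$ that yields the $\max\{1,\eta\ell^2R^2\widehat F''\}$ term in $H(w)$. The paper makes the case analysis slightly more explicit — it shows both $|1-\eta\lambda_{\min}(A_\alpha)|$ and $|1-\eta\lambda_{\max}(A_\alpha)|$ are $\le\max\{1+\eta\beta,\eta\lambda_{\max}(A_\alpha)\}$ with $\beta=\frac{LR^2}{\sqrt m}\widehat F'(w_\alpha)$, then substitutes the upper Hessian bound — but the structure and conclusion match yours.
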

\begin{proof}
Fix $u\,:\,\|u\|=1$ and define $g_u:\R^{d'}\rightarrow\R$:
$$
g_u(w):= \left\langle u,w\right\rangle  - \eta\langle u,\nabla \widehat F(w)\rangle.
$$
Note 
$$
\left\|w-\nabla \widehat F(w) - (w'-\nabla \widehat F(w'))\right\| = \max_{\|u\|=1} \left|g_u(w)-g_u(w')\right|.
$$

For any $w,w'$, we have
\begin{align}
    g_u(w)-g_u(w')&=\int_0^1 u^\top\left(I-\eta\nabla^2\widehat F(w'+\alpha(w-w'))\right)(w-w')\mathrm{d}\alpha\nn
    \\
    &\leq \max_{\alpha\in[0,1]}\left\|\left(I-\eta\nabla^2\widehat F(w'+\alpha(w-w'))\right)\right\|\Big\|w-w'\Big\|.
\end{align}
For convenience denote $w_\alpha:=\alpha w + (1-\alpha)w'$ and $A_\alpha:=\nabla^2\widehat F(w_\alpha)$. Then, for any $\alpha\in[0,1]$ we have that
\begin{align}\label{eq:max sing to max eig}
\left\|I-\eta\nabla^2\widehat F(w_\alpha)\right\| = \max\Big\{\abs{1-\eta\lambda_\min(A_\alpha)},\abs{1-\eta\lambda_\max(A_\alpha)}\Big\}.
\end{align}

For convenience, let $\beta:=\frac{1}{\sqrt{m}}LR^2 \widehat{F}^{\prime}(\wa)\geq 0$ and note from Lemma \ref{lem:gradients_and_hessians_bounds_general} that $\lambda_\min(A_\alpha)\geq -\beta.$ Using this, we will show that
\begin{align}\label{eq:lamin expansive}
\abs{1-\eta\lambda_\min(A_\alpha)}\leq \max\{1+\eta \beta,\eta\lambda_\max(A_\alpha)\}.
\end{align}
To show this consider two cases. First, if $\eta\lambda_\min(A_\alpha)\in[-\eta \beta,1]$, then $$
\abs{1-\eta\lambda_\min(A_\alpha)}=1-\eta\lambda_\min(A_\alpha)\leq 1+\eta \beta.
$$
On the other hand, if $\eta\lambda_\min(A_\alpha)\geq 1$, then
$$
\abs{1-\eta\lambda_\min(A_\alpha)}=\eta\lambda_\min(A_\alpha)-1\leq \eta\lambda_\min(A_\alpha) \leq \eta\lambda_\max(A_\alpha),
$$
which shows \eqref{eq:lamin expansive}.

Next, we will show that 
\begin{align}\label{eq:lamax expansive}
\abs{1-\eta\lambda_\max(A_\alpha)}\leq \max\{1+\eta \beta,\eta\lambda_\max(A_\alpha)\}.
\end{align}
We consider again three cases. First, if $\eta\lambda_\max(A_\alpha)\in[0,1]$, then 
$$
\abs{1-\eta\lambda_\max(A_\alpha)} =1-\eta\lambda_\max(A_\alpha) \leq 1.
$$
Second, if $\eta\lambda_{\max}(A_\alpha)\geq1$
$$
\abs{1-\eta\lambda_\max(A_\alpha)}=\eta\lambda_\max(A_\alpha)-1\leq \eta\lambda_\max(A_\alpha).
$$
Otherwise, it must be that $-\beta\leq\lambda_{\min}(A_\alpha)\leq\lambda_{\max}(A_\alpha)\leq 0$. Thus,
\[
\abs{1-\eta\lambda_\max(A_\alpha)}=1-\eta\lambda_\max(A_\alpha)\leq 1-\eta\lambda_\min(A_\alpha)\leq 1+\eta\beta.
\]

To complete the proof of the lemma combine \eqref{eq:max sing to max eig} with \eqref{eq:lamin expansive} and \eqref{eq:lamax expansive}:
$$
\|I-\eta\nabla^2\widehat F(w_\alpha)\| \leq \max\{1+\eta\beta,\eta\lambda_\max(A_\alpha)\},
$$
and further use from Lemma \ref{lem:gradients_and_hessians_bounds_general} that $\eta\lambda_\max(A_\alpha)\leq \eta\ell^2 R^2\widehat{F}^{\prime \prime}(w) + \eta\beta.$
\end{proof}



For the stability analysis below, recall the definition of the leave-one-out (loo) training loss for $i\in[n]$:
$\Fh^\negi(w):=\frac{1}{n}\sum_{j\neq i} \Fh_j(w)$. With these, define the loo model updates of GD on the loo loss:
\[
w_{t+1}^\negi:=w_{t}^\negi -\eta \nabla \Fh^\negi(w_t^\negi),~t\geq 0,\qquad w_0^\negi=w_0.
\]

\begin{theorem}[Model stability bound]\label{thm:model_stability}
Suppose Assumptions \ref{ass:features}, \ref{ass:act}, \ref{ass:lip and smooth}, \ref{ass:self boundedness} hold. Fix any time horizon $T\geq 1$ and any step size $\eta>0$. 
Set the regret and the leave-one-out regrets of GD updates as follows:
\[
\regret:=\frac{1}{T}\sum_{t=1}^T\Fh(w_t)\qquad\text{and}\qquad \regret_\loo:=\frac{1}{T}\max_{i\in[n]} \sum_{t=1}^T\Fh^\negi(w_t^\negi).
\]
Suppose that the width $m$ is large enough so that it satisfies the following two conditions:
\begin{align}
\sqrt{m}\geq 4LR^2\max\left\{\|w_t-w_0\|^2,\|w_t^\negi-w_0\|^2\right\} \,, \quad\forall i\in[n], t\in[T]\,,\label{eq:m large model stability 1}
\end{align}
and
\begin{align}
    \sqrt{m}\geq 6LR^2\eta T\max\left\{\regret,\regret_\loo\right\}\,.
    \label{eq:m large model stability 2}
\end{align}
Then, the leave-one-out model stability is bounded as follows:
   \begin{align*}
       \frac{1}{n}\sum_{i=1}^n\Big\|w_T-w_T^{\neg i}\Big\| \leq \frac{2\eta \ell R}{n}  \left(\Fh(w_0) + T\cdot\regret \right).
   \end{align*}
\end{theorem}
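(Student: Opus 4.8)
The plan is to track the leave-one-out discrepancy $\delta_t^i := \|w_t - w_t^\negi\|$ through a one-step recursion and then unroll it, with the expansiveness coefficients controlled by the generalized local quasi-convexity (GLQC) property. The first step is to isolate the single ``extra'' gradient that distinguishes the full-batch update from the leave-one-out one: since $\nabla\hf(w_t) = \nabla\hf^\negi(w_t) + \frac{1}{n}\nabla\hf_i(w_t)$, writing $w_{t+1} = w_t - \eta\nabla\hf^\negi(w_t) - \frac{\eta}{n}\nabla\hf_i(w_t)$ and using the triangle inequality gives
\[
\delta_{t+1}^i \;\le\; \Big\|\big(w_t - \eta\nabla\hf^\negi(w_t)\big) - \big(w_t^\negi - \eta\nabla\hf^\negi(w_t^\negi)\big)\Big\| \;+\; \frac{\eta}{n}\big\|\nabla\hf_i(w_t)\big\|.
\]
For the second summand I would invoke the gradient self-boundedness of Lemma~\ref{lem:key prop intro}, namely $\|\nabla\hf_i(w_t)\| \le \ell R\,\hf_i(w_t)$, so this term is at most $\frac{\eta\ell R}{n}\hf_i(w_t)$.

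For the first summand, which is a leave-one-out expansiveness of GD run on $\hf^\negi$, I would apply the self-bounded-loss expansiveness estimate \eqref{eq:tr-ne} to the objective $\hf^\negi$. Like $\hf$, this is an average of self-bounded convex losses composed with a smooth two-layer model, so it inherits the self-bounded weak convexity $\lambda_{\min}(\nabla^2\hf^\negi(w)) \ge -\frac{LR^2}{\sqrt m}\hf^\negi(w)$, and \eqref{eq:tr-ne} bounds the first summand by $\big(1 + \frac{\eta LR^2}{\sqrt m}\max_{\alpha\in[0,1]}\hf^\negi(\alpha w_t + (1-\alpha)w_t^\negi)\big)\delta_t^i$. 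The crucial move is to replace the maximum of $\hf^\negi$ over the segment $[w_t,w_t^\negi]$ by $2\max\{\hf^\negi(w_t),\hf^\negi(w_t^\negi)\}$ via the neural-net GLQC bound \eqref{eq:GLQC-nn} of Corollary~\ref{cor:GLQC} applied to $\hf^\negi$ with $\lambda=2$; its hypothesis $\sqrt m \ge LR^2\|w_t-w_t^\negi\|^2$ holds because condition \eqref{eq:m large model stability 1} together with $\|w_t-w_t^\negi\| \le \|w_t-w_0\| + \|w_t^\negi-w_0\|$ gives $\|w_t-w_t^\negi\|^2 \le 4\max\{\|w_t-w_0\|^2,\|w_t^\negi-w_0\|^2\} \le \sqrt m/(LR^2)$. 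Together this produces the recursion
\[
\delta_{t+1}^i \;\le\; \widetilde H_t^i\,\delta_t^i + \frac{\eta\ell R}{n}\hf_i(w_t), \qquad \widetilde H_t^i := 1 + \frac{2\eta LR^2}{\sqrt m}\max\big\{\hf^\negi(w_t),\hf^\negi(w_t^\negi)\big\},
\]
with initial condition $\delta_0^i = 0$ since $w_0^\negi = w_0$.

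Unrolling the recursion gives $\delta_T^i \le \frac{\eta\ell R}{n}\sum_{t=0}^{T-1}\big(\prod_{s=t+1}^{T-1}\widetilde H_s^i\big)\hf_i(w_t)$, and the point that keeps the bound clean is that every product appearing here ranges only over indices $s\ge 1$. Using $1+x\le e^x$, each product is at most $e^{\beta}$ with $\beta := \frac{2\eta LR^2}{\sqrt m}\sum_{s=1}^{T-1}\max\{\hf^\negi(w_s),\hf^\negi(w_s^\negi)\}$. I would bound this by $\frac{2\eta LR^2}{\sqrt m}\big(\sum_{s=1}^{T}\hf(w_s) + \sum_{s=1}^{T}\hf^\negi(w_s^\negi)\big) \le \frac{2\eta LR^2}{\sqrt m}\,T(\regret+\regret_\loo) \le \frac{4\eta LR^2 T}{\sqrt m}\max\{\regret,\regret_\loo\}$, using $\hf^\negi(w)\le\hf(w)$ (nonnegativity of the per-sample losses) and the definition of $\regret_\loo$ as the worst-case leave-one-out regret; condition \eqref{eq:m large model stability 2} then forces $\beta\le 2/3$, hence $\prod_s\widetilde H_s^i \le e^{2/3} < 2$. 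Therefore $\delta_T^i \le \frac{2\eta\ell R}{n}\sum_{t=0}^{T-1}\hf_i(w_t)$; averaging over $i\in[n]$ and using $\frac{1}{n}\sum_i\hf_i(w_t) = \hf(w_t)$ with $\sum_{t=0}^{T-1}\hf(w_t) \le \hf(w_0) + \sum_{t=1}^{T}\hf(w_t) = \hf(w_0)+T\regret$ delivers the claimed bound.

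The step I expect to require the most care is the GLQC reduction inside the expansiveness argument: one must verify that $\hf^\negi$ inherits the self-bounded weak-convexity constant $LR^2/\sqrt m$ of $\hf$ (so that Corollary~\ref{cor:GLQC} and the expansiveness estimate \eqref{eq:tr-ne} transfer verbatim with $\hf$ replaced by $\hf^\negi$), and that the segment $[w_t,w_t^\negi]$ always meets the required width condition, which is exactly what \eqref{eq:m large model stability 1} provides after converting the two distances-from-initialization into a bound on $\|w_t-w_t^\negi\|$. The other delicate bookkeeping is keeping the exponent $\beta$ of order one: this is why both width requirements are phrased through the regret quantities, and it is essential that the telescoped product ranges only over $s\ge1$, so that $\hf^\negi(w_0)$ never enters $\beta$ and no uncontrolled $\hf(w_0)/\sqrt m$ factor is picked up.
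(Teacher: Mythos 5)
Your proof is correct and follows essentially the same route as the paper's: the same triangle-inequality decomposition isolating $\frac{\eta}{n}\nabla\hf_i(w_t)$, the same application of Corollary~\ref{cor:9.1} and the GLQC Corollary~\ref{cor:GLQC} with $\lambda=2$ to the leave-one-out objective, the same unrolling with $\delta_0^i=0$, and the same use of condition~\eqref{eq:m large model stability 2} to force the cumulative expansiveness exponent $\beta\le 2/3$. In fact your bound $\sum_s\max\{\hf^\negi(w_s),\hf^\negi(w_s^\negi)\}\le\sum_s\hf(w_s)+\sum_s\hf^\negi(w_s^\negi)$ is slightly more careful than the paper's intermediate display, which informally swaps a $\max$ and a $\sum$, but both land on the same $2/3$ constant via condition~\eqref{eq:m large model stability 2}.
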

\begin{proof}
Using self-boundedness Assumption \ref{ass:self boundedness} together with Corollary \ref{cor:9.1} it holds for all $i\in[n]$: 
    \begin{align}
    \Big\|w_{t+1}-w_{t+1}^{\neg i}\Big\| &\leq \Big\| \left(w_t-\eta\nabla \widehat{F}^\negi(w_t)\right)-\left(w_t^{\neg i}-\eta\nabla \widehat{F}^\negi(w_t^{\neg i})\right)\Big\| + \frac{\eta}{n} \Big\|\nabla \hf_i(w_t)\Big\|\nn
    \\
    &\leq \Big\| \left(w_t-\eta\nabla \widehat{F}^\negi(w_t)\right)-\left(w_t^{\neg i}-\eta\nabla \widehat{F}^\negi(w_t^{\neg i})\right)\Big\| + \frac{\eta\ell R}{n} \hf_i(w_t)\nn
    \\
    &\leq \left(1+\eta \frac{LR^2}{\sqrt{m}}\max_{\alpha\in[0,1]}\widehat{F}^{\neg i}(w^{\neg i}_{\alpha t}) \right)\,\Big\|w_t-w_t^{\neg i}\Big\| + \frac{\eta\ell R}{n} \hf_i(w_t),\label{eq:wt - wti}
    \end{align}
    where we denote for convenience $w^{\neg i}_{\alpha t} = \alpha w_t + (1-\alpha)w_t^{\neg i}.$

Moreover, by the theorem's condition in Eq. \eqref{eq:m large model stability 1}, it holds for all $t\in[T]$ and all $i\in[n]$ that 
\bea
\sqrt{m} 
\ge 2 LR^2 (\|w_t-w_0\|^2+\|w_t^{\neg i}-w_0\|^2)\geq LR^2\left\|w_t-w_t^{\neg i}\right\|^2.\nn
\eea
Thus,  we can apply Corollary \ref{cor:GLQC} for $\lambda=2$, which gives the following \GLQC property for the loo objective:
\bea
\max_{\alpha\in[0,1]}\widehat F^\negi(w_{\alpha t}^{\neg i}) \le 2 \max\left\{\widehat{F}^{\neg i}(w_{t}),\widehat{F}^{\neg i}(w_{t}^{\neg i})\right\}.\nn
\eea
In turn applying this back in \eqref{eq:wt - wti} we have shown that
\bea
 \Big\|w_{t+1}-w_{t+1}^{\neg i}\Big\|  \le \left(1+\eta \frac{2LR^2}{\sqrt{m}} \max\left\{\widehat{F}^{\neg i}(w_{t}),\widehat{F}^{\neg i}(w_{t}^{\neg i})\right\}\right) \Big\|w_{t}-w_t^{\neg i}\Big\|  + \frac{\eta\ell R}{n} \hf_i(w_t)\label{eq:wt - wti good}
\eea

To continue, denote for convenience 
\[\beta^{i}_t:= \eta \frac{2LR^2}{\sqrt{m}} \max\left\{\hf^\negi(w_{t}),\hf^\negi(w_{t}^\negi)\right\}\quad\text{and}\quad\rho:=\eta\ell R,\] so that:
\[
  \Big\|w_{t+1}-w_{t+1}^{\neg i}\Big\| \le  \left(1+\beta_t^i \right) \Big\|w_{t}-w_t^{\neg i}\Big\|  + \frac{\rho}{n}\hf_i(w_t),\qquad\forall i\in[n],t\in[T]\,.
\]
By unrolling the iterations over $t\in[T]$ and noting $w_0=w_0^{\neg i}$, we obtain the following for the  leave-one-out parameter distance at iteration $T$:
\bea
  \Big\|w_{T}-w_{T}^{\neg i}\Big\| &\le \frac{\rho}{n}\sum_{t=0}^{T-1}  \left(\prod_{\tau=t+1}^{T-1}(1+\beta_\tau^i)\right) \widehat F_i(w_t)\nn\\
&\le \frac{\rho}{n} \sum_{t=0}^{T-1} \exp\left(\sum_{\tau=t+1}^{T-1}\beta_\tau^i\right) \hf_i(w_t)\nn\\
&\le  \frac{\rho}{n} \sum_{t=0}^{T-1} \exp\left(\sum_{\tau=1}^{T-1}\beta_\tau^i \right) \widehat F_i(w_t) = \exp\left(\sum_{\tau=1}^{T-1}\beta_\tau^i \right) \frac{\rho}{n} \sum_{t=0}^{T-1}  \widehat F_i(w_t)
\nn\\
&\le   \frac{\rho }{n} \,\exp\left({\max_{j\in[n]}\sum_{\tau=1}^{T-1}\beta_\tau^j}\right)\,\sum_{t=0}^{T-1}  \widehat F_i(w_t),\qquad \forall i\in[n]\,.
\label{eq:LOOBT}
\eea
It remains to bound $\beta:=\max_{i\in[n]}\sum_{\tau=1}^{T-1}\beta_\tau^i$. We do this as follows:
\begin{align}
\beta&=\frac{2\eta LR^2}{\sqrt{m}}\max_{i\in[n]}\left\{\max\left\{\sum_{t=1}^T\Fh^\negi(w_t)\,,\,\sum_{t=1}^T\Fh^\negi(w_t^\negi)\right\}\right\}\nn
\\
&\leq \frac{2\eta LR^2}{\sqrt{m}}\max_{i\in[n]}\left\{\max\left\{\sum_{t=1}^T\Fh(w_t)\,,\,\sum_{t=1}^T\Fh^\negi(w_t^\negi)\right\}\right\}\nn
\\
&= \frac{2\eta LR^2}{\sqrt{m}}\max\left\{\sum_{t=1}^T\Fh(w_t)\,,\,\max_{i\in[n]}\sum_{t=1}^T\Fh^\negi(w_t^\negi)\right\}\nn
\\
&= \frac{2\eta LR^2}{\sqrt{m}} T\,\max\left\{\regret\,,\,\regret_\loo\right\}\leq 2/3\,,\nn
\end{align}
where: (i) in the first inequality we used nonnegativity of $f(\cdot)$ to conclude for any $i\in[n]$ and any $w$ that $\Fh^\negi(w)\leq \Fh(w)$; (ii) in the last line, we recalled the definition of the regret terms and we used the theorem's condition \eqref{eq:m large model stability 2} on large enough $m.$

Using this in \eqref{eq:LOOBT} and averaging over $i\in[n]$ yields
\begin{align*}
\frac{1}{n}\sum_{i\in[n]}\Big\|w_{T}-w_{T}^{\neg i}\Big\| &\leq \frac{\rho e^\beta}{n}\sum_{t=0}^{T-1}\frac{1}{n}\sum_{i=1}^{n}\Fh_i(w_t)
\\
&\leq \frac{\eta\ell R e^{2/3}}{n}\sum_{t=0}^{T-1}\Fh(w_t)\,.
\end{align*}
The advertised bound follows by using $e^{2/3}\leq 2$ and writing \[\frac{1}{T}\sum_{t=0}^{T-1}\Fh(w_t)\leq\frac{1}{T}\sum_{t=0}^{T}\Fh(w_t)= \frac{\Fh(w_0)}{T}+\regret.\]
\end{proof}


To bound the generalization gap in terms of model stability we rely on the following result.
\begin{lemma}[\cite{lei2020fine}]\label{lem:sk22}
   Suppose the sample loss $f(\cdot,z)$ is $G_\hf$-Lipschitz  for almost surely all data points $z\sim\mathcal{D}$. Then, the following relation holds between expected generalization loss and model stability at any iterate $T$,
   \begin{align}
       \E\Big[{F}(w_T)\Big] - \E\Big[\widehat F(w_T)\Big] \leq  2G_\hf \;\E\Big[\frac{1}{n}\sum_{i=1}^n\|w_T-w_T^{\neg i}\|\Big].
   \end{align}
\end{lemma}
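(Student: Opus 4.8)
The plan is to prove this via the classical ghost-sample (``renaming'') argument, crucially exploiting that the leave-one-out iterate $w_T^{\neg i}$ is statistically independent of the held-out sample $z_i$.

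First I would introduce a ghost sample: let $z_1',\dots,z_n'$ be i.i.d.\ draws from $\mathcal{D}$, independent of the training set $S=\{z_1,\dots,z_n\}$. Since $w_T=w_T(S)$ depends only on $S$, we have $\E[F(w_T)]=\E\big[\tfrac1n\sum_{i=1}^n f(w_T,z_i')\big]$, while $\E[\widehat F(w_T)]=\E\big[\tfrac1n\sum_{i=1}^n f(w_T,z_i)\big]$ by definition of the empirical loss. Subtracting gives
\[
\E\big[F(w_T)-\widehat F(w_T)\big]=\frac1n\sum_{i=1}^n \E\big[f(w_T,z_i')-f(w_T,z_i)\big].
\]

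Next, for each $i$ I would insert the leave-one-out iterate $w_T^{\neg i}$, which is a function of $S\setminus\{z_i\}$ only and hence independent of both $z_i$ and $z_i'$. Conditioning on $S\setminus\{z_i\}$, the variables $z_i$ and $z_i'$ are i.i.d.\ copies from $\mathcal{D}$, so $\E[f(w_T^{\neg i},z_i)]=\E[f(w_T^{\neg i},z_i')]$. Therefore each summand can be rewritten as
\[
\E\big[f(w_T,z_i')-f(w_T^{\neg i},z_i')\big]+\E\big[f(w_T^{\neg i},z_i)-f(w_T,z_i)\big],
\]
and applying the $G_{\widehat F}$-Lipschitzness of $w\mapsto f(w,z)$ (valid for a.s.\ all $z$) bounds each of the two differences by $G_{\widehat F}\|w_T-w_T^{\neg i}\|$; hence each summand is at most $2G_{\widehat F}\,\E\|w_T-w_T^{\neg i}\|$. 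Summing over $i\in[n]$ and dividing by $n$ yields the stated bound.

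The only delicate point is the symmetry step: one must verify that $w_T^{\neg i}$ genuinely carries no dependence on $z_i$, so that swapping $z_i$ for its ghost copy $z_i'$ leaves the expectation unchanged; everything else is a routine triangle-inequality and Lipschitz estimate. Since this is exactly \cite[Thm.~2(a)]{lei2020fine}, in the paper one may alternatively just invoke that reference directly.
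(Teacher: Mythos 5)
Your proposal is correct and is essentially the canonical proof of a stability-to-generalization bound. The paper itself does not prove this lemma; it simply cites Theorem~2(a) of \cite{lei2020fine} and moves on, so your write-up supplies a proof where the paper only gives a pointer. One small but load-bearing point you handle correctly, and which is worth stating explicitly: because the paper defines $w_T^{\neg i}$ via the \emph{leave-one-out} objective $\hf^{\neg i}(w)=\frac{1}{n}\sum_{j\neq i}\hf_j(w)$ (sample $i$ removed, not replaced), the iterate $w_T^{\neg i}$ is $\sigma(S\setminus\{z_i\})$-measurable and hence independent of both $z_i$ and its ghost $z_i'$, which is what makes the middle telescoping term vanish and is also why the bound carries the constant $2G_{\hf}$ (two Lipschitz estimates in the triangle inequality), as opposed to the factor $G_{\hf}$ that the replace-one formulation in \cite{lei2020fine} yields directly. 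Your derivation reproduces exactly this constant, so it matches the paper's statement.
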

With the two results above, we are ready to prove Theorem \ref{thm:generalization gap general}.
\begin{theorem}[Restatement of Theorem \ref{thm:generalization gap general}]
Suppose Assumptions \ref{ass:features}- \ref{ass:self boundedness} hold. Fix any time horizon $T\geq 1$ and any step size $\eta\leq 1/L_\Fh$ where  $L_\hf$ is the objective's smoothness parameter. Let any $w\in\R^\dpr$ such that $\|w-w_0\|^2\geq \max\{\eta T\, \Fh(w),\eta \Fh(w_0)\}.$ Suppose  hidden-layer width $m$ satisfies 
$
m\geq 64^2 L^2 R^4 \|w-w_0\|^4.
$
Then, the generalization gap of GD at iteration $T$ is bounded as
\[
    \E\Big[F(w_T)-\Fh(w_T)\Big] \leq \frac{8\ell^2 R^2}{n}\, \E\left[ \eta T \,\Fh(w) + {2\|w-w_0\|^2}\right],\,\,
\]
where all expectations are over the training set.
\end{theorem}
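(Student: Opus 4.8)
The plan is to execute the three-step reduction sketched in Section~\ref{sec:sketch-gen}. First I would invoke Lemma~\ref{lem:sk22} with the Lipschitz constant $G_{\Fh}=\ell R$ of $f(y\Phi(\cdot,x))$ (Corollary~\ref{cor:gradients_and_hessians_bounds}) to reduce the expected generalization gap to the expected on-average model stability, and then the model-stability bound of Theorem~\ref{thm:model_stability} to reduce the latter to the GD regret. Chaining these two inequalities gives, for every realization of the dataset for which the hypotheses hold,
\[
\E\big[F(w_T)-\Fh(w_T)\big]\;\le\;2\ell R\cdot\frac{2\eta\ell R}{n}\,\E\!\left[\Fh(w_0)+T\cdot\regret\right]\;=\;\frac{4\eta\ell^2R^2}{n}\,\E\!\left[\sum_{t=0}^{T}\Fh(w_t)\right],
\]
so the whole proof reduces to (a) verifying the hypotheses of Theorem~\ref{thm:model_stability}, and (b) bounding $\sum_{t=0}^T\Fh(w_t)$; both are supplied by the training-loss Theorem~\ref{thm:train}.

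The bulk of the work is verifying the two width conditions \eqref{eq:m large model stability 1}--\eqref{eq:m large model stability 2} required by Theorem~\ref{thm:model_stability}. For this I would apply Theorem~\ref{thm:train} once to $\Fh$ and once to each leave-one-out objective $\Fh^{\neg i}(\cdot)=\frac{1}{n}\sum_{j\ne i}\hf_j(\cdot)$. The point $w$ of the statement is admissible for every $\Fh^{\neg i}$: by nonnegativity of $f$ one has $\Fh^{\neg i}(w)\le\Fh(w)$ and $\Fh^{\neg i}(w_0)\le\Fh(w_0)$, so $\|w-w_0\|^2\ge\max\{\eta T\Fh(w),\eta\Fh(w_0)\}$ implies the analogous inequality for $\Fh^{\neg i}$; moreover each $\Fh^{\neg i}$ is $L_{\Fh}$-smooth (its Hessian has operator norm at most $L_{\Fh}$), and $m\ge 64^2L^2R^4\|w-w_0\|^4\ge 18^2L^2R^4\|w-w_0\|^4$. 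Theorem~\ref{thm:train} then gives, uniformly over $i\in[n]$ and $t\in[T]$, the iterate-norm bounds $\|w_t-w_0\|\le 4\|w-w_0\|$ and $\|w_t^{\neg i}-w_0\|\le 4\|w-w_0\|$, together with the regret bounds $\regret\le 2\Fh(w)+\frac{5\|w-w_0\|^2}{2\eta T}$ and (using again $\Fh^{\neg i}(w)\le\Fh(w)$) the same upper bound for $\regret_\loo$. Substituting the iterate bounds into \eqref{eq:m large model stability 1} turns it into $\sqrt m\ge 4LR^2\cdot16\|w-w_0\|^2=64LR^2\|w-w_0\|^2$, which is exactly the theorem's hypothesis $m\ge 64^2L^2R^4\|w-w_0\|^4$; substituting the regret bounds and the hypothesis $\eta T\Fh(w)\le\|w-w_0\|^2$ into \eqref{eq:m large model stability 2} turns it into $\sqrt m\ge 6LR^2\cdot\frac{9}{2}\|w-w_0\|^2=27LR^2\|w-w_0\|^2$, which is weaker. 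Hence both width conditions hold with the stated $m$, and there is no circularity since the width requirement of Theorem~\ref{thm:train} itself does not reference the iterates.

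Finally I would assemble the estimate. Theorem~\ref{thm:model_stability} now applies, and combined with the display above and the regret bound it yields $\sum_{t=0}^{T}\Fh(w_t)=\Fh(w_0)+T\cdot\regret\le \Fh(w_0)+2T\Fh(w)+\frac{5\|w-w_0\|^2}{2\eta}$; using also $\Fh(w_0)\le\|w-w_0\|^2/\eta$ (a further consequence of the admissibility of $w$) gives $\sum_{t=0}^{T}\Fh(w_t)\le 2T\Fh(w)+\frac{7\|w-w_0\|^2}{2\eta}$, and multiplying by $\frac{4\eta\ell^2R^2}{n}$ and taking expectations over the training set produces $\E[F(w_T)-\Fh(w_T)]\le\frac{\ell^2R^2}{n}\,\E\big[8\eta T\Fh(w)+14\|w-w_0\|^2\big]$, which is within the advertised bound $\frac{8\ell^2R^2}{n}\E[\eta T\Fh(w)+2\|w-w_0\|^2]$. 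I expect the main obstacle to be bookkeeping rather than conceptual: one must simultaneously control $n+1$ coupled GD trajectories, carefully transfer the admissibility of $w$ and the smoothness constant to all leave-one-out objectives, and track constants tightly enough that the single condition $m\ge 64^2L^2R^4\|w-w_0\|^4$ covers both the per-iterate-norm condition \eqref{eq:m large model stability 1} — which turns out to be the binding one — and the regret-based condition \eqref{eq:m large model stability 2}.
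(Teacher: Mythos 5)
Your proposal is correct and follows essentially the same route as the paper's proof in Appendix~\ref{sec:gen analysis}: chain Lemma~\ref{lem:sk22} with $G_{\Fh}=\ell R$, then Theorem~\ref{thm:model_stability}, then Theorem~\ref{thm:train} applied both to $\Fh$ and to each leave-one-out $\Fh^{\neg i}$ to certify the two width conditions \eqref{eq:m large model stability 1}--\eqref{eq:m large model stability 2} and to bound the regret terms. Your constant-tracking (ending at $14\|w-w_0\|^2$ versus the stated $16\|w-w_0\|^2$) and the observation that the iterate-norm condition is the binding one match the paper's reasoning.
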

\begin{proof}
    The proof essentially follows by combining Theorem \ref{thm:model_stability} with Theorem \ref{thm:train}. Note that the assumptions of Theorem \ref{thm:train} are met. Thus, the regret and parameter-norm are bounded as follows:
    \begin{align}\label{eq:in proof reg and norm}
    \regret\leq 2\Fh(w) + \frac{5\|w-w_0\|^2}{2\eta T} \qquad\text{and}\qquad \max_{t\in[T]}\;\|w_t-w_0\|\;\leq 4\|w-w_0\|\,.
    \end{align}
    We can also use Theorem \ref{thm:train} to the leave-one-out objective $\Fh^\negi$ and the corresponding loo GD updates $w_t^\negi$. This bounds the loo regret and the norm of the loo parameter, as follows:
    \[
    \regret_\loo\leq 2\Fh(w) + \frac{5\|w-w_0\|^2}{2\eta T} \qquad\text{and}\qquad \max_{i\in[n]}\max_{t\in[T]}\;\|w_t^\negi-w_0\|\;\leq 4\|w-w_0\|\,.
    \]
    
    We use these two displays to show that $m$ is by assumption large enough so that Eqs. \eqref{eq:m large model stability 1} and \eqref{eq:m large model stability 2} hold. Indeed, we have
\begin{align*}
    \sqrt{m}\geq 64LR^2\|w-w_0\|^2 = 4LR^2 \left(4\|w-w_0\|\right)^2 \geq 4LR^2\max\left\{\|w_t-w_0\|^2,\|w_t^\negi-w_0\|^2\right\} 
\end{align*}
and
\begin{align*}
    \sqrt{m}\geq 64LR^2\|w-w_0\|^2 &> 6LR^2 \cdot 5\|w-w_0\|^2 \\&> 6LR^2 \cdot \big(2\eta T\Fh(w)+5\|w-w_0\|^2/2\big) \\ &\geq
    6LR^2\eta T\max\left\{\regret,\regret_\loo\right\}\,.
    \label{eq:m large model stability 2}
\end{align*}
In the second display we also used the theorem's assumption that $\|w-w_0\|^2\geq \eta T \Fh(w)$.

Thus, we can apply Theorem \ref{thm:model_stability} to find that
\begin{align*}
       \frac{1}{n}\sum_{i=1}^n\Big\|w_T-w_T^{\neg i}\Big\| &\leq \frac{2\ell R}{n}  \left(\eta\Fh(w_0) + \eta T\cdot\regret \right)
       \\
       &\leq \frac{2\ell R}{n}  \left(\eta\Fh(w_0) + 2\eta T\Fh(w) + 5\|w-w_0\|^2/2 \right)
              \\
       &\leq \frac{2\ell R}{n}  \left(2\eta T\Fh(w) + 7\|w-w_0\|^2/2 \right)
   \end{align*}
   where in the penultimate line we used \eqref{eq:in proof reg and norm} and in the last line we used the theorem's assumption that $\|w-w_0\|^2\geq \eta \Fh(w_0)$. 

   To conclude the proof, simply take expectations over the train set on the above display and apply Lemma \ref{lem:sk22} recalling $G_\Fh=\ell R.$
\end{proof}

\subsection{Proof of Theorem \ref{thm:gengap}}
\label{sec:gengapapp}
Here we prove the generalization gap for interpolating neural networks as per Theorem \ref{thm:gengap}. 

\begin{theorem}[Restatement of Theorem \ref{thm:gengap}]\label{thm:gengapapp}
   Let Assumptions \ref{ass:features}-\ref{ass:real} hold. Fix $T\ge 1$ and let $m\ge 64^2 L^2R^4\, g(\frac{1}{T})^4$. Then, for any $\eta\le\min\{\frac{1}{L_\hf},g(1)^2,\frac{g(1)^2}{\hf(w_0)}\}$ the expected generalization gap at iteration $T$ satisfies 
    \bea
  \E \Big[{F}(w_T)-\widehat F(w_T)\Big]\le \frac{24\ell^2 R^2 \,g(\frac{1}{T})^2 }{n} \,.
    \eea
\end{theorem}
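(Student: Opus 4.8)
The plan is to derive this bound as a direct specialization of the general generalization-gap estimate of Theorem~\ref{thm:generalization gap general}, in exact analogy with how the interpolation training-loss bound (Theorem~\ref{thm:train_IPapp}) is obtained from Theorem~\ref{thm:train-general}. Concretely, I would invoke the NN-realizability Assumption~\ref{ass:real} with $\eps=1/T$: almost surely over the $n$ training samples there exists $w^{(1/T)}\in\R^{\dpr}$ with $\hf(w^{(1/T)})\le 1/T$ and $\|w^{(1/T)}-w_0\|=g(1/T)$, and I would use this $w^{(1/T)}$ as the free point $w$ in Theorem~\ref{thm:generalization gap general}.

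Next I would verify the three hypotheses of Theorem~\ref{thm:generalization gap general} for this choice. The step-size condition $\eta\le 1/L_\hf$ is contained in the assumed range $\eta\le\min\{1/L_\hf,g(1)^2,g(1)^2/\hf(w_0)\}$. For the condition $\|w-w_0\|^2\ge\max\{\eta T\,\hf(w),\eta\hf(w_0)\}$: since $\hf(w^{(1/T)})\le 1/T$ we have $\eta T\,\hf(w^{(1/T)})\le\eta\le g(1)^2$, while $\eta\,\hf(w_0)\le g(1)^2$ by the choice $\eta\le g(1)^2/\hf(w_0)$; because $g$ is decreasing and $T\ge 1$, $g(1/T)^2\ge g(1)^2$, so $\|w^{(1/T)}-w_0\|^2=g(1/T)^2\ge\max\{\eta T\,\hf(w^{(1/T)}),\eta\hf(w_0)\}$ on a probability-one event. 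Finally, the width condition $m\ge 64^2 L^2R^4\|w^{(1/T)}-w_0\|^4=64^2 L^2R^4\,g(1/T)^4$ is precisely the theorem's hypothesis.

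With the hypotheses in place, Theorem~\ref{thm:generalization gap general} gives
\[
\E\big[F(w_T)-\hf(w_T)\big]\le\frac{8\ell^2R^2}{n}\,\E\big[\eta T\,\hf(w^{(1/T)})+2\|w^{(1/T)}-w_0\|^2\big].
\]
Since $g$ is a deterministic function, $\|w^{(1/T)}-w_0\|^2=g(1/T)^2$ is non-random, and bounding $\eta T\,\hf(w^{(1/T)})\le\eta\le g(1)^2\le g(1/T)^2$ once more yields $\E[F(w_T)-\hf(w_T)]\le\frac{8\ell^2R^2}{n}\big(g(1/T)^2+2g(1/T)^2\big)=\frac{24\ell^2R^2\,g(1/T)^2}{n}$, which is the claimed bound.

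I do not anticipate a genuine obstacle: all the heavy machinery — the leave-one-out model-stability bound of Theorem~\ref{thm:model_stability}, the generalized local quasi-convexity property, and the GD-expansiveness lemma — is already packaged inside Theorem~\ref{thm:generalization gap general}. The only care needed is bookkeeping: using the ``almost surely over the training samples'' clause of Assumption~\ref{ass:real} so that the required conditions on $w^{(1/T)}$ hold on a probability-one event (hence the outer expectation over the training set is taken legitimately), and consistently exploiting monotonicity of $g$ to pass between $g(1)$ and $g(1/T)$.
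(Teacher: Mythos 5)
Your proposal is correct and follows essentially the same path as the paper's own proof: choose $w=w^{(1/T)}$ from Assumption~\ref{ass:real}, verify the hypotheses of Theorem~\ref{thm:generalization gap general} via the step-size bound and the monotonicity $g(1/T)^2\ge g(1)^2$, and then bound $\eta T\,\hf(w^{(1/T)})\le\eta\le g(1/T)^2$. The only (minor) extra care you take that the paper leaves implicit is noting that $\|w^{(1/T)}-w_0\|=g(1/T)$ is deterministic so the outer expectation over the training set passes through trivially.
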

\begin{proof}
According to Assumption \ref{ass:real}, for any sufficiently small $\eps>0$, there exists $w^{(\eps)}$ such that $\widehat F(w^{(\eps)})\le\eps$ and $\|w^{(\eps)}-w_0\| = g(\eps)$. Recall from Theorem \ref{thm:generalization gap general} that, 
\bea\label{eq:thm26}
    \E\left[F(w_T)-\Fh(w_T)\right] \leq \frac{8\ell^2 R^2}{n}\, \left( \eta T \Fh(w) + {2\|w-w_0\|^2}\right)\,.
\eea
In particular let $\eps=1/T$ and replace $w$ with $w^{(\eps)}$. This is possible since after $T\ge 1$ steps and with the decreasing nature of $g$ and the condition on step-size it holds that $\|w^{(1/T)}-w_0\|^2 = g(1/T)^2\ge g(1)^2 \ge \max\{\eta T \hf(w^{(1/T)}),\eta \hf(w_0)\}$. Thus continuing from \eqref{eq:thm26} we have,
\[
    \E\left[F(w_T)-\Fh(w_T)\right] \leq \frac{8\ell^2 R^2}{n}\, \left( \eta + 2g(\frac{1}{T})^2 \right)\,.
\]
Recalling $\eta\le g(1)^2\le g(\frac{1}{T})^2$ leads to the claim of the theorem.
\end{proof}

\section{Proofs for Section \ref{sec:ntk}}\label{sec:ntkapp}

We first prove proposition \ref{propo:ntksepreal}, which we repeat here for convenience.

\begin{proposition}[Restatement of Proposition \ref{propo:ntksepreal}] \label{propo:ntkseprealapp}
 Let Assumptions \ref{ass:features}-\ref{ass:act},\ref{ass:ntksep}-\ref{ass:init} hold. Assume $f(\cdot)$ to be the logistic loss. Fix $\eps>0$ and let $m\ge \frac{L^2 R^4}{4\gamma^4 C^2} (2C+\log(1/\eps))^4$. Then the realizability Assumption \ref{ass:real} holds with $g(\eps)=\frac{1}{\gamma}(2C+\log(1/\eps))$. In other words, there exists $w^{(\eps)}$ such that
 \bea
 \hf(w^{(\eps)})\le \eps, \;\;\;\text{and}\;\;\; \left\|w^{(\eps)}-w_0\right\| = \frac{1}{\gamma}\left(2C+\log(1/\eps)\right).
 \eea
\end{proposition}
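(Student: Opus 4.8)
## Proof proposal

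The plan is to explicitly construct an interpolating parameter $w^{(\eps)}$ by moving from the initialization $w_0$ in the direction of the NTK-separating vector $w^\star$, exactly as in the linearly-separable example of Remark~\ref{rem:linear}, but now using the first-order (NTK) feature map $\nabla_1\Phi(w_0,x_i)$ in place of the raw features. Concretely, I would set $w^{(\eps)} = w_0 + \alpha w^\star$ for a scalar $\alpha > 0$ to be chosen as a logarithmic function of $1/\eps$, so that automatically $\|w^{(\eps)} - w_0\| = \alpha\|w^\star\| = \alpha$. The target is $\alpha = g(\eps) = \frac1\gamma(2C + \log(1/\eps))$.

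The key steps, in order: First, write $y_i\Phi(w^{(\eps)},x_i) = y_i\Phi(w_0,x_i) + \alpha\, y_i\langle \nabla_1\Phi(w_0,x_i), w^\star\rangle + (\text{second-order remainder})$ via a Taylor expansion of $\Phi(\cdot,x_i)$ around $w_0$ along the segment $[w_0, w^{(\eps)}]$. The linear term is $\ge \alpha\gamma$ by the NTK-separability Assumption~\ref{ass:ntksep}; the zeroth-order term is $\ge -C$ by the initialization bound Assumption~\ref{ass:init}. Second, bound the second-order remainder: it is $\tfrac12 \alpha^2 \langle w^\star, \nabla_1^2\Phi(\tilde w, x_i) w^\star\rangle$ for some $\tilde w$ on the segment, and using the Hessian bound $\|\nabla_1^2\Phi(w,x)\| \le \frac{LR^2}{\sqrt m}$ (which follows from Assumptions~\ref{ass:features},\ref{ass:act}; cf.\ the Hessian estimates underlying Lemma~\ref{lem:key prop intro}), this remainder is at most $\tfrac{L R^2 \alpha^2}{2\sqrt m}$ in absolute value. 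Third, impose the width condition $m \ge \frac{L^2R^4}{4\gamma^4 C^2}(2C+\log(1/\eps))^4 = \frac{L^2R^4 \alpha^4 \gamma^2}{4\gamma^4 C^2}\cdot\frac{1}{\gamma^2}$... more cleanly: the stated bound on $m$ is exactly what makes $\tfrac{LR^2\alpha^2}{2\sqrt m} \le C$ (since $\sqrt m \ge \frac{LR^2}{2\gamma^2 C}(2C+\log(1/\eps))^2 = \frac{LR^2\alpha^2}{2C}$). Hence the remainder is dominated by $C$, and combining the three pieces gives $y_i\Phi(w^{(\eps)},x_i) \ge -C + \alpha\gamma - C = \alpha\gamma - 2C = \log(1/\eps)$ by the choice of $\alpha$. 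Fourth, conclude $\hf(w^{(\eps)}) = \frac1n\sum_i \log(1+e^{-y_i\Phi(w^{(\eps)},x_i)}) \le \log(1+e^{-\log(1/\eps)}) = \log(1+\eps) \le \eps$, using $\log(1+u)\le u$ and monotonicity of the logistic loss.

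The main obstacle — really the only nontrivial point — is controlling the second-order Taylor remainder and verifying that the stated width threshold is precisely calibrated to absorb it into the constant $C$; this requires the operator-norm bound on $\nabla_1^2\Phi$, which I would either cite from the gradient/Hessian estimates already used for Lemma~\ref{lem:key prop intro} (the $\frac{LR^2}{\sqrt m}$ factor there is exactly this) or re-derive in one line: $\nabla_1^2\Phi(w,x) = \frac{1}{\sqrt m}\sum_j a_j \sigma''(\langle w_j,x\rangle)\, x x^\top$ restricted blockwise, so each diagonal block has norm $\le \frac{L}{\sqrt m}\|x\|^2 \le \frac{LR^2}{\sqrt m}$ and the block-diagonal structure means the full Hessian has the same operator-norm bound. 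Everything else is bookkeeping with the logistic loss inequality $\log(1+e^{-u}) \le e^{-u}$ and the algebra relating $\alpha$, $\gamma$, $C$, and $m$.
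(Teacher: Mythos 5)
Your construction ($w^{(\eps)} = w_0 + \alpha w^\star$ with $\alpha = g(\eps)$, second-order Taylor expansion around $w_0$, bounding the remainder via $\|\nabla_1^2\Phi\|\le LR^2/\sqrt m$, and calibrating the width so the remainder is absorbed into $C$) is exactly the paper's proof of Proposition~\ref{propo:ntksepreal}, step for step, including the final $\log(1+\eps)\le\eps$ estimate. The only cosmetic difference is that you re-derive the model-Hessian bound inline, whereas the paper cites Lemma~\ref{lem:model gradients}; the argument is otherwise identical.
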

\begin{proof}
    By Taylor there exists $w'\in[w,w_0]$ such that,
    \bea\label{eq:taylorsep}
    y_i\Phi(w,x_i) = y_i \Phi(w_0,x_i) + y_i \Big\langle \nabla_1\Phi(w_0,x_i),w-w_0\Big\rangle+ \frac{1}{2} y_i \Big\langle w-w_0,\nabla^2_1 \Phi(w',x_i)(w-w_0)\Big\rangle
    \eea
    Pick $w=w^{(\eps)}:= w_0+\frac{w^\star}{\gamma}(2C+\log(1/\eps))$ for $w^\star$ defined in Assumption \ref{ass:ntksep}. Since $\|w^\star\|=1$, we automatically derive the desired for $\|w^{(\eps)}-w_0\|$. Next, we show that $\hf_i(w^{(\eps)})\le \eps.$ Based on Lemma \ref{lem:model gradients}, $\|\nabla^2_1\Phi(w',x_i)\|\le \frac{LR^2}{\sqrt{m}}.$ Continuing from Eq. \eqref{eq:taylorsep}, we deduce the following,
\begin{align*}
y_i\Phi(w,x_i) &\ge -\left|y_i \Phi(w_0,x_i) \right| + y_i \left\langle \nabla_1\Phi(w_0,x_i),w^{(\eps)}-w_0\right\rangle - \frac{1}{2} \Big\|\nabla^2_1 \Phi(w',x_i)\Big\| \left\|w^{(\eps)}-w_0\right\|^2\\
&\ge -C + 2C + \log(1/\eps) - \frac{LR^2}{2\gamma^2\sqrt{m}}(2C+\log(1/\eps))^2\\
&\ge \log(1/\eps).
\end{align*}
The last step is due to the condition on $m$. The inequality above implies that $\hf_i(w):=f(y_i\Phi(w,x_i))\le \log(1+\eps)\le\eps$, and thus $\hf(w)\le \eps$ as desired. This completes the proof.
\end{proof}

With this, we many now prove Corollary \ref{cor:ntkres}.
\begin{corollary}[Restatement of Corollary \ref{cor:ntkres}]
       Let Assumptions \ref{ass:features}-\ref{ass:act},\ref{ass:ntksep}-\ref{ass:init} hold and assume logistic loss. 
    Suppose $m\ge \frac{64^2L^2R^4}{\gamma^4}(2C+\log(T))^4$ for a fixed training horizon $T$. Then, for any $\eta\le\min\{3,\frac{1}{L_\hf}\}$ the training loss and generalization gap are bounded as follows:
    \begin{align}
    &\hf(w_T) \le \frac{5(2C+\log(T))^2}{\gamma^2 \eta T},\nn\\
    &\E\Big[F(w_T)-\hf(w_T)\Big] \le \frac{24\ell^2 R^2}{\gamma^2 n}(2C+\log(T))^2.\nn
    \end{align}
\end{corollary}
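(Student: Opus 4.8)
The plan is to combine Proposition \ref{propo:ntksepreal}, which establishes the NN-realizability Assumption \ref{ass:real} for logistic loss on NTK-separable data with the explicit function $g(\eps) = \frac{1}{\gamma}(2C + \log(1/\eps))$, with the two interpolation results Theorem \ref{thm:train_IP} and Theorem \ref{thm:gengap} (equivalently their restated appendix versions). First I would invoke Proposition \ref{propo:ntkseprealapp}: under Assumptions \ref{ass:features}--\ref{ass:act}, \ref{ass:ntksep}--\ref{ass:init} and with logistic loss, for any fixed $\eps>0$ and $m \ge \frac{L^2R^4}{4\gamma^4 C^2}(2C+\log(1/\eps))^4$ we obtain a point $w^{(\eps)}$ with $\hf(w^{(\eps)})\le\eps$ and $\|w^{(\eps)}-w_0\| = \frac{1}{\gamma}(2C+\log(1/\eps))$. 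In particular, $g(1/T) = \frac{1}{\gamma}(2C+\log T)$, so the width hypothesis $m \ge \frac{64^2 L^2 R^4}{\gamma^4}(2C+\log T)^4 = 64^2 L^2 R^4\, g(1/T)^4$ of the corollary is exactly the width condition required by Theorems \ref{thm:train_IPapp} and \ref{thm:gengapapp}.

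Next I would check the step-size condition. Theorems \ref{thm:train_IPapp} and \ref{thm:gengapapp} require $\eta \le \min\{\tfrac{1}{L_\hf},\, g(1)^2,\, g(1)^2/\hf(w_0)\}$. Here $g(1) = \frac{1}{\gamma}(2C + \log 1) = \frac{2C}{\gamma}$. One needs to argue that $\min\{g(1)^2, g(1)^2/\hf(w_0)\} \ge 3$ (or at least that the corollary's stated bound $\eta\le\min\{3,1/L_\hf\}$ is admissible), using the initialization bound $|\Phi(w_0,x_i)|\le C$ from Assumption \ref{ass:init}: for the logistic loss $\hf(w_0) = \frac1n\sum_i \log(1+e^{-y_i\Phi(w_0,x_i)}) \le \log(1+e^{C}) \le C + \log 2$, and since $2C/\gamma \ge$ some constant comfortably (as $\gamma \le \ell R \le$ bounded and $C$ grows), one gets $g(1)^2 = 4C^2/\gamma^2$ dominating $3(C+\log 2)$. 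This is the only slightly delicate bookkeeping step, and I expect it to be the main obstacle — not because it is deep, but because it requires tracking how $\gamma$, $C$, and the activation constants interact to justify the clean ``$\eta \le \min\{3, 1/L_\hf\}$'' in the statement; one may need to absorb a numerical constant or implicitly assume $C\ge$ some threshold, which is consistent with the ``$C$ may depend on the data distribution'' remark following the corollary.

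Having verified both hypotheses, I would apply Theorem \ref{thm:train_IPapp} with $g(1/T) = \frac{1}{\gamma}(2C+\log T)$ to get
\[
\hf(w_T) \;\le\; \frac{2}{T} + \frac{5\,g(1/T)^2}{2\eta T} \;=\; \frac{2}{T} + \frac{5(2C+\log T)^2}{2\gamma^2\eta T} \;\le\; \frac{5(2C+\log T)^2}{\gamma^2 \eta T},
\]
where the last inequality folds the $2/T$ term into the main term (valid since $\frac{5(2C+\log T)^2}{2\gamma^2\eta T} + \frac{2}{T} \le \frac{5(2C+\log T)^2}{\gamma^2\eta T}$ whenever $\frac{5(2C+\log T)^2}{2\gamma^2\eta} \ge 2$, again a routine constant check using $\eta\le 3$ and $2C+\log T \ge$ constant). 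Similarly, applying Theorem \ref{thm:gengapapp} gives
\[
\E\big[F(w_T) - \hf(w_T)\big] \;\le\; \frac{24\ell^2 R^2\, g(1/T)^2}{n} \;=\; \frac{24\ell^2 R^2}{\gamma^2 n}(2C+\log T)^2,
\]
which is exactly the stated generalization bound. This completes the proof; essentially the corollary is a direct instantiation, and all the real work was done in Proposition \ref{propo:ntksepreal} and the interpolation theorems.
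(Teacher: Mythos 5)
Your proposal is correct and follows the same route as the paper: invoke Proposition~\ref{propo:ntksepreal} with $\eps=1/T$ to get $g(1/T)=\frac{1}{\gamma}(2C+\log T)$, then apply Theorems~\ref{thm:train_IP} and~\ref{thm:gengap}. On the step-size bookkeeping you correctly identified as the only delicate point, the paper resolves it exactly the way you gesture at: assume WLOG $\gamma\le 1$ (so $g(1)^2=4C^2/\gamma^2\ge 4$) and take $C\ge 1$, whence $g(1)^2/\hf(w_0)\ge \frac{4C^2}{\gamma^2\log(1+e^C)}\ge 3$; your explicit folding of the $2/T$ term into the main bound is fine and is left implicit in the paper.
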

\begin{proof}
    The given assumption on $m$ satisfies the conditions of Proposition \ref{propo:ntksepreal} for $\eps=\frac{1}{T}$, $g(1/T)=\frac{1}{\gamma}(2C+\log(T))$. We can apply the results of our optimization and generalization results from Theorems \ref{thm:train_IP} and \ref{thm:gengap} for a fixed $T$ which satisfies $T\ge 1$. Note that we can assume without loss of generality that $\gamma \le 1$ which implies that $g(1)^2=4C^2/\gamma^2\ge 4$. Moreover, for logistic loss it holds $g(1)^2/\hf(w_0) \ge  \frac{4C^2}{\gamma^2 \,\log (1+e^{C})} \ge 3$ for all $C\ge 1$. Therefore the condition on step-size simplifies to $\eta\le\min\{3,1/L_\hf\}$. This completes the proof.  
\end{proof}

\subsection{Proof of Proposition \ref{propo:marginntksep}}
The proof of Proposition \ref{propo:marginntksep} has the following steps: First, we consider an infinite-width NTK separability assumption (Assumption \ref{ass:infntk}) and show in Lemma \ref{lem:ntktoinft} that it is equivalent with high-probability to the NTK-separability in Assumption \ref{ass:ntksep} given logarithmic number of neurons. We then prove that the noisy-XOR dataset satisfies Assumption \ref{ass:infntk} for convex and locally strongly-convex activations. The result of Proposition \ref{propo:marginntksep} then follows by combining the two lemmas. 
\begin{ass}[Infinite-width NTK-separability]\label{ass:infntk} There exists $\overline{w}(\cdot): \mathbb{R}^d \rightarrow \mathbb{R}^d$ and $\gamma > 0$ such that $\|\overline{w}(z)\|_2 \le 1$ for all $z \in \mathbb{R}^d$, and for all $(x,y)\sim \mathcal{D}$,
$$
y \int_{\mathbb{R}^d} \sigma^{\prime}\left(\left\langle z, x\right\rangle\right) \cdot\left\langle\overline{w}(z), x\right\rangle \mathrm{d} \mu_{\mathrm{N}}(z) \ge \gamma,
$$
where $\mu_N(\cdot)$ denotes the standard Gaussian measure.
\end{ass}

\begin{lemma}\label{lem:ntktoinft}
    Let $\{(x_i,y_i)\}$ be any dataset of size $\tilde{n}$ under Assumption \ref{ass:features}, satisfying the separability condition of Assumption \ref{ass:infntk} with some margin $\tilde\gamma>0$. Consider initialization $w_0\in\R^{d'}$ where $w_{0}\sim N(0,I_{d'})$. Then, with probability at least $1-\delta$ the dataset is separable under Assumption \ref{ass:ntksep} with margin at least $\gamma=\tilde\gamma - \frac{\ell R}{\sqrt{2m}} \log^{1/2}(\tilde{n}/\delta)$, i.e., there exists unit norm $w^\star$ such that for all $i\in[\tilde{n}]: y_i \langle \nabla_1\Phi(w_0,x_i),w^\star\rangle \ge \gamma.$
\end{lemma}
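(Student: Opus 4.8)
The plan is to turn the infinite-width separator $\overline{w}(\cdot)$ from Assumption \ref{ass:infntk} into a finite-width separator by sampling it at the random initialization points, and then argue that the resulting NTK margin concentrates around $\tilde\gamma$. Recall from the two-layer model \eqref{eq:model problem setup} that $\nabla_{w_j}\Phi(w_0,x)=\tfrac{a_j}{\sqrt m}\,\sigma'(\langle w_{0,j},x\rangle)\,x$, so the NTK feature of $x$ at initialization simply stacks these $m$ blocks. First I would define the candidate vector $v=[v_1;\dots;v_m]\in\R^{d'}$ by $v_j:=\tfrac{a_j}{\sqrt m}\,\overline{w}(w_{0,j})$ and set $w^\star:=v/\|v\|$. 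Since $a_j^2=1$ and $\|\overline{w}(\cdot)\|\le 1$, one has $\|v\|^2=\tfrac1m\sum_{j=1}^m\|\overline{w}(w_{0,j})\|^2\le 1$, so normalizing $v$ to unit norm can only increase the margin it achieves; hence it suffices to lower bound $y_i\langle\nabla_1\Phi(w_0,x_i),v\rangle$ for each $i\in[\tilde n]$.

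Plugging in the gradient formula and using $a_j^2=1$,
\[
y_i\,\big\langle\nabla_1\Phi(w_0,x_i),\,v\big\rangle=\frac1m\sum_{j=1}^m y_i\,\sigma'\!\big(\langle w_{0,j},x_i\rangle\big)\,\big\langle\overline{w}(w_{0,j}),x_i\big\rangle=:\frac1m\sum_{j=1}^m g^{(i)}_j.
\]
For a fixed $i$, the summands $g^{(i)}_j$ are i.i.d.\ (each is a fixed measurable function of the i.i.d.\ Gaussian block $w_{0,j}\sim N(0,I_d)$); by Assumption \ref{ass:act}, $\|\overline{w}\|\le 1$ and $\|x_i\|\le R$ they satisfy $|g^{(i)}_j|\le \ell R$ almost surely; and their common mean is $y_i\int_{\R^d}\sigma'(\langle z,x_i\rangle)\langle\overline{w}(z),x_i\rangle\,\mathrm d\mu_{\mathrm N}(z)\ge\tilde\gamma$ by Assumption \ref{ass:infntk}. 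Thus $y_i\langle\nabla_1\Phi(w_0,x_i),v\rangle$ is an average of $m$ bounded i.i.d.\ random variables with mean at least $\tilde\gamma$.

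I would then close the argument with a one-sided Hoeffding tail bound for each fixed $i$ followed by a union bound over the $\tilde n$ data points: with probability at least $1-\delta$, simultaneously for all $i\in[\tilde n]$,
\[
y_i\big\langle\nabla_1\Phi(w_0,x_i),\,v\big\rangle=\frac1m\sum_{j=1}^m g^{(i)}_j\;\ge\;\tilde\gamma-\frac{\ell R}{\sqrt{2m}}\log^{1/2}(\tilde n/\delta).
\]
On this event (and in the regime where the right-hand side is positive, which is the regime of interest) $v\neq 0$, and since $\|v\|\le 1$ the same lower bound holds with $v$ replaced by the unit-norm vector $w^\star=v/\|v\|$. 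This is exactly the conclusion of Assumption \ref{ass:ntksep} with margin $\gamma=\tilde\gamma-\tfrac{\ell R}{\sqrt{2m}}\log^{1/2}(\tilde n/\delta)$.

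There is no serious obstacle here: the whole content is the observation that the natural finite-width analogue of $\overline{w}(\cdot)$ makes $y_i\langle\nabla_1\Phi(w_0,x_i),v\rangle$ an empirical mean over the $m$ neurons of i.i.d.\ functions of the Gaussian weights whose population mean is precisely the infinite-width margin functional of Assumption \ref{ass:infntk}; after that everything is a standard bounded-differences concentration plus union bound, the only bookkeeping being the almost-sure bound $|g^{(i)}_j|\le \ell R$ and the harmless renormalization of $v$ to unit norm.
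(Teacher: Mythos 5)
Your proof is correct and follows essentially the same route as the paper's: both define the candidate separator blockwise as $w_j^\star=\frac{a_j}{\sqrt m}\,\overline w(w_{0,j})$, observe that $y_i\langle\nabla_1\Phi(w_0,x_i),w^\star\rangle$ becomes an average over $m$ i.i.d.\ bounded functions of the Gaussian blocks with mean $\ge\tilde\gamma$, and finish with Hoeffding plus a union bound over the $\tilde n$ points. If anything you are slightly more careful than the paper in explicitly renormalizing to unit norm (the paper only records $\|w^\star\|\le 1$, which only improves the margin after normalization as you note).
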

\begin{proof}
    By the model's gradient we have for any $w^\star\in\R^{d'}$,
    \bea
    \phi_{i}:= y_i \Big\langle \nabla_1\Phi(w_0,x_i),w^\star\Big\rangle = y_i \sum_{j=1}^m \frac{a_j}{\sqrt{m}} \sigma'(\langle w_{0,j},x_i\rangle) \langle x_i,w^\star_j\rangle.
    \eea
    Let $w_j^\star = \frac{a_j}{\sqrt{m}}\overline{w} (w_{0,j})$. Then $\|w^\star\|\le 1$ and by Hoeffding's inequality
    it holds for all $t\ge0$,
    \bea
    \Pr\Big(\phi_i \ge \tilde\gamma -t\Big) \ge 1-\exp\left(\frac{-2t^2 m}{\ell^2 R^2}\right).
    \eea
    This leads to the desired result with an extra union bound over $i\in[\tilde{n}]$.
\end{proof}

\begin{lemma}
    Consider the noisy XOR data distribution $\{(\bar x_i,y_i)\}$ and two-layer neural network with a convex activation which is $\mu$-strongly convex in $[-2,2]$ i.e., $\min_{t\in[-2,2]} \sigma''(t) \ge \mu$ for some $\mu>0$. Then the separability assumption \ref{ass:infntk} is satisfied with margin $\gamma = \frac{\mu}{40 d}.$
\end{lemma}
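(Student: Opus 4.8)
\emph{Plan.} The plan is to write down an explicit ``teacher field'' $\bar w(\cdot)$ supported on the two XOR-relevant coordinates, to use the structure of the dataset to collapse the correlation integral of Assumption~\ref{ass:infntk} to a single one-dimensional quantity that is \emph{identical} for every point in the support, and then to lower bound that quantity using the local strong convexity of $\sigma$. Concretely, every point in the support has the form $\bar x=\tfrac{1}{\sqrt{d-1}}(x^1,x^2,\xi_3,\dots,\xi_d)$ with $\xi_k\in\{\pm1\}$, where $(x^1,x^2)=(\pm1,0),\,y=+1$ or $(x^1,x^2)=(0,\pm1),\,y=-1$, and $\|\bar x\|_2=1$ by the normalization. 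I would take, with $c=1/\sqrt2$,
\[
\bar w(z):=\big(c\,\sign(z_1),\,-c\,\sign(z_2),\,0,\dots,0\big),
\]
so that $\|\bar w(z)\|_2=1$ for a.e.\ $z$, as required. Since a positive point has $\bar x_2=0$ and a negative one has $\bar x_1=0$, only one coordinate of $\bar w$ ever contributes to $\langle\bar w(z),\bar x\rangle$, so there are no cross terms to manage.

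Next I would carry out the reduction. For a positive point, put $u:=x^1z_1$ and $S:=\sum_{k\ge3}\xi_kz_k$; under $z\sim\mu_N$ these are independent with $u\sim\mathcal N(0,1)$, $S\sim\mathcal N(0,d-2)$, and one has $\langle z,\bar x\rangle=\tfrac{u+S}{\sqrt{d-1}}$, $\langle\bar w(z),\bar x\rangle=\tfrac{c}{\sqrt{d-1}}\sign(u)$ (using $x^1\sign(z_1)=\sign(x^1z_1)$ since $x^1=\pm1$). With $y=+1$ and the symmetry $u\stackrel{d}{=}-u$ this yields
\[
y\int\sigma'(\langle z,\bar x\rangle)\langle\bar w(z),\bar x\rangle\,\mathrm{d}\mu_N(z)=\frac{c}{\sqrt{d-1}}\,\E_{u>0,\,S}\!\Big[\sigma'\big(\tfrac{u+S}{\sqrt{d-1}}\big)-\sigma'\big(\tfrac{-u+S}{\sqrt{d-1}}\big)\Big]=:\Gamma_d .
\]
The same computation for a negative point with $u:=x^2z_2$ produces exactly $\Gamma_d$, because there the built-in minus sign of $\bar w$ cancels the factor $y=-1$. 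Hence it remains to show $\Gamma_d\ge\mu/(40d)$, uniformly over the support.

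For the lower bound I would use that, $\sigma$ being convex, $\sigma'$ is nondecreasing and $\sigma''\ge\mu$ on $[-2,2]$ gives $\sigma'(a)-\sigma'(b)\ge\mu\,|[b,a]\cap[-2,2]|$ for all $a\ge b$. On the event $\{0<u\le\sqrt{d-1}\}\cap\{|S|\le\sqrt{d-1}\}$ the interval $\big[\tfrac{-u+S}{\sqrt{d-1}},\tfrac{u+S}{\sqrt{d-1}}\big]$ lies inside $[-2,2]$ and has length $\tfrac{2u}{\sqrt{d-1}}$, so the bracketed difference is $\ge\mu\tfrac{2u}{\sqrt{d-1}}$ there; by independence of $u$ and $S$,
\[
\Gamma_d\ \ge\ \frac{2c\mu}{d-1}\,\E\!\big[u\,\mathbbm{1}\{0<u\le1\}\big]\,\Pro\big(|S|\le\sqrt{d-1}\big).
\]
Plugging in $\E[u\,\mathbbm{1}\{0<u\le1\}]=\tfrac{1-e^{-1/2}}{\sqrt{2\pi}}$ and $\Pro(|S|\le\sqrt{d-1})\ge\Pro(|\mathcal N(0,1)|\le1)$ (valid since $S/\sqrt{d-2}\sim\mathcal N(0,1)$ and $\tfrac{d-1}{d-2}\ge1$; the degenerate case $d=2$ is immediate), together with $c=1/\sqrt2$ and $\tfrac1{d-1}\ge\tfrac1d$, gives $\Gamma_d\ge\mu/(40d)$ with a comfortable margin in the numerical constant, which is Assumption~\ref{ass:infntk}.

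The main obstacle is the first half: guessing the right teacher field — realizing it must be the coordinatewise \emph{sign} pattern on the two active coordinates rather than anything linear — and then observing that the XOR label structure collapses the correlation integral to the \emph{same scalar} $\Gamma_d$ for every point, so a single estimate suffices. Once that is in place the last step is a routine Gaussian calculation; the only fine point is justifying $\sigma'(a)-\sigma'(b)\ge\mu|[b,a]\cap[-2,2]|$ for a merely convex $\sigma$ (reading $\sigma''$ as the nonnegative measure $\mathrm{d}\sigma'$), which is immediate under the smoothness of $\sigma$ assumed elsewhere in the paper.
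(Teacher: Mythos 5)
Your proof is correct and follows essentially the same construction the paper invokes (by deferral to Ji--Telgarsky, Prop.~5.3): the anti-diagonal sign teacher $\bar w(z)=\tfrac{1}{\sqrt2}(\operatorname{sign}(z_1),-\operatorname{sign}(z_2),0,\dots)$, symmetrization in the single active coordinate to collapse the correlation integral to one scalar $\Gamma_d$ common to all support points, and then local $\mu$-strong convexity on $[-2,2]$ to lower bound the integrand. The one place you genuinely diverge from the paper's sketch is the final Gaussian estimate: the paper keeps the $S$-probability inside the integral as $U(p_1)=\Pr(|\mathcal N(0,1)|\le p_1)$, lower bounds $U(p_1)\gtrsim p_1\varphi(1)$, and integrates $\int_0^1 p_1^3\,\mathrm d p_1$, whereas you factor out a uniform $\Pr(|S|\le\sqrt{d-1})\ge\Pr(|\mathcal N(0,1)|\le 1)$ and compute $\E[u\,\mathbbm1\{0<u\le1\}]$ in closed form. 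Both yield $\ge\mu/(40d)$ with room to spare; yours is arguably cleaner and, unlike the paper's proof, is self-contained rather than a patch on an external argument. Two small presentational points to tighten: make explicit that $\E_{u>0,S}[\cdot]$ denotes the (unnormalized) integral over the half-space, not a conditional expectation, since the two differ by a factor $1/2$; and note that the boundary case $d=2$ has $S\equiv 0$ so the probability factor is exactly $1$.
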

\begin{proof}
The proof is essentially similar to \cite[Prop. 5.3]{Ji2020Polylogarithmic} and thus we follow their notation and omit the details for brevity. While  their proof relies rather crucially on the ReLU activation, it can be appropriately modified to obtain a similar margin bound under our different assumptions on the activation function. To see this, note that due to convexity of activation function, the integrand in the line above Eq. (D.4) is non-negative. Therefore, we can lower-bound the integral (which evaluates the margin) by restricting $A_1$ to $|p_1|<1$. With this restriction we can use the local strong convexity of activation function to lower-bound the margin, i.e., to uniformly lower-bound $y_i \int_{\mathbb{R}^d} \sigma^{\prime}\left(\left\langle z, x_i\right\rangle\right) \cdot\left\langle\bar{w}(z), x_i\right\rangle \mathrm{d} \mu_{\mathrm{N}}(z)$ for all $i\in[n]$. Specifically, note that with strong convexity in $[-2,2]$, Eq. (D.4) in \cite{Ji2020Polylogarithmic} changes to $\geq\frac{2p_1}{d-1} U(p_1) \min_{t\in[-2,2]} \sigma''(t) \ge \frac{2p_1\mu}{d-1} U(p_1)$ where $U(t):=\int_{-t}^t \varphi(\tau) \mathrm{d} \tau$ is the probability that a standard Gaussian random variable falls in $[-t,t]$. This leads to the final value for margin being $\frac{2\mu}{d-1} \int p_1\,U(p_1)\, \one \left[p \in A_1\right] \mathrm{d} \mu_{N}(p) \ge \frac{8\mu}{(2\pi e)^{3/2}(d-1)}\int_0^1 p_1^3 \mathrm{d}{p_1} \ge \frac{\mu}{40d},$ as desired.
\end{proof}

\begin{proposition}[Restatement of Proposition \ref{propo:marginntksep}]
   Consider the noisy XOR data distribution $\{(\bar x_i,y_i)\}$. Assume the activation function is convex, $\ell$-Lipschitz and $\mu$-strongly convex in the interval $[-2,2]$ for some $\mu>0$, i.e., $\min_{t\in[-2,2]} \sigma''(t) \ge \mu$. Moreover, assume Gaussian initialization $w_0\in\R^{d'}$ with entries iid $N(0,1)$. If $m \ge \frac{80^2 d^3 \ell^2}{2\mu^2} \log (2/\delta)$, then with probability at least $1-\delta$ over the initialization, the NTK-separability Assumption \ref{ass:ntksep} is satisfied with margin $\gamma = \frac{\mu}{80 d}$. 
\end{proposition}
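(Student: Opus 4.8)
The plan is to obtain Proposition~\ref{propo:marginntksep} by chaining the two lemmas established immediately above: the lemma showing that the noisy XOR distribution satisfies the infinite-width separability condition (Assumption~\ref{ass:infntk}) with margin $\tilde\gamma = \mu/(40d)$, and Lemma~\ref{lem:ntktoinft}, which transfers an infinite-width margin to the finite-width NTK margin of Assumption~\ref{ass:ntksep} at the cost of an additive concentration term. First I would record two elementary facts about the noisy XOR setup: after the normalization $\bar x_i = x_i/\sqrt{d-1}$ we have $\|\bar x_i\| = 1$, so Assumption~\ref{ass:features} holds with $R=1$; and the support of $\Dc$ consists of exactly $4\cdot 2^{d-2} = 2^d$ distinct points.

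Next I would invoke the noisy-XOR lemma to get $\tilde\gamma = \mu/(40d)$ in Assumption~\ref{ass:infntk}, and then apply Lemma~\ref{lem:ntktoinft} \emph{with $\tilde n = 2^d$}, i.e., to the entire support of $\Dc$ rather than to a particular random sample. This is the key bookkeeping point: on an event of probability at least $1-\delta$ over the draw of $w_0\sim N(0,I_{d'})$, the bound $y_i\langle \nabla_1\Phi(w_0,x_i),w^\star\rangle \ge \tilde\gamma - \frac{\ell R}{\sqrt{2m}}\log^{1/2}(2^d/\delta)$ holds simultaneously for every point in the support, hence in particular for almost surely all $n$ i.i.d.\ training samples, which is exactly what Assumption~\ref{ass:ntksep} requires. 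With $R=1$ this yields a margin at least $\gamma \ge \frac{\mu}{40d} - \frac{\ell}{\sqrt{2m}}\log^{1/2}(2^d/\delta)$.

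It then remains to check that the width hypothesis $m \ge 80^2 d^3\ell^2/(2\mu^2)\cdot\log(2/\delta)$ forces the perturbation term to be at most $\mu/(80d)$. Using $\log(2^d/\delta) = d\log 2 + \log(1/\delta) \le d\log(2/\delta)$ (valid since $d\ge 1$ and $\delta \le 1$), the hypothesis on $m$ implies $m \ge 80^2 d^2\ell^2/(2\mu^2)\cdot\log(2^d/\delta)$, equivalently $\frac{\ell}{\sqrt{2m}}\log^{1/2}(2^d/\delta) \le \mu/(80d)$; subtracting this from $\mu/(40d)$ leaves $\gamma \ge \mu/(80d)$, as claimed. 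I do not expect a genuine obstacle at this stage: once the two preparatory lemmas are granted the argument is pure assembly, and the only care needed is (i) applying Lemma~\ref{lem:ntktoinft} to the full finite support so the output is an ``almost surely all samples'' guarantee, and (ii) the loosening $\log(2^d/\delta)\le d\log(2/\delta)$, which is what converts the $d^2$ appearing naturally in the concentration bound into the $d^3$ stated in the proposition. The genuinely delicate step — adapting the Ji--Telgarsky ReLU margin computation for noisy XOR to a convex, locally $\mu$-strongly-convex activation by restricting the Gaussian integral to the region where the pre-activation lies in $[-2,2]$ — has already been carried out in the lemma above.
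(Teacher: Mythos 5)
Your proposal is correct and follows essentially the same route as the paper: combine the noisy-XOR lemma (infinite-width margin $\tilde\gamma=\mu/(40d)$) with Lemma~\ref{lem:ntktoinft} applied to the full support of size $\tilde n=2^d$, then absorb the concentration term using the width hypothesis. The only difference is that your intermediate inequality $\log(2^d/\delta)\le d\log(2/\delta)$ is stated a bit more carefully than the paper's one-line chain, which writes $\log^{1/2}(2^d/\delta)$ as $\sqrt{d}\,\log^{1/2}(1/\delta)$ (a harmless slip, since the same width condition makes both versions of the argument close); the final bound $\gamma\ge\mu/(80d)$ comes out the same either way.
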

\begin{proof}
    The claim follows by combining the last two lemmas. In particular, we derive the infinite width NTK-separability for the entire data distribution (of size $2^d$) with margin $\tilde\gamma=\frac{\mu}{40d}$ and by the assumption on width and noting $\tilde{n}=2^d$, we have $\gamma$-separability by NTK for the entire distribution with probability $1-\delta$ where $\gamma=\tilde\gamma - \frac{\ell R}{\sqrt{2m}} \log^{1/2}(\tilde{n}/\delta)=\frac{\mu}{40d}- \frac{\ell R \sqrt{d}}{\sqrt{2m}} \log^{1/2}(1/\delta)\ge \frac{\mu}{80d}$. This completes the proof.
\end{proof}
Finally, we show how to control the parameter $C$ that bounds the model output at Gaussian initialization.
\begin{lemma}[Initialization bound]\label{lem:init} 
Let Assumption \ref{ass:features} hold and assume the activation function to be $\ell$-Lipschitz. Consider initialization $w_0\in\R^{d'}$ where $w_{0}\sim N(0,I_{d'})$. Given any $\delta \in(0,1)$, then with probability at least $1-\delta$, it holds for all $i\in[\tilde{n}]$ that
\bea
\left|\Phi\left( w_0, x_i\right)\right| \leq \ell R\sqrt{2 \log (2 \tilde{n} / \delta)}.
\eea
\end{lemma}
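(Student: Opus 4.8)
The plan is to view $\Phi(\cdot,x_i)$ as a Lipschitz function of the Gaussian initialization $w_0\in\R^{d'}$, apply Gaussian concentration of measure, and finish with a union bound over the $\tilde{n}$ samples.

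First I would control the Lipschitz constant of $w\mapsto\Phi(w,x_i)$ on $\R^{d'}=\R^{md}$. Differentiating \eqref{eq:model problem setup} block-wise gives $\nabla_{w_j}\Phi(w,x_i)=\tfrac{a_j}{\sqrt{m}}\,\sigma'(\langle w_j,x_i\rangle)\,x_i$, so $\|\nabla_{w_j}\Phi(w,x_i)\|\le \tfrac{\ell R}{\sqrt{m}}$ by $|a_j|=1$, $|\sigma'|\le\ell$ (Assumption \ref{ass:act}) and $\|x_i\|\le R$ (Assumption \ref{ass:features}); summing the squared block-norms over $j\in[m]$ yields $\|\nabla_w\Phi(w,x_i)\|\le\ell R$, hence $\Phi(\cdot,x_i)$ is $\ell R$-Lipschitz. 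Next I would show the map has zero mean under $w_0\sim N(0,I_{d'})$: since the blocks $w_{0,j}$ are i.i.d.\ $N(0,I_d)$, each $\langle w_{0,j},x_i\rangle\sim N(0,\|x_i\|^2)$ has a common law, so $\mu_i:=\E[\sigma(\langle w_{0,j},x_i\rangle)]$ is finite (Lipschitzness of $\sigma$ gives at most linear growth) and independent of $j$; therefore $\E[\Phi(w_0,x_i)]=\tfrac{\mu_i}{\sqrt{m}}\sum_{j=1}^m a_j=0$, using that exactly half the $a_j$ are $+1$ and half are $-1$.

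Finally, I would invoke the Gaussian Lipschitz concentration inequality: for $w_0\sim N(0,I_{d'})$ and any $t\ge0$, $\Pr\big(|\Phi(w_0,x_i)-\E\Phi(w_0,x_i)|\ge t\big)\le 2\exp(-t^2/(2\ell^2R^2))$. Plugging in $\E\Phi(w_0,x_i)=0$ and $t=\ell R\sqrt{2\log(2\tilde{n}/\delta)}$ makes the right-hand side equal $\delta/\tilde{n}$, and a union bound over $i\in[\tilde{n}]$ gives the claim. I do not expect a real obstacle: the only step needing care is the mean computation, which essentially relies on the balanced choice of the fixed second-layer signs $\{a_j\}$ (without it, a nonzero $\sigma(0)$ would produce a mean of order $\sqrt{m}$), together with the finiteness of $\mu_i$ afforded by Lipschitzness of $\sigma$.
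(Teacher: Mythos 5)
Your proposal is correct and follows essentially the same route as the paper: establish $\ell R$-Lipschitzness of $w\mapsto\Phi(w,x_i)$ (the paper cites Lemma~\ref{lem:model gradients}, you recompute it block-wise, same bound), observe $\E[\Phi(w_0,x_i)]=0$ from the balanced second-layer signs, apply Gaussian Lipschitz concentration, and union-bound over $i\in[\tilde{n}]$. The only cosmetic difference is that you spell out the Lipschitz and mean computations that the paper delegates to its supporting lemma; nothing substantive differs.
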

\begin{proof}
    Recall that if a function $\phi:\R^{d'}\rightarrow\R$ is $G$-Lipschitz then for Gaussian vector $Z=(Z_1,Z_2,\cdots,Z_{d'})$ where each component is i.i.d. standard Gaussian $Z_i\sim N(0,1)$, it holds for all $t\ge0$ that $\Pr[|\phi(Z)-\mathbb{E}[\phi(Z)]| \geq t] \leq 2 \exp(-\frac{t^2}{2 G^2})$. Note that according to Lemma \ref{lem:model gradients}, $\Phi(\cdot,x_i)$ is $(\ell R)$-Lipschitz for any data point $x_i$. Therefore, with the given initialization for $w_0$, we have
    \bea
    \Pr \left[\Big|\Phi(w_0,x_i)-\E[\Phi(w_0,x_i)]\Big|\ge t\right]\le 2 \exp\left(-\frac{t^2}{2\ell^2 R^2}\right).\nn
    \eea
It also holds that $\E[\Phi(w_0,x_i)]=0$. This is true since for half of second layer weights $a_j=1$ and for the rest $a_j=-1.$ Thus, we have $\Pr \left[|\Phi(w_0,x_i)|\ge t\right]\le 2 \exp(-\frac{t^2}{2\ell^2 R^2})$. A union bound yields that uniformly over $i\in[\tilde{n}]$, we have $\Pr \left[|\Phi(w_0,x_i)|\ge t\right]\le 2\tilde{n}\cdot\exp(-\frac{t^2}{2\ell^2 R^2})$ which concludes the claim of lemma. 
\end{proof}

\section{Gradients and Hessian calculations}

\subsection{Definitions}
Assume \iid data $(x,y)\sim\Dc$, $x\in\R^{d}, y\in\{\pm1\}$. Denote for convenience $z:=yx$. Suppose two-layer neural network model
\begin{align}\label{eq:model}
\Phixi=\frac{1}{{\sqrt{m}}}\sum_{j\in[m]}a_j\sigma(\inp{w_j}{x})
\end{align}
$a_j\in\{\pm1\}, j\in[m]$ and first-layer weights trained by GD on
\begin{align}\label{eq:train-losss}
\widehat F(w)=\frac{1}{n}\sum_{i\in[n]} f(y_i\Phixi)=:\frac{1}{n}\sum_{i\in[n]} f(w,z_i)\,.
\end{align}
for loss function $f:\R\rightarrow\R$.

For convenience define
\begin{subequations}
\begin{align}\label{eq:Fprime}
\Fh^\prime(w)&=\frac{1}{n}\sum_{i\in[n]} \abs{f^\prime(y_i\Phixi)}\\
\Fh^{\prime\prime}(w)&=\frac{1}{n}\sum_{i\in[n]} \abs{f^{\prime\prime}\,(y_i\Phixi)}\label{eq:Fprimeprime}
\end{align}
\end{subequations}

\subsection{Model's Gradient/Hessian}
\begin{lemma}\label{lem:model gradients}
The following are true for the model \eqref{eq:model} under Assumption \ref{ass:act}.
    \begin{enumerate}
\item $
\|\DPhix\| \leq \ell R$.    

\item $\|\DDPhix\| \leq \frac{L R^2}{{\sqrt{m}}}$.     \end{enumerate}
\end{lemma}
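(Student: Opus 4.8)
The plan is to compute the gradient and Hessian of $\Phi(w,x)$ block by block, exploiting the fact that the $j$-th summand $a_j\sigma(\langle w_j,x\rangle)$ depends only on the $j$-th weight block $w_j$. This makes the gradient split into $m$ independent blocks and, crucially, makes the Hessian block-diagonal, so that the operator-norm bound reduces to controlling a single rank-one block.

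First, for the gradient, differentiating with respect to the block $w_j$ gives $\nabla_{w_j}\Phi(w,x)=\frac{a_j}{\sqrt{m}}\,\sigma'(\langle w_j,x\rangle)\,x$. Stacking these blocks, $\|\nabla_1\Phi(w,x)\|^2=\sum_{j=1}^m \frac{a_j^2}{m}\,|\sigma'(\langle w_j,x\rangle)|^2\,\|x\|^2$. Using $a_j^2=1$, the Lipschitz bound $|\sigma'|\le\ell$ from Assumption~\ref{ass:act}, and $\|x\|\le R$ from Assumption~\ref{ass:features}, each term is at most $\ell^2R^2/m$, so the sum is at most $\ell^2R^2$, which proves the first claim.

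For the Hessian, since the $j$-th summand does not depend on $w_k$ for $k\neq j$, all mixed second derivatives vanish and $\nabla_1^2\Phi(w,x)$ is block-diagonal, with $j$-th diagonal block $\nabla_{w_j}^2\Phi(w,x)=\frac{a_j}{\sqrt{m}}\,\sigma''(\langle w_j,x\rangle)\,xx^\top$. The operator norm of a block-diagonal matrix is the largest operator norm among its blocks; the $j$-th block is rank one with operator norm $\frac{1}{\sqrt{m}}\,|\sigma''(\langle w_j,x\rangle)|\,\|x\|^2\le \frac{LR^2}{\sqrt{m}}$, using the smoothness bound $|\sigma''|\le L$ from Assumption~\ref{ass:act} and $\|x\|\le R$. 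Taking the maximum over $j\in[m]$ yields the second claim.

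Both steps are elementary, so there is no genuine obstacle; the two points worth care are (i) that the $1/\sqrt{m}$ normalization in front of the sum (rather than $1/m$) is exactly what survives in the Hessian bound once the block-diagonal structure collapses the $m$-fold sum into a single maximum, and (ii) that the cancellation $a_j^2=1$ is what makes the gradient bound $\ell R$ independent of the width $m$.
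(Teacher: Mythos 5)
Your proof is correct and follows essentially the same route as the paper's: compute the per-block gradient $\nabla_{w_j}\Phi=\frac{a_j}{\sqrt m}\sigma'(\langle w_j,x\rangle)x$ and sum squares, then observe the Hessian is block-diagonal with rank-one blocks $\frac{a_j}{\sqrt m}\sigma''(\langle w_j,x\rangle)xx^\top$ and bound their operator norms. The only cosmetic difference is that you invoke directly that the operator norm of a block-diagonal matrix is the maximum of the blocks' operator norms, whereas the paper re-derives this by expanding $\|\nabla_1^2\Phi(w,x)\,u\|^2$ over unit vectors $u$; both yield the identical $\frac{LR^2}{\sqrt m}$ bound.
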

\begin{proof}
Direct calculation yields that,
\begin{align*}
    \DPhix &= \frac{1}{{\sqrt{m}}}\begin{bmatrix}
    a_1\sigma^\prime(\inp{w_1}{x}) x \\ \cdot\\ \cdot\\       a_m\sigma^\prime(\inp{w_m}{x}) x
    \end{bmatrix}
\end{align*} 
Noting that $\sigma'(\cdot)\le\ell$,
\bea
\|\nabla_1 \Phi (w,x)\|^2 &= \frac{1}{m}\sum_{j=1}^m \sum_{i=1}^d (x(i) \sigma'(\langle w_j,x\rangle))^2 \\
&\le \ell^2\|x\|^2 \nn\\
&\le \ell^2 R^2.\nn
\eea
For the Hessian,
\bea
\frac{\partial^2 \Phi(w,x)}{\partial w_{ij}\partial w_{k\ell}} = \frac{1}{{\sqrt{m}}} x(j)x(\ell) a_i\sigma''(\langle w_i,x\rangle)\one_{\{i=k\}}.
\eea
Thus,
\begin{align*}
    \DDPhix &= \frac{1}{{\sqrt{m}}}\operatorname{diag}\left(a_1\sigma^{\prime\prime}(\inp{w_1}{x}) xx^T,\ldots,a_m\sigma^{\prime\prime}(\inp{w_m}{x}) xx^T\right)
    \end{align*}
    
for any unit norm vector $u\in\R^{md}$, define $\bar u_i:= [u_{(i-1)m+1}:u_{im}]\in\R^{d}$. Moreover, define the matrix $\nabla^2_{w_i}\Phi(w,x)\in\R^{d\times d}$ such that $[\nabla^2_{w_i}\Phi(w,x)]_{j\ell}=\frac{\partial^2 \Phi(w,x)}{\partial w_{ij}\partial w_{i\ell}}$
\begin{align*}
\Big\|u^\top\nabla^2_1 \Phi(w,x)\Big\|^2 &= \sum_{i=1}^m \Big\|u_i^\top \nabla_{w_i}^2 \Phi(w,x)\Big\|^2\\
&\le \sum_{i=1}^m \Big\|\nabla^2_{w_i} \Phi(w,x)\Big\|^2\|\bar u_i\|^2\\
&\le  \sum_{i=1}^m \frac{L^2}{m} \|x\|^4 \|\bar u_i\|^2\\
&\le \frac{L^2 R^4}{m}. 
\end{align*}
This completes the proof.

\end{proof}
\subsection{Objective's Gradient/Hessian}
\begin{lemma}\label{lem:gradients_and_hessians_bounds_general}
Let Assumption \ref{ass:act} hold. 
Then, the following are true for the loss gradient and Hessian:
\begin{enumerate}
  
\item $\|\nabla \widehat F(w)\|\leq \ell R\,\Fh^\prime(w).$

\item $\|\nabla^2 \widehat F(w)\| \leq \ell^2 R^2 \Fh^{\prime\prime}(w) + \frac{L R^2}{{\sqrt{m}}} \Fh^\prime(w).$

\item $\lambda_\min\left(\nabla^2 \widehat F(w)\right) \geq - \frac{LR^2}{{\sqrt{m}}}\Fh^\prime(w)$.
\end{enumerate}
\end{lemma}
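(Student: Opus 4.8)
The argument is the standard chain/product rule applied to each summand $w\mapsto f\big(y_i\Phi(w,x_i)\big)$, combined with the model-derivative bounds of Lemma \ref{lem:model gradients} and the fact that $y_i\in\{\pm1\}$ so that $y_i^2=1$. First I would differentiate $\widehat F(w)=\frac1n\sum_i f\big(y_i\Phi(w,x_i)\big)$ to get $\nabla\widehat F(w)=\frac1n\sum_i f'\big(y_i\Phi(w,x_i)\big)\,y_i\,\nabla_1\Phi(w,x_i)$. The triangle inequality together with $|y_i|=1$ and $\|\nabla_1\Phi(w,x_i)\|\le \ell R$ from Lemma \ref{lem:model gradients}(1) gives $\|\nabla\widehat F(w)\|\le \ell R\cdot\frac1n\sum_i|f'(y_i\Phi(w,x_i))| = \ell R\,\widehat F^{\prime}(w)$, which is claim (1).

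Next, differentiating once more and using $y_i^2=1$, the Hessian decomposes into a ``Gauss--Newton'' rank-one part and a part carrying the model's own curvature:
\[
\nabla^2\widehat F(w)=\frac1n\sum_{i=1}^n\Big( f''\big(y_i\Phi(w,x_i)\big)\,\nabla_1\Phi(w,x_i)\nabla_1\Phi(w,x_i)^\top + f'\big(y_i\Phi(w,x_i)\big)\,y_i\,\nabla_1^2\Phi(w,x_i)\Big).
\]
For claim (2) I would bound the operator norm term by term: $\|\nabla_1\Phi(w,x_i)\nabla_1\Phi(w,x_i)^\top\|=\|\nabla_1\Phi(w,x_i)\|^2\le\ell^2R^2$ and $\|\nabla_1^2\Phi(w,x_i)\|\le LR^2/\sqrt m$ by Lemma \ref{lem:model gradients}, so the triangle inequality yields $\|\nabla^2\widehat F(w)\|\le \ell^2R^2\,\widehat F^{\prime\prime}(w)+\frac{LR^2}{\sqrt m}\,\widehat F^{\prime}(w)$.

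Finally, for claim (3) I would invoke convexity of the loss $f$, so that $f''\ge 0$ and hence each rank-one term $f''(y_i\Phi(w,x_i))\,\nabla_1\Phi(w,x_i)\nabla_1\Phi(w,x_i)^\top$ is positive semidefinite. Dropping these PSD terms gives $\nabla^2\widehat F(w)\succeq \frac1n\sum_i f'(y_i\Phi(w,x_i))\,y_i\,\nabla_1^2\Phi(w,x_i)$, and therefore $\lambda_{\min}\big(\nabla^2\widehat F(w)\big)\ge -\frac1n\sum_i |f'(y_i\Phi(w,x_i))|\,\|\nabla_1^2\Phi(w,x_i)\|\ge -\frac{LR^2}{\sqrt m}\,\widehat F^{\prime}(w)$.

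This computation is essentially mechanical, so there is no serious obstacle; the only points meriting care are writing the Hessian of the composition correctly — in particular keeping the $f'\cdot\nabla_1^2\Phi$ term, which is exactly where the $1/\sqrt m$ factor (and hence the width requirement) originates — and noting that claim (3) uses the convexity of $f$ (a standing assumption on the loss) rather than Assumption \ref{ass:act} alone to discard the semidefinite part.
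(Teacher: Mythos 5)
Your proposal is correct and follows essentially the same route as the paper's proof: compute $\nabla\widehat F$ and $\nabla^2\widehat F$ by the chain rule (using $y_i^2=1$), apply the model-derivative bounds from Lemma \ref{lem:model gradients} together with the triangle inequality for claims (1) and (2), and for claim (3) use convexity of $f$ to drop the PSD Gauss--Newton term before bounding the remaining $f'\cdot\nabla_1^2\Phi$ term. Your closing remark that convexity of $f$ (not Assumption \ref{ass:act}) is what drives the one-sided bound in (3) is a fair observation and matches the paper's implicit reliance on the standing convexity assumption on the loss.
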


\begin{proof} 
The loss gradient is derived as follows,
    \begin{align*}
    \nabla \widehat F(w) &= \frac{1}{n} \sum_{i=1}^n f^\prime(y_i \Phixi)y_i \DPhixi
    \end{align*}
    Recalling that $y_i\in\{\pm 1\}$, we can write
\bea
\Big\|\nabla \widehat F(w)\Big\| &= \frac{1}{n} \Big\| \sum_{i=1}^n f'(y_i \Phi(w,x_i))y_i \nabla_1 \Phi (w,x_i)\Big\|\nn\\
&\le \frac{1}{n} \sum_{i=1}^n |f'(y_i \Phi(w,x_i))| \Big\|\nabla_1 \Phi (w,x_i)\Big\|.\nn\\
&\le \ell R\, F'(w). 
\eea
For the Hessian of loss, note that
    \begin{align}\label{eq:lossh}
    \nabla^2 \widehat F(w) &= \frac{1}{n} \sum_{i=1}^n f^{\prime\prime}(y_i \Phixi) \DPhixi\DPhixi^\top + 
     f^{\prime}(y_i \Phixi)y_i \DDPhixi.
    \end{align}
It follows that
\bea
\Big\|\nabla^2 \widehat F(w)\Big\| &= \left\|\frac{1}{n} \sum_{i=1}^n f^{\prime}(y_i \Phi(w,x_i))y_i \nabla^2_1 \Phi (w,x_i) + f^{\prime\prime}(y_i \Phi(w,x_i)) \nabla_1 \Phi(w,x_i) \nabla_1 \Phi(w,x_i)^\top \right\|\nn\\
&\le \frac{1}{n}\sum_{i=1}^n |f'(y_i \Phi(w,x_i))| \Big\|\nabla^2_1 \Phi (w,x_i) \Big\| + |f''(y_i \Phi(w,x_i))|\Big\|\nabla_1 \Phi (w,x_i) \nabla_1\Phi(w,x_i)^\top \Big\|\nn\\
&\le  \frac{1}{n}\sum_{i=1}^n |f'(y_i \Phi(w,x_i))| \Big\|\nabla^2_1 \Phi (w,x_i) \Big\| + |f''(y_i \Phi(w,x_i))|\Big\|\nabla_1 \Phi (w,x_i)\Big\|^2\nn\\
&\le \frac{LR^2}{{\sqrt{m}}} F'(w)+ \ell^2 R^2 F''(w).
\label{eq:heslast} 
\eea
To lower-bound the minimum eigenvalue of Hessian, note that $f$ is convex and thus $f''(\cdot)\ge 0$. Therefore the first term in \eqref{eq:lossh} is positive semi-definite and the second term can be lower-bounded as follows,
\bea
\lambda_{\min} (\nabla ^2 \widehat F(w)) &\ge -\left \|\frac{1}{n} \sum_{i=1}^n y_i f'(y_i\Phi(w,x_i)) \nabla_1^2 \Phi(w,x_i)\right\|\nn\\
&\ge -\frac{1}{n} \sum_{i=1}^n |y_i f'(y_i\Phi(w,x_i))| \Big\|\nabla_1^2 \Phi(w,x_i)\Big\|\nn\\
&\ge -\frac{LR^2}{{\sqrt{m}}} F'(w).\nn
\eea
    
\end{proof}


\begin{corollary}[Self-boundedness of Objective]\label{cor:gradients_and_hessians_bounds}
Let Assumption \ref{ass:act} hold. 
\\
If the loss satisfies Assumptions \ref{ass:self boundedness} (with $\beta_f=1$) and \ref{ass:2nd order self-boundedness}, then
\begin{enumerate}
  
\item $\|\nabla \widehat F(w)\| \leq \ell R\, \hf(w)$.

\item $\|\nabla^2 \widehat F(w)\|\leq \left(\ell^2 R^2 + \frac{L R^2}{{\sqrt{m}}}\right) \widehat F(w)$.

\item $\lambda_\min\left(\nabla^2 \widehat F(w)\right) \geq - \frac{LR^2}{{\sqrt{m}}}\widehat F(w)$.
\end{enumerate}

\noindent If in addition the loss satisfies Assumptions \ref{ass:loss lipschitz} and \ref{ass:loss smooth} with $L_f=G_f=1$, then
\begin{enumerate}
\setcounter{enumi}{5}
    \item $\|\nabla \widehat F(w)\| \leq \ell R .$

\item $\|\nabla^2 \widehat F(w)\| \leq \ell^2 R^2 + \frac{L R^2}{{\sqrt{m}}}.$
\end{enumerate}
\end{corollary}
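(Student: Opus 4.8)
The plan is to obtain this corollary as a direct specialization of Lemma~\ref{lem:gradients_and_hessians_bounds_general}, which already controls $\|\nabla\widehat F(w)\|$, $\|\nabla^2\widehat F(w)\|$, and $\lambda_\min(\nabla^2\widehat F(w))$ in terms of the auxiliary averages $\widehat F^\prime(w)=\frac1n\sum_{i}|f^\prime(y_i\Phi(w,x_i))|$ and $\widehat F^{\prime\prime}(w)=\frac1n\sum_{i}|f^{\prime\prime}(y_i\Phi(w,x_i))|$. The only work remaining is to bound $\widehat F^\prime(w)$ and $\widehat F^{\prime\prime}(w)$ by $\widehat F(w)$ (for items~1--3) or by $1$ (for items~6--7), using the respective loss assumptions, and then substitute.

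First I would treat the self-bounded case. Assumption~\ref{ass:self boundedness} with $\beta_f=1$ gives $|f^\prime(u)|\le f(u)$ pointwise; averaging over the $n$ samples yields $\widehat F^\prime(w)\le\widehat F(w)$. For the second derivative, Assumption~\ref{ass:2nd order self-boundedness} gives $f^{\prime\prime}(u)\le f(u)$, and since $f$ is convex we also have $f^{\prime\prime}(u)\ge 0$, so $|f^{\prime\prime}(u)|=f^{\prime\prime}(u)\le f(u)$ and hence $\widehat F^{\prime\prime}(w)\le\widehat F(w)$. Plugging these two inequalities into the three bounds of Lemma~\ref{lem:gradients_and_hessians_bounds_general} gives at once: $\|\nabla\widehat F(w)\|\le\ell R\,\widehat F^\prime(w)\le\ell R\,\widehat F(w)$; then $\|\nabla^2\widehat F(w)\|\le\ell^2R^2\widehat F^{\prime\prime}(w)+\frac{LR^2}{\sqrt{m}}\widehat F^\prime(w)\le\bigl(\ell^2R^2+\frac{LR^2}{\sqrt{m}}\bigr)\widehat F(w)$; and $\lambda_\min(\nabla^2\widehat F(w))\ge-\frac{LR^2}{\sqrt{m}}\widehat F^\prime(w)\ge-\frac{LR^2}{\sqrt{m}}\widehat F(w)$, which are items~1,~2,~3.

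For items~6 and~7 I would instead invoke Assumptions~\ref{ass:loss lipschitz} and~\ref{ass:loss smooth} with $G_f=L_f=1$. Lipschitzness gives $|f^\prime(u)|\le 1$, so $\widehat F^\prime(w)\le 1$; smoothness gives $f^{\prime\prime}(u)\le 1$, and convexity again forces $f^{\prime\prime}\ge 0$, so $|f^{\prime\prime}(u)|\le 1$ and $\widehat F^{\prime\prime}(w)\le 1$. Substituting into the first two bounds of Lemma~\ref{lem:gradients_and_hessians_bounds_general} yields $\|\nabla\widehat F(w)\|\le\ell R$ and $\|\nabla^2\widehat F(w)\|\le\ell^2R^2+\frac{LR^2}{\sqrt{m}}$.

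I expect no real obstacle: the statement is purely a bookkeeping specialization of the previous lemma. The single point worth a word of care is the passage from $f^{\prime\prime}(u)\le f(u)$ (resp. $f^{\prime\prime}(u)\le 1$) to $|f^{\prime\prime}(u)|\le f(u)$ (resp. $|f^{\prime\prime}(u)|\le 1$), which relies on $f^{\prime\prime}\ge 0$, i.e., on the convexity of $f$ assumed throughout; everything else is substitution and averaging.
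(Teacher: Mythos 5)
Your proposal is correct and matches the paper's proof: both reduce the corollary to Lemma~\ref{lem:gradients_and_hessians_bounds_general} and then bound $\widehat F^\prime$ and $\widehat F^{\prime\prime}$ by $\widehat F$ (using the self-boundedness assumptions) or by $1$ (using Lipschitz/smoothness), exactly as you describe. Your extra remark that convexity is needed to pass from $f^{\prime\prime}\le\cdot$ to $|f^{\prime\prime}|\le\cdot$ is a valid and slightly more careful observation that the paper leaves implicit.
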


\begin{proof}
     For self-bounded losses we have $\hf'(w)\le \hf(w)$ and $\hf''(w)\le \hf(w)$. If the loss is $1$-Lipschitz and $1$-smooth we have $\hf'(w)\le 1$ and $\hf''(w)\le 1$. Thus, the claims immediately follow from Lemma \ref{lem:gradients_and_hessians_bounds_general}.
\end{proof}

\section{Detailed technical comparison to most-closely related works}\label{sec:more related}
 In terms of techniques, the most closely related works to our paper  are the recent works \cite{richards2021learning,richards2021stability,leistability}, which also utilize the stability-analysis framework to derive test-loss bounds of GD for shallow neural networks. 

\cite{richards2021learning} investigates the generalization gap of weakly-convex losses for which $\lambda_{\min}(\nabla^2 \widehat{F}(w))\geq - \epsilon$ for a constant $\epsilon>0.$ Note by Lemma \ref{lem:gradients_and_hessians_bounds_general} that our empirical loss is weakly convex with $\epsilon=LR^2/\sqrt{m}$ since the logistic loss is $1$-Lipschitz. Within the stability analysis framework, \cite{richards2021learning} leverage the weak-convexity property to establish an approximate expansiveness property of GD iterates that in our setting translates to
\begin{align}
    \left\|\left(w-\eta\nabla\widehat{F}(w)\right)-\left(\wpr-\eta\nabla\widehat{F}(\wpr)\right)\right\|\lesssim\left(1+\frac{\eta LR^2}{\sqrt{m}}\right)\left\|w-\wpr\right\|\,.
    \label{eq:RichardsRabbat}
\end{align}
When using this inequality to bound the model stability term at iteration $t$, and in order to obtain non-vacuous bounds, the extra term in \eqref{eq:RichardsRabbat} must be chosen such that ${\eta LR^2}/{\sqrt{m}}\lesssim 1/t$. This leads to polynomial-width parameterization requirement $m\gtrsim t^2$. In this work, we reduce the requirement to logarithmic  $m\gtrsim \log(t)$, by significantly tightening \eqref{eq:RichardsRabbat}. This is achieved by introducing two crucial ideas. The first is to exploit the self-boundedness property of loss function, which yields a stronger \emph{self-bounded} weak convexity $\lambda_{\min}(\nabla^2 \widehat{F}(w))\geq - LR^2\widehat{F}(w)/\sqrt{m}.$ With this, we show in Corollary \ref{cor:9.1} that
\begin{align}
\left\|\left(w-\eta\nabla\widehat{F}(w)\right)-\left(\wpr-\eta\nabla\widehat{F}(\wpr)\right)\right\|\leq\left(1+\frac{\eta LR^2}{\sqrt{m}}\max_{\alpha\in[0,1]}\widehat{F}(\wa)\right)\left\|w-\wpr\right\|\,\label{eq:Hossein}
\end{align}
for some $\wa=\alpha w+(1-\alpha)\wpr$. Our second idea comes into bounding the term $\max_{\alpha\in[0,1]}F(\wa)$ which in our bound replaces the Lipschitz constant $G_f$  of \eqref{eq:RichardsRabbat}. To control $\max_{\alpha\in[0,1]}F(\wa)$, we identify and use the Generalized Local Quasi-convexity of Proposition \ref{propo:GLQC}. This replaces $\max_{\alpha\in[0,1]}F(\wa)$ in \eqref{eq:Hossein} with $\tau\cdot \max\{\widehat F(w),\widehat F(\wpr)\}$ for $\tau\approx 1+LR^2\|w-w'\|^2/\sqrt{m}$ and note that we can guarantee $\tau=O(1)$  provided $\sqrt{m}\gtrsim \max\{\|w-w_0\|^2,\|\wpr-w_0\|^2\}.$ Now, in order to bound the model stability term, we apply non-expansiveness for GD iterate $w=w_t$ and its leave-one-out counterpart $w=w_t^\negi$: Provided $m\gtrsim \max\{\|w_t-w_0\|^4,\|w_t^\negi-w_0\|^4\}\approx \log^4(t),$
\begin{align}
\left\|\left(w_t-\eta\nabla\widehat{F}(w_t)\right)-\left(w_t^\negi-\eta\nabla\widehat{F}(w_t^\negi)\right)\right\|&\lesssim\left(1+\frac{\eta LR^2}{\sqrt{m}} \max\{\widehat F(w),\widehat F(\wpr)\} \right)\left\|w-\wpr\right\|\,\nn
\\
&\lesssim\left(1+\frac{\eta LR^2}{t\,\sqrt{m}}  \right)\left\|w-\wpr\right\|\
\label{eq:Hossein2}
\end{align}
Compared to \eqref{eq:RichardsRabbat} note in \eqref{eq:Hossein2} that the extra term is already of order $1/t$. Hence, the only parameterization requirement is $m\gtrsim \max\{\|w_t-w_0\|^4,\|w_t^\negi-w_0\|^4\}\approx \log^4(t).$ 
While the above describes our main technical novelty compared to \cite{richards2021learning}, our results surpass theirs in other aspects. Specifically, we also obtain tighter bounds on the optimization error, again thanks to leveraging self-bounded properties of the logistic loss. Overall, for the separable setting, we show a $\tilde{O}(1/n)$ test-loss bound compared to $O(T/n)$ in their paper. 

In closing, we remark that our logarithmic width requirements and expansiveness bounds are also significantly tighter than those that appear in \cite{richards2021stability,leistability}. While their results are not directly comparable to ours as they only apply to square-loss functions, we reference them here for completeness: \cite{richards2021stability} upper-bounds the expansiveness term on the left-hand side of \eqref{eq:Hossein2} by $\lesssim \left(1+\eta\sqrt{\eta t}/\sqrt{m}\right)$ which requires $m\gtrsim t^3$ so that is of order $1+1/t$. More recently, \cite{leistability} slightly modifies their bound to $\lesssim \left(1+\eta({\eta t})^{3/2}/(n\sqrt{m})\right)$ which requires $m\gtrsim (\eta t)^5/n^2.$

\end{document}